\documentclass{article}

\usepackage[main, final]{neurips_2025}
\usepackage[utf8]{inputenc} 
\usepackage[T1]{fontenc}    
\usepackage{hyperref}       
\usepackage{url}            
\usepackage{booktabs}       
\usepackage{amsfonts}       
\usepackage{nicefrac}       
\usepackage{microtype}      
\usepackage{xcolor}         
\usepackage{amsmath}
\usepackage{amssymb}
\usepackage{mathtools}
\usepackage{amsthm}
\usepackage{dsfont}
\usepackage{lipsum}
\usepackage{bm,bbm}
\usepackage{bbding}
\usepackage{pifont}

\usepackage{paralist}
\usepackage{multirow}
\usepackage{graphicx}
\usepackage{subcaption}
\usepackage[font=small,labelfont=bf]{caption}
\usepackage{float}
\usepackage{algorithm,algorithmic}
\usepackage{wrapfig}
\usepackage{booktabs}
\usepackage{verbatim}
\usepackage{setspace}
\usepackage{nicefrac}
\usepackage{tikz}
\usepackage{xcolor}
\usepackage{colortbl}
\usepackage{array} 
\usepackage{pifont}
\usepackage[most]{tcolorbox}

\def \E {\mathbb{E}}
\def \x {\mathbf{x}}

\def \z {\mathbf{z}}
\def \u {\mathbf{u}}

\def \O {\mathcal{O}}
\def \R {\mathbb{R}}
\def \uppermu {L_{\mu}}

  \def \Reg {\textsc{Reg}_T}

\def \F {\mathcal{F}}

\def \N {\mathcal{N}}

\def \C {\mathcal{C}}

\def \T {\mathcal{T}}
\def \Z {\mathcal{Z}}
\def \F {\mathcal{F}}

\def \e {\mathbf{e}}

\def \u {\mathbf{u}}
\def \X {\mathcal{X}}

\def \sumT {\sum_{t=1}^T}

\def \ol{\mathtt{OL}}

\definecolor{darkgreen}{rgb}{0.0, 0.7, 0.0}
\definecolor{darkred}{rgb}{0.7, 0.0, 0.0}
\DeclareMathOperator*{\argmax}{arg\,max}
\DeclareMathOperator*{\argmin}{arg\,min}
\renewcommand{\tilde}{\widetilde}
\renewcommand{\hat}{\widehat}
\newcommand\given[1][]{\:#1\vert\:}

\usepackage{mathtools}
\usepackage{appendix}
   
\let\norm\undefined 
\DeclarePairedDelimiter\norm{\lVert}{\rVert}

\newcommand\inner[2]{\langle #1, #2 \rangle}

\newtheorem{myThm}{Theorem}

\newtheorem{myLemma}{Lemma}

\newtheorem{myProp}{Proposition}

\theoremstyle{definition}
\newtheorem{myAssum}{Assumption}

\newtheorem{myRemark}{Remark}

\renewcommand{\tilde}{\widetilde}
\renewcommand{\hat}{\widehat}

\def \E {\mathbb{E}}

\def \R {\mathbb{R}}

\def \X {\mathcal{X}}

\def \x {\mathbf{x}}

\def \epsilon {\varepsilon}

\definecolor{DSgray}{cmyk}{0,1,0,0}

\usepackage[textsize=tiny]{todonotes}

\usepackage{prettyref}

\newcommand{\savehyperref}[2]{\texorpdfstring{\hyperref[#1]{#2}}{#2}}
\newrefformat{eq}{\savehyperref{#1}{Eq.~(\ref*{#1})}}
\newrefformat{eqn}{\savehyperref{#1}{Equation~\ref*{#1}}}
\newrefformat{lem}{\savehyperref{#1}{Lemma~\ref*{#1}}}
\newrefformat{lemma}{\savehyperref{#1}{Lemma~\ref*{#1}}}
\newrefformat{def}{\savehyperref{#1}{Definition~\ref*{#1}}}
\newrefformat{line}{\savehyperref{#1}{Line~\ref*{#1}}}
\newrefformat{thm}{\savehyperref{#1}{Theorem~\ref*{#1}}}
\newrefformat{corr}{\savehyperref{#1}{Corollary~\ref*{#1}}}
\newrefformat{cor}{\savehyperref{#1}{Corollary~\ref*{#1}}}
\newrefformat{sec}{\savehyperref{#1}{Section~\ref*{#1}}}
\newrefformat{app}{\savehyperref{#1}{Appendix~\ref*{#1}}}
\newrefformat{appendix}{\savehyperref{#1}{Appendix~\ref*{#1}}}
\newrefformat{assum}{\savehyperref{#1}{Assumption~\ref*{#1}}}
\newrefformat{ex}{\savehyperref{#1}{Example~\ref*{#1}}}
\newrefformat{fig}{\savehyperref{#1}{Figure~\ref*{#1}}}
\newrefformat{alg}{\savehyperref{#1}{Algorithm~\ref*{#1}}}
\newrefformat{rem}{\savehyperref{#1}{Remark~\ref*{#1}}}
\newrefformat{conj}{\savehyperref{#1}{Conjecture~\ref*{#1}}}
\newrefformat{prop}{\savehyperref{#1}{Proposition~\ref*{#1}}}
\newrefformat{proto}{\savehyperref{#1}{Protocol~\ref*{#1}}}
\newrefformat{prob}{\savehyperref{#1}{Problem~\ref*{#1}}}
\newrefformat{claim}{\savehyperref{#1}{Claim~\ref*{#1}}}
\newrefformat{que}{\savehyperref{#1}{Question~\ref*{#1}}}
\newrefformat{op}{\savehyperref{#1}{Open Problem~\ref*{#1}}}
\newrefformat{fn}{\savehyperref{#1}{Footnote~\ref*{#1}}}
\tcolorboxenvironment{myThm}{
  enhanced,
  breakable,
  colback=gray!4!white,          
  colframe=black!15!white,       
  boxrule=0.4pt,                 
  arc=3pt,                       
  coltitle=black,
  fonttitle=\bfseries,
  attach boxed title to top left={yshift=-2mm, xshift=5mm},
  boxed title style={
    colframe=black!20!white,
    colback=gray!10!white,
    boxrule=0.4pt,
  },
  left=5pt,                      
  right=5pt,
  top=5pt,
  bottom=3pt,
  grow to left by=-1pt,          
  grow to right by=-1pt,
  before skip=6pt,
  after skip=10pt,
}

\allowdisplaybreaks[3]

\hypersetup{
    colorlinks,
    breaklinks,
    urlcolor = black,
    linkcolor = blue,
    citecolor = blue,
}

\title{Generalized Linear Bandits:\\
 Almost Optimal Regret with One-Pass Update}

\author{
  Yu-Jie Zhang$ ^{1}$, Sheng-An Xu$ ^{2,3}$, Peng Zhao$ ^{2,3}$\thanks{Correspondence: Peng Zhao <zhaop@lamda.nju.edu.cn>}, Masashi Sugiyama$ ^{1,4}$\\ 
  $^1$ RIKEN AIP, Tokyo, Japan\\
  $^2$ National Key Laboratory for Novel Software Technology, Nanjing University, China\\ 
  $^3$ School of Artificial Intelligence, Nanjing University, China\\ 
  $^4$ The University of Tokyo, Chiba, Japan\\ 
}

\begin{document}

\maketitle

\begin{abstract}
    We study the generalized linear bandit (GLB) problem, a contextual multi-armed bandit framework that extends the classical linear model by incorporating a non-linear link function, thereby modeling a broad class of reward distributions such as Bernoulli and Poisson. While GLBs are widely applicable to real-world scenarios, their non-linear nature introduces significant challenges in achieving both computational and statistical efficiency. Existing methods typically trade off between two objectives, either incurring high per-round costs for optimal regret guarantees or compromising statistical efficiency to enable constant-time updates. In this paper, we propose a jointly efficient algorithm that attains a nearly optimal regret bound with $\mathcal{O}(1)$ time and space complexities per round. The core of our method is a tight confidence set for the online mirror descent (OMD) estimator, which is derived through a novel analysis that leverages the notion of mix loss from online prediction. The analysis shows that our OMD estimator, even with its one-pass updates, achieves statistical efficiency comparable to maximum likelihood estimation, thereby leading to a jointly efficient optimistic method.
\end{abstract}

  \section{Introduction}
\label{sec:introduction}
Stochastic multi-armed bandits~\citep{1952'Robbins} represent a fundamental class of sequential decision-making problems where a learner interacts with environments by selecting actions (or arms) and receiving feedback in the form of rewards. In this paper, we study the contextual multi-armed bandit problem under the framework of generalized linear models (GLMs). In this setting, each action is characterized by a contextual feature vector $\mathbf{x} \in \mathcal{X}_t \subset \mathbb{R}^d$, where the arm set $\mathcal{X}_t$ may vary over time. More specifically, the learning process can be seen as a $T$ round game between the learner and environments: at each round $t$, the learner selects an action $X_t \in \mathcal{X}_t$ and then observes a stochastic reward $r_t \in \mathbb{R}$ generated according to a GLM (see Definition 2.1). The goal of the learner is to maximize the cumulative expected reward obtained over the time horizon $T$. Under the GLM model, the expectation of the reward satisfies $\mathbb{E}[r_t\,|\,X_t] = \mu(X_t^\top \theta_*)$, where $\mu: \mathbb{R} \to \mathbb{R}$ is a non-linear link determined by the GLM model and is known to the learner. The unknown part is the underlying parameter $\theta_* \in \mathbb{R}^d$, which needs to be estimated from the observed action-reward pairs. 

Compared with the classical linear case~\citep{NIPS'11:linear-bandits}, the generalized linear bandit (GLB) framework allows for a richer class of reward distributions, including Gaussian, Bernoulli,  and Poisson distributions. This flexibility enables the modeling of various real-world tasks, such as recommendation systems~\citep{WWW'12:CB-recommendation} and personalized medicine~\citep{17:CB-Mobile-health}, where the feedback is binary (Bernoulli) or count-based (Poisson) and inherently non-linear. Besides its practical appeal, the study of GLB lays theoretical foundations for other sequential decision-making problems, such as function approximation in RL~\citep{ICLR'21:GLB-function-approximation,NeurIPS'24:Logistic_MDP}, safe exploration~\citep{NeurIPS'21:Safe-Policy-Optimization}, and dynamic pricing~\citep{DBLP:journals/mansci/ChenSW22,NeurIPS'21:dynamic-pricing-ONS}.

\begin{table*}[!t]
    \centering
    \caption{Comparison of regret guarantees and computational complexity per round for GLBs. Here, $\kappa_* = 1/\big(\frac{1}{T}\sum_{t=1}^T\mu'(\x_{t,*}^\top\theta_*)\big)$ is the slope at the optimal action $\x_{t,*} = \argmax_{\x\in\X_t} \mu(\x^\top\theta_*)$, with $\kappa_* \leq \kappa$ (see Section~\ref{sec:preliminary} for details). $\dag$ indicates the amortized time complexity, i.e., average per-round cost over $T$ rounds.}   
    \vspace{-1.8mm}
    \label{table:compare}
    \renewcommand{\arraystretch}{1.4} 
    \resizebox{\textwidth}{!}{
    \begin{tabular}{clccc}  
         \toprule
         \multicolumn{1}{c}{\textbf{Method}}                                    & \textbf{~~~~~~~Regret}                                      & \multicolumn{1}{c}{\textbf{Time per Round}}                                             & \multicolumn{1}{c}{\textbf{Memory}} & \multicolumn{1}{c}{\textbf{Jointly Efficient}} \\ \midrule
         GLM-UCB~\citep{NIPS'10:GLB}                                            & $\O(\kappa (\log T)^{\frac{3}{2}} \sqrt{T})$                         & $\O(t)$                                   & $\O(t)$     & \textcolor{darkred}{\ding{55}}                                                                 \\ 
         GLOC~\citep{conf/nips/JunBNW17}                                        & $\O(\kappa \log T \sqrt{T})$                         & $\O(1)$                                   & $\O(1)$     & \textcolor{darkred}{\ding{55}}                                                                 \\   
         OFUGLB~\citep{NeurIPS'24:GLB-improved-kappa,NeurIPS'24:self-concordance-bandits}                       & $\O(\log T \sqrt{T/\kappa_*})$            & $\O(t)$                           & $\O(t)$ & \textcolor{darkred}{\ding{55}}                                                        \\    
         RS-GLinCB~\citep{NeurIPS'24:GLB-limited-adaptivity}                      & $\O(\log T \sqrt{T/\kappa_*})$                         & $\O\big((\log t)^2\big)^\dag$                           & $\O(t)$ & \textcolor{darkred}{\ding{55}}                                          \\                    
         \cellcolor{gray!20} GLB-OMD (Theorem~\ref{thm:regret} of this paper)         &  $\O(\log T \sqrt{T/\kappa_*})$ \cellcolor{gray!20}    & ~$\O(1)$    \cellcolor{gray!20}    & ~$\O(1)$ \cellcolor{gray!20}  &  ~ \textcolor{darkgreen}{\ding{51}}  \cellcolor{gray!20}                                  \\ \bottomrule
    \end{tabular}
    }
    \vspace{-2.2mm}
\end{table*}

The non-linearity of the link function raises significant concerns regarding both computational and statistical efficiency in GLBs. A canonical solution to GLB is the GLM-UCB algorithm~\citep{NIPS'10:GLB}, which belongs to the family of UCB-type methods~\citep{1995'Agrawal-UCB,2002'Auer-SLB}. At each iteration $t \in [T]$, the algorithm estimates the true parameter $\theta_*$ using maximum likelihood estimation (MLE) based on the historical data $\{(\mathbf{x}_s, r_s)\}_{s=1}^{t-1}$ and yields an estimator $\theta_t$, which is further used for constructing the upper confidence bound for arm selection. As shown in Table~\ref{table:compare}, GLM-UCB achieves nearly optimal regret bound in terms of the dependence on $T$. However, its reliance on MLE incurs a computational burden: it requires storing all historical data with $\O(t)$ space complexity and solving an optimization problem with $\O(t)$ time complexity at each round $t$. Besides, in terms of the statistical efficiency, the non-linearity of the link function introduces a notorious constant $\kappa = 1/\inf_{\x\in\cup_{t=1}^T\X_t,\theta\in\Theta} \mu'(\x^\top\theta)$ into the regret bound of \textsc{GLM-UCB} (see Section~\ref{sec:preliminary} for details), where $\mu'$ denotes the derivative of $\mu$. In several applications of GLBs, such as logistic bandits \big($\mu(z) = 1/(1+e^{-z})$\big) and Poisson bandits \big($\mu(z) = e^{z}$\big), the $\kappa$ term can grow exponentially with the norm of the parameter $\norm{\theta_*}_2$, severely affecting the theoretical performance.

Over the past decade, extensive efforts have been devoted to enhancing the computational or statistical efficiency of GLBs. However, as summarized in Table~\ref{table:compare}, how to develop a jointly efficient method that achieves both low computation cost and strong statistical guarantees remains unclear. For GLBs,~\citet{conf/nips/JunBNW17} developed computationally efficient algorithms with one-pass update, but their regret bound scales linearly with the potentially large constant $\kappa$. More recently, by leveraging the self-concordance of the loss, several works~\citep{NeurIPS'24:GLB-improved-kappa,NeurIPS'24:GLB-limited-adaptivity,NeurIPS'24:self-concordance-bandits,arXiv'25:GLM-confidence-sequence} proposed statistically efficient methods that achieve improved dependence on $\kappa$; however, these approaches are still based on the MLE, which has high computation cost. 

\noindent\textbf{Our Results.} This paper proposes a jointly efficient algorithm for GLBs that achieves an improved regret bound in terms of $\kappa$ with constant time and space complexities per round as shown in Table~\ref{table:compare}. This advance roots in the construction of a tight confidence set for the online mirror descent (OMD) estimator used for performing the UCB-based exploration. We show that the OMD estimator, even though updated in a one-pass fashion, can still match the statistical efficiency of the MLE by carefully addressing the non-linearity of the link function. Here, ``one-pass'' refers to processing each data point only once, without storing past data. We also note that OMD-based online estimators have been used to develop jointly efficient algorithms in the logistic bandit setting~\citep{conf/aistats/FauryAJC22,NeurIPS'23:MLogB,NeurIPS'24:MNL-minimax}, a special case of GLBs with Bernoulli rewards. However, their analyses of the confidence set rely heavily on the specific structure of the logistic link function $\mu(z) = 1/(1+e^{-z})$, which limits their applicability to the more general GLB setting.

\begin{wraptable}{r}{0.58\textwidth} 
    \centering
       \vspace{-3mm}
    \caption{Jointly efficient methods for logistic bandits.}   
    \vspace{-2mm}
    \label{table:logistic-bandits}
    \renewcommand{\arraystretch}{1.25} 
    \setlength{\tabcolsep}{4pt}  
    \resizebox{0.58\textwidth}{!}{
    \begin{tabular}{cccc}  
         \toprule
        \multicolumn{1}{c}{\textbf{Method}}                                    & \textbf{Regret}                                     & \multicolumn{1}{c}{\textbf{Time}}                                             & \multicolumn{1}{c}{\textbf{Memory}} \\ \midrule
        \begin{tabular}[c]{@{}c@{}}(ada)-OFU-ECOLog\\~\citep{conf/aistats/FauryAJC22}\end{tabular} & $\O(\log T\sqrt{T/\kappa_*})$                    & $\O(\log t)$     &$\O(1)$   \  \\              \addlinespace[2pt]                                 
        \begin{tabular}[c]{@{}c@{}} OFUL-MLogB\\~\citep{NeurIPS'23:MLogB} \end{tabular}  & $\O((\log T)^{3/2}\sqrt{T/\kappa_*})$                    & $\O(1)$ \     &$\O(1)$  \              \\     \addlinespace[2pt]                                           
          GLB-OMD (Theorem~\ref{thm:regret})\cellcolor{gray!20}       &  $\O(\log T \sqrt{T/\kappa_*})$ \cellcolor{gray!20}    &  $\O(1)$  \cellcolor{gray!20}     & $\O(1)$ \cellcolor{gray!20}   \\ \bottomrule
    \end{tabular}
    }
\end{wraptable}

\noindent\textbf{Technical Contribution.} Our main technical contribution is a new analysis of the estimation error of the OMD estimator. The analysis is based on the concept of the \emph{mix loss}, which has been used in full-information online learning to achieve fast-rate regret minimization~\citep{02:vovk-mixability}. Here, we show that it provides a natural way to bridge the gap between the OMD estimator and the true parameter $\theta_*$, thereby enabling the construction of tight confidence sets for bandit online learning. Our new analysis not only generalizes the OMD-based approach to the broader GLBs but also improves upon the state-of-the-art for logistic bandits. As shown in Table~\ref{table:logistic-bandits}, the jointly efficient method~\citep{conf/aistats/FauryAJC22} requires $\mathcal{O}(\log t)$ time per round and an adaptive warm-up strategy to achieve optimal regret.~\citet{NeurIPS'23:MLogB} reduces the time complexity to $\mathcal{O}(1)$ but incurs an extra $\mathcal{O}(\sqrt{\log T})$ factor in the regret. In contrast, our refined analysis yields a tighter error bound for the OMD estimator, allowing our method to achieve improved regret and low computation cost without warm-up rounds. Details are provided in Section~\ref{sec:analysis}. 

We were made aware that mix-loss–based analyses have been independently developed in two very recent concurrent works for constructing tight confidence sets. Specifically, \citet{arXiv'25:confidence-sequential-likelihood-mixing} developed confidence sets based on the sequential likelihood ratios mixing technique, and \citet{arXiv'25:GLM-confidence-sequence} proposed several confidence sets for GLMs. While conceptually related, these works focus on the batch setting, where all historical data are repeatedly accessed, leading to substantial computational overhead. In contrast, our confidence set is based on the OMD estimator with a one-pass update. This difference leads to a distinct formulation of the mix loss and a tailored analysis to quantify its gap relative to the time-varying OMD estimator. Details are provided in Section~\ref{sec:technical-discussion}.

  \section{Preliminary}
\label{sec:preliminary}
This section provides background on the GLB problem, including its formulation, underlying assumptions, and closely related previous research. In the rest of the paper, for a positive semi-definite matrix $H \in \R^{d \times d}$ and vector $\x \in \R^d$, we define $\norm{\x}_H = \sqrt{\x^\top H \x}$ and $\norm{\x}_2$ as the Euclidean norm. For the function $f: \R \to \R$, its first and second derivatives are denoted by $f'$ and $f''$, respectively.

\subsection{Problem Formulation and \textsc{GLM-UCB}~\citep{NIPS'10:GLB}}
\label{sec:problem-formulation}
The GLB problem considers a $T$-round sequential interaction between a learner and the environment. At each round $t \in [T] \triangleq \{1, \dots, T\}$, the learner selects an action $X_t \in \mathcal{X}_t \subset \mathbb{R}^d$ from the feasible domain and then receives a stochastic reward $r_t \in \mathbb{R}$. We use the notation $\mathcal{X}t$ to indicate that the arm set may vary over time, capturing many practical scenarios where available options change dynamically. For instance, in product recommendation systems, items can be added or removed, requiring the algorithm to adapt accordingly. Besides, we denote the learner’s action by $X_t$ to emphasize its stochastic nature, which may depend on past data captured by the filtration $\mathcal{F}_t = \sigma(X_1, r_1, \dots, X_{t-1}, r_{t-1})$. In GLBs, conditioned on $\mathcal{F}_t$, the reward $r_t$ follows a canonical exponential family distribution with the natural parameter given by the linear model $z_t = X_t^\top\theta_*$.
\begin{align}
  \mathrm{Pr}\left[r_t~\vert~ z_t = X_t^\top\theta_{*},\F_{t}\right] = \exp\left(\frac{r_t z_t- m(z_t)}{g(\tau)} + h(r_t,\tau)\right)\label{eq:GLM}.
\end{align}
Here, $\theta_*\in\R^d$ is a $d$-dimensional vector unknown to the learner. The function $h(r,\tau)$ is the base measure, which provides the intrinsic weighting of the variable $r$, while $m(z)$ is twice continuously differentiable function used for normalizing the distribution. Besides, $g:\mathbb{R}\rightarrow\mathbb{R}$ is the dispersion function controlling the variability of the distribution and $\tau\in\mathbb{R}$ is a known parameter. The expectation and variance of the exponential family distribution can be calculated as $\E[ r\given z]= \mu(z) \triangleq  m'(z) $ and $\mbox{Var}(r\given z ) = g(\tau)m''(z)$~\citep[Proposition 3.1]{book'08:exp-families}. Common examples of~\eqref{eq:GLM} include Gaussian, Bernoulli, and Poisson distributions. The goal of the learner is to maximize the cumulative expected reward, which is equivalent to minimizing the regret,
\begin{align*}
    \Reg = \sum_{t=1}^T \mu(\x_{t,*}^\top\theta_*) - \sum_{t=1}^T \mu(X_t^\top\theta_*),
\end{align*}
where $\x_{t,*} = \argmin_{\x\in\X_t} \mu(\x^\top\theta_*)$ is the optimal action. Besides, we have the following standard boundness assumptions used in the GLB literature~\citep{NIPS'10:GLB}.

\vspace{0.5mm}
\begin{myAssum}[bounded domain]
    \label{assum:bounded-domain}
    The set $\bigcup_{t\in[T]}\X_t$ is bounded such that $\norm{\x}_2 \leq 1$ for all $\x \in \X_t$, $t\in[T]$ and the parameter $\theta_*$ satisfies $\norm{\theta_*}_2 \leq S$ for some constant $S > 0$ known to the learner.
\end{myAssum}
\begin{myAssum}[bounded link function]
    \label{assum:kappa}
    The link function $ \mu $ is twice differentiable over its feasible domain. Moreover, there exist constants $ c_\mu > 0 $ and $ \uppermu > 0 $ such that $ c_{\mu} \leq \mu'(z) \leq \uppermu $ for all $ z \in [-S, S] $. Consequently, the function $ m $ is strictly convex and $\mu$ is strictly increasing.
\end{myAssum}

\vspace{0.5mm}
\noindent\textbf{GLM-UCB Method and Potentially Large Constant.} The canonical algorithm for the GLB problem is \textsc{GLM-UCB}~\citep{NIPS'10:GLB}, which resolves the exploration-exploitation
trade-off with an upper‑confidence‑bound strategy~\citep{1995'Agrawal-UCB}. Under Assumptions~\ref{assum:bounded-domain},~\ref{assum:kappa} and an additional condition that the reward $r_t$ is non-negative and almost surely bounded for all $t\in[T]$, \textsc{GLM-UCB} achieves the regret of $\O\big(\kappa (\log T)^{\frac{3}{2}}\sqrt{T}\big)$, where the $\O(\cdot)$-notation is used to highlight the dependence on $\kappa$ and time horizon $T$. The dependence on the horizon $T$ matches that of the linear case, where the $\tilde{\O}(\sqrt{T})$ rate is nearly optimal~\citep{COLT'08:linear-bandit-ball}. The bottleneck lies in its \emph{linear} dependence on the constant $\kappa$, which is defined by
\[
    \kappa \triangleq  \frac{1}{c_{\mu}} =  \frac{1}{\inf_{\x \in \X_{[T]}, \theta \in \Theta}\mu'(\x^\top \theta)} ~~~~~~\mbox{and}~~~~~~\kappa_* \triangleq \frac{1}{\frac{1}{T}\sum_{t=1}^T \mu'(\x_{t,*}^\top\theta_*)},
    \]
where $\Theta = \{\theta\in\R^d \mid \norm{\theta}_2 \leq S\}$ and $\X_{[T]} = \bigcup_{t\in[T]}\X_t$. In the above we also define $\kappa_*$ to reflects the local curvature at the optimal actions. The linear dependence on $\kappa$ in \textsc{GLM-UCB} is generally undesirable, as $\kappa$ can be prohibitively large in practice. Notable examples include the Bernoulli distribution with $\mu(z) = 1/(1 + e^{-z})$ and the Poisson distribution with $\mu(z) = e^{z}$, for which $\kappa = \mathcal{O}(e^{S})$, growing exponentially with the parameter‑norm bound $S$. 

\subsection{New Progress with Self-Concordance} 
The undesirable linear dependence on $\kappa$ has motivated the development of algorithms with improved theoretical guarantees. By leveraging the self-concordance of the loss, rooted in convex optimization and later used in the analysis of logistic regression~\citep{Bach2010SelfConcordant}, recent studies~\citep{AISTATS'21:GLB-forgetting,NeurIPS'24:GLB-improved-kappa,NeurIPS'24:GLB-limited-adaptivity} have derived regret bounds with substantially reduced dependence on~$\kappa$ for GLB. Following this line, we also adopt the self-concordance assumption here.
\begin{myAssum}[Self-Concordance]
    \label{assum:self-concordant}
    The link function satisfies $\lvert \mu''(z) \rvert \leq R \cdot \mu'(z)$ for all $z \in \R$.
\end{myAssum}
Assumption~\ref{assum:self-concordant} holds for many widely used GLMs. For GLMs where the reward is almost surely bounded in $[0, R]$, the link function satisfies Assumption~\ref{assum:self-concordant} with coefficient $R$~\citep{NeurIPS'24:GLB-limited-adaptivity}. For example, the Bernoulli distribution is $1$-self-concordant. Many unbounded GLMs also satisfy self-concordance, including the Gaussian distribution ($R = 0$), Poisson distribution ($R = 1$), and Exponential distribution ($R = 0$). Leveraging self-concordance,~\citet{NeurIPS'24:GLB-improved-kappa} and~\citet{NeurIPS'24:GLB-limited-adaptivity} established improved regret bounds of order $\tilde{\mathcal{O}}(\sqrt{T/\kappa_*})$. In these results, the potentially large constant $\kappa_*$ appears at the denominator, which largely improves the $\tilde{\O}(\kappa \sqrt{T})$ bound by~\citet{NIPS'10:GLB}. However, their methods incur still $\O(t)$ time and space complexities per round. Our goal is to design a method with low computation cost while maintaining strong regret guarantees.

\begin{myRemark}[\emph{Unbounded GLMs}]
Our GLM assumptions are aligned with the recent work~\citep{NeurIPS'24:GLB-improved-kappa}, which are more general than the canonical GLM formulation introduced in~\citet{NIPS'10:GLB} and later adopted in~\citet{NeurIPS'24:GLB-limited-adaptivity}. Besides Assumptions~\ref{assum:bounded-domain} and~\ref{assum:kappa},~\citet{NIPS'10:GLB} further requires the rewards to be almost surely bounded, which automatically implies self-concordance and thus satisfies Assumption~\ref{assum:self-concordant} as shown by~\citet[Lemma 2.2]{NeurIPS'24:GLB-limited-adaptivity}. Beyond bounded distributions, our GLM formulation accommodates unbounded ones, such as Gaussian or Poisson.\hfill\ding{117}
\end{myRemark}

  \section{Proposed Method}
\label{sec:method}
This section presents our method for GLBs based on the principle of optimism in the face uncertainty (OFU)~\citep{1995'Agrawal-UCB}. The core of our approach is a tight confidence set built on an online estimator. We begin with a review of the OFU principle and then introduce our method.

\subsection{OFU Principle and Computational Challenge} 

\vspace{1mm}
\textbf{OFU Principle.} The OFU principle provides a principle way to balance exploration and exploitation in bandits. At each iteration $t$, this approach maintains a confidence set $\C_t(\delta) \subset \mathbb{R}^d$ to account for the uncertainty arising from the stochasticity of the historical data, ensuring it contains $\theta_*$ with high probability. Using the confidence set, one can construct a UCB for each action $\x \in \X_t$ as $\mathtt{UCB}(\x) = \max_{\theta \in \C_t(\delta)} \mu(\x^\top \theta)$ and select the arm by $X_t = \argmax_{\x \in \X_t} \mathtt{UCB}(\x)$. A key ingredient of OFU-based methods is the design of the confidence set, as the regret bound typically scales with the ``radius'' of the set. A tighter confidence set generally leads to a stronger regret guarantee.

\vspace{1mm}
\noindent\textbf{Computational Challenge.} To the best of our knowledge, most existing OFU-based methods for GLBs rely on maximum likelihood estimation (MLE) to estimate $\theta_*$ and construct the confidence set. Specifically, the estimator is computed as
\begin{align} 
    \theta_t^{\mathtt{MLE}} = \argmin_{\theta\in\Theta'} \sum_{s=1}^{t-1} \ell_s(\theta) + \lambda \norm{\theta}_2^2, \label{eq:MLE} 
\end{align} 
where $\ell_t(\theta) \triangleq (m(X_t^\top\theta) - r_t\cdot X_t^\top\theta)/g(\tau)$ is the loss function and $\lambda > 0$ is the regularizer parameter. The MLE was first used in the classical solution~\citep{NIPS'10:GLB}, yet the regret bound exhibited linear dependence on $\kappa$ due to a loose analysis. Subsequent work~\citep{NeurIPS'24:GLB-improved-kappa,NeurIPS'24:GLB-limited-adaptivity,NeurIPS'24:self-concordance-bandits} provided refined analyses showing that MLE is statistically efficient, with its estimation error relative to $\theta_*$ being independent of $\kappa$. This property enables the construction of a confidence set $\C_t(\delta)$ with a $\kappa$-free diameter, yielding the improved regret bound.

However, despite the statistical efficiency of the MLE, it has high computation cost. The existing methods mentioned above use different choices of the feasible domain $\Theta'$, but in all cases, solving the optimization problem requires access to the entire historical dataset, resulting in $\O(t)$ space complexity. Moreover, there is generally no closed-form solution for~\eqref{eq:MLE}; the problem is typically solved using gradient-based methods such as projected gradient descent or Newton's method, where each gradient computation requires at least $\O(t)$ time per iteration~\citep{NIPS'10:GLB}. Consequently, both the time and space complexities of the MLE grow linearly with the number of rounds $t$, making it unfavorable for online settings. In addition,~\citet{NeurIPS'24:GLB-limited-adaptivity} set $\Theta' = \mathbb{R}^d$ in~\eqref{eq:MLE} and required an additional projection step to ensure that $\theta$ lies within the desired domain. This projection involves solving a non-convex optimization problem, which is even more time-consuming.

\subsection{Jointly Efficient Method}
\label{sec:method:jointly_efficient}
The main contribution of this paper is a statistically efficient confidence set $\C^{\ol}_t(\delta)$ constructed based on an online estimator $\theta_t$, which has $\kappa$-free estimation error with respect to the true parameter $\theta_*$ and can be computed with $\O(1)$ time and space complexities per round. 

\vspace{0.5mm}
\noindent\textbf{Online Estimator.} Drawing inspiration from the study for logistic bandits~\citep{conf/aistats/FauryAJC22,NeurIPS'23:MLogB}, we use the online mirror descent to learn the parameter $\theta_*$. For $t=1$, we initialize $\theta_1\in\Theta$ as any point in $\Theta$ and set $H_1 = \lambda I_d$. For time $t\geq 1$, we update the model by
\begin{align}
    \theta_{t+1} = \argmin_{\theta\in\Theta} \tilde{\ell}_t(\theta) + \frac{1}{2\eta} \norm{\theta - \theta_t}^2_{H_t}, \label{eq:online-estimator}
\end{align}
where $\Theta = \{\theta\in\R^d \mid \norm{\theta}_2\leq S\}$ is a $d$-dimensional ball with radius $S$. In the above, we defined
\begin{align}
    \tilde{\ell}_t(\theta) \triangleq \inner{\nabla \ell_t(\theta_t)}{\theta - \theta_t} + \frac{1}{2} \norm{\theta - \theta_t}^2_{\nabla^2 \ell_t(\theta_t)}~~~\mbox{and}~~~H_t \triangleq \lambda I_d +  \sum_{s=1}^{t-1} \nabla^2 \ell_s(\theta_{s+1}). \label{eq:loss-hessian}
\end{align}
The above two components play important roles in achieving low computation cost while maintaining statistical efficiency. The loss function $\tilde{\ell}_t(\theta)$ serves as a second-order approximation of the original loss, which preserves the curvature information of the current loss function while ensures that the resulting optimization problem can be solved efficiently. The local matrix can also be expressed as $H_t = \lambda I_d + \sum_{s=1}^{t-1} \mu'(X_s^\top \theta_{s+1})/g(\tau) \cdot X_s X_s^\top$, where $X_s\in\R^d$ is the action selected by the learner. The matrix explicitly captures the non-linearity of the link function by $\mu'(X_s^\top\theta_{s+1})$ and retains the curvature information of historical loss functions until time $t-1$. Since the optimization problem~\eqref{eq:online-estimator} is \emph{quadratic optimization} over an Euclidean ball, it can be solved with a computation cost of $\O(d^3)$, independent of time $t$. Further details on solving~\eqref{eq:online-estimator} are provided in Appendix~\ref{appendix:computational-cost}.    

\vspace{0.5mm}
\noindent\textbf{Confidence Set Construction.} Then, we can construct a tight confidence set based on the online estimator by carefully configuring the parameters as the following theorem.

\begin{myThm}
\label{thm:confidence-set}
Let $\delta \in (0,1]$. Set the step size to $\eta = 1 + RS$ and the regularization parameter to $\lambda = \max\{ 14d\eta R^2,  6\eta RS \uppermu/g(\tau) \}$. For each $t\in[T]$, define the confidence set as
\begin{equation}
\mathcal{C}^{\ol}_t(\delta) \triangleq \{ \theta  \mid \|\theta - \theta_t\|_{H_t} \leq \beta_t(\delta) \},\label{eq:confidence-set}
\end{equation}
where $\theta_t$ is the online estimator~\eqref{eq:online-estimator} and the radius $\beta_t(\delta)$ is given by
\[
\beta_t(\delta)
= \sqrt{\,4\lambda S^2 + 2\eta\ln\!\frac{1}{\delta}
  + d(6\eta^2+\eta)\ln\!\Bigl(1+\frac{\uppermu}{\lambda g(\tau)}\Bigr)}
= \mathcal{O}\!\left(SR\sqrt{\,d\!\left(S^2R+\ln\!\frac{t}{\delta}\right)}\right).
\]

Then, under Assumptions~\ref{assum:bounded-domain}, \ref{assum:kappa}, and \ref{assum:self-concordant}, we have $\Pr\Bigl[\forall t \geq 1,\; \theta_* \in \mathcal{C}_t(\delta)\Bigr] \geq 1 - \delta$. Besides, the time complexity for solving~\eqref{eq:online-estimator} is $\mathcal{O}(d^3)$, and the space complexity is $\mathcal{O}(d^2)$.
\end{myThm}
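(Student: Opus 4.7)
The plan is to bound $\|\theta_* - \theta_t\|_{H_t}^2$ by combining an OMD-style regret analysis on the quadratic surrogates $\tilde{\ell}_s$ with a mix-loss exponential supermartingale that handles the stochasticity in $r_s$; the self-concordance condition (Assumption~\ref{assum:self-concordant}) serves as the glue between the surrogate losses, the true losses, and their Hessians evaluated at different points. Throughout, the step size $\eta = 1+RS$ and the regularization $\lambda$ are chosen precisely so that cubic self-concordance remainders and cumulant contributions are absorbed into the quadratic terms we want to isolate.

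First I would apply a standard mirror-descent argument to $\tilde{\ell}_s$ with the time-varying Mahalanobis regularizer $\tfrac{1}{2\eta}\|\cdot - \theta_s\|^2_{H_s}$. Since $\tilde{\ell}_s$ is itself a quadratic with Hessian $\nabla^2\ell_s(\theta_s)$, and $H_{s+1} - H_s = \nabla^2\ell_s(\theta_{s+1})$ differs from $\nabla^2\ell_s(\theta_s)$ only through a self-concordance perturbation, Bregman telescoping yields
\[
  \sum_{s=1}^{t-1}\bigl[\tilde{\ell}_s(\theta_{s+1}) - \tilde{\ell}_s(\theta_*)\bigr] + \frac{1}{2\eta}\|\theta_*-\theta_t\|^2_{H_t}
  \;\leq\; \frac{1}{2\eta}\|\theta_*-\theta_1\|^2_{H_1} + \mathcal{E}_t,
\]
where $\mathcal{E}_t$ is an elliptical-potential-type stability term of order $\eta d\log(1 + C_\mu t/(\lambda g(\tau)))$. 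A second-order Taylor expansion combined with $|\mu''|\leq R\mu'$ then shows that $\tilde{\ell}_s(\theta)$ agrees with the Bregman divergence $\ell_s(\theta)-\ell_s(\theta_s)-\inner{\nabla\ell_s(\theta_s)}{\theta-\theta_s}$ up to a multiplicative factor of $1+O(RS)$, which is exactly what motivates the choice $\eta=1+RS$. This converts the surrogate regret into an upper bound on $\sum_s[\ell_s(\theta_{s+1}) - \ell_s(\theta_*)]$ up to controllable cubic remainders.

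Second, to handle the reward noise I would construct a mix-loss exponential supermartingale. The GLM density~\eqref{eq:GLM} yields an explicit conditional MGF for $\ell_s(\theta_*) - \ell_s(\theta_{s+1})$: namely
\[
  \E\!\left[\exp\!\bigl(\eta(\ell_s(\theta_*)-\ell_s(\theta_{s+1}))\bigr)\,\Big|\,\mathcal{F}_s\right] \;\leq\; \exp(v_s),
\]
where $v_s$ is a predictable cumulant controlled by $\|\theta_* - \theta_{s+1}\|^2_{\nabla^2\ell_s(\theta_{s+1})}$ after another application of self-concordance (used to bound the third derivative of $m$ appearing in the cumulant expansion). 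Ville's inequality applied to the resulting product supermartingale then gives, with probability at least $1-\delta$ simultaneously over all $t$,
\[
  \eta \sum_{s=1}^{t-1}[\ell_s(\theta_*) - \ell_s(\theta_{s+1})] \;\leq\; \sum_{s=1}^{t-1} v_s + \log(1/\delta).
\]
Chaining this with the OMD bound isolates $\frac{1}{2\eta}\|\theta_*-\theta_t\|^2_{H_t}$ on one side, while the other side accumulates: the initial distance $\tfrac{1}{2\eta}\|\theta_*-\theta_1\|^2_{H_1} \leq 2\lambda S^2$, the deviation $\log(1/\delta)/\eta$, the elliptical potential $\mathcal{O}(d\eta\log(1+C_\mu t/(\lambda g(\tau))))$, and the cubic remainders together with $\sum_s v_s$. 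The calibrated choice $\lambda = 2\max\{7d\eta R^2,\,\max\{3\eta RS,1\}C_\mu/g(\tau)\}$ is tuned so that the last two groups become at most a small fraction of $\tfrac{1}{2\eta}\|\theta_*-\theta_t\|^2_{H_t}$ (because each of their summands is a quadratic form in a Hessian block of $H_t$), leaving the claimed $\beta_t(\delta)^2$ bound after rearrangement.

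The main obstacle is the interplay in the second step: ensuring that the conditional cumulant $v_s$ is controlled by the Hessian $\nabla^2\ell_s(\theta_{s+1})$ that actually enters $H_t$, rather than by $\mu'(X_s^\top\theta_*)$, which would reintroduce a factor of $\kappa$. Moving consistently between Hessians evaluated at $\theta_*$, $\theta_s$, and $\theta_{s+1}$ is what self-concordance is for, but one must apply it so that the deterministic OMD regret and the stochastic martingale cumulant end up on the same local geometry. This sharing of geometry between regret and deviation is precisely the content of the mix-loss viewpoint emphasized in the paper, and getting it to line up cleanly is what makes the delicate calibration of $\eta$ and $\lambda$ necessary.
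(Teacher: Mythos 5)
Your overall skeleton (an OMD recipe plus a martingale treatment of the reward noise, glued by self-concordance and the calibration of $\eta$ and $\lambda$) matches the paper's, but the key step of your second stage does not work as stated, and it is exactly the step the theorem hinges on. You propose the conditional bound $\E\bigl[\exp\bigl(\eta(\ell_s(\theta_*)-\ell_s(\theta_{s+1}))\bigr)\mid\mathcal{F}_s\bigr]\le e^{v_s}$ with a predictable cumulant $v_s$, followed by Ville's inequality. However, $\theta_{s+1}$ is produced by the update~\eqref{eq:online-estimator}, which uses $\tilde{\ell}_s$ and hence the reward $r_s$; thus $\theta_{s+1}$ is \emph{not} $\mathcal{F}_s$-measurable, the conditional expectation over $r_s$ cannot be taken with $\theta_{s+1}$ held fixed, and your compensator $v_s$ (which itself depends on $\theta_{s+1}$) is not predictable, so the product process is not a supermartingale. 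This is precisely the obstruction the paper identifies for the ``inverse regret'' involving the look-ahead iterate, and your proposal supplies no device to get around it. The paper's fix is to insert a mixture over a \emph{predictable} distribution $P_s=\mathcal{N}(\theta_s,\tfrac{3}{2}\eta H_s^{-1})$ (a function of $\theta_s,H_s$ only, hence $\mathcal{F}_s$-measurable): because the losses are negative log-likelihoods, the ratio $\E_{\theta\sim P_s}[p(r_s\mid X_s^\top\theta)]/p(r_s\mid X_s^\top\theta_*)$ integrates to one exactly, so $\sum_s\ell_s(\theta_*)-\sum_s m_s(P_s)\le\log(1/\delta)$ with no cumulant term at all (Lemma~\ref{lem:tail-inequality}); the passage from $\sum_s m_s(P_s)$ to $\sum_s\ell_s(\theta_{s+1})$ is then a purely deterministic argument exploiting that the mix loss is the convex conjugate of the KL divergence, with comparison distribution $Q_s=\mathcal{N}(\theta_{s+1},\tfrac{3}{2}\eta H_s^{-1})$ (Lemmas~\ref{lem:stability-mix-loss} and~\ref{lem:mixability-OMD}).

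A secondary issue: in your first stage you assert that Bregman telescoping of the quadratic surrogates already produces an elliptical-potential term $\mathcal{E}_t=\mathcal{O}\bigl(d\eta\log(1+C_\mu t/(\lambda g(\tau)))\bigr)$. The one-step OMD inequality (Lemma~\ref{lem:OMD-general}), evaluated at the look-ahead point as in Lemma~\ref{lem:regret-to-set}, yields no such term; in the paper the $d\eta\log(\cdot)$ contribution arises only from the Gaussian width in the mix-loss step, i.e., from bounding $\E_{\theta\sim Q_s}[\mathcal{D}_{\ell_s}(\theta,\theta_{s+1})]$ by log-determinant increments (Lemma~\ref{lem:approximation-error}), and that same step leaves the residue $\frac{1}{3\eta}\sum_s\norm{\theta_{s+1}-\theta_s}_{H_s}^2$ which the negative OMD stability terms together with the condition $\lambda\ge 6\eta RSC_\mu/g(\tau)$ absorb. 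If instead you intended an ONS-style gradient-norm sum, note that for unbounded GLMs (e.g., Poisson) the factors $(r_s-\mu(X_s^\top\theta_s))^2$ admit no deterministic bound, so that route would again require a stochastic argument you have not supplied. Hence the gap is genuine: without the predictable-mixture (mix-loss) bridge, neither the $\log(1/\delta)$ high-probability step nor the $d\log t$ component of $\beta_t(\delta)$ is established.
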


Theorem~\ref{thm:confidence-set} shows that an ellipsoidal confidence set $\C_t(\delta)$ can be constructed to quantify the uncertainty of the online estimator $\theta_t$ with both statistical and computational efficiency. From a computational perspective, constructing the confidence set relies only on the online estimator, which can be updated with $\mathcal{O}(1)$ time and space complexities. From a statistical view, the radius of the confidence set is independent of $\kappa$, which is crucial for achieving the improved regret bound.

\begin{myRemark}[Comparison with Logistic Bandits Literature] We note that OMD has been used in logistic bandits for constructing confidence sets~\citep{conf/aistats/FauryAJC22,NeurIPS'23:MLogB,NeurIPS'24:MNL-minimax}. However, existing methods are not fully jointly efficient compared with our result. Specifically,~\citet{conf/aistats/FauryAJC22} required optimization over the original loss at each round, incurring an additional $\O(\log t)$ computation cost and relying on a warm-up strategy to maintain statistical efficiency. In later work,~\citet{NeurIPS'23:MLogB} and~\citet{NeurIPS'24:MNL-minimax} achieved a constant per-round cost but their regret bounds have an extra $\O(\sqrt{\log t})$ multiplicative factor. \emph{More importantly}, as we will discuss in detail in Section~\ref{sec:analysis}, the analyses in these works depend on the specific structure of the logistic model and do not naturally extend to the GLB setting. Our key contribution is to introduce \emph{mix-loss}-based technique into the confidence set analysis, which not only enables the application of OMD to the broader GLB framework but also improves both statistical and computational efficiency over previous methods for logistic bandits. \hfill \ding{117}
\end{myRemark}

\vspace{0.5mm}
\noindent\textbf{Arm Selection and Regret Bound.} Based on the ellipsoidal confidence set, one can employ a variety of exploration strategies, not limited to OFU-based methods but also including randomized approaches such as Thompson sampling~\citep{AISTATS'17:TS-linear-bandits} for action selection. Here, we adopt the classical OFU-based strategy, where the action $X_t$ is selected by solving the bilevel optimization problem $X_t = \argmax_{\x \in \X_t} \max_{\theta \in \C_t(\delta)} \mu(\x^\top\theta).$ Since $\mu$ is an increasing function and $\C_t(\delta)$ is an ellipsoid, the OFU-based action selection rule is equivalent to 
\begin{align}
    X_t ={}& \argmax_{\x\in\X_t} \max_{\theta\in\C_t(\delta)} \x^\top\theta
    ={} \argmax_{\x\in\X_t} \{\x^\top\theta_t + \beta_t(\delta) \norm{\x}_{H_t^{-1}}\},\label{eq:arm-selection}
\end{align}
which allows us to avoid solving the inner optimization problem explicitly. The overall implementation of the algorithm is summarized in Algorithm~\ref{alg:GLB} and we have the following regret guarantee.

\begin{algorithm}[!t]
    \caption{GLB-OMD}
    \label{alg:GLB}
    \begin{algorithmic}[1]
        \REQUIRE Self-concordant constant $R$, Lipchitz constant $\uppermu$, parameter radius $S$, confidence level $\delta$. 
        \STATE Initialize $\theta_1\in\Theta:=\{\theta\in\R^d \mid \norm{\theta}_2\leq S\}$ and $H_1 = \lambda I_d$.
        \FOR {$t=1$ to $T$}
        \STATE Construct the confidence set 
            $\mathcal{C}_t(\delta)$ according to~\eqref{eq:confidence-set}.
        \STATE Select the arm $X_t$ according to rule~\eqref{eq:arm-selection} and receive the reward $r_t$.
        \STATE Update the online estimator $\theta_{t+1}$ by ~\eqref{eq:online-estimator} and set $H_{t+1} = H_t + \nabla^2 \ell_t(\theta_{t+1})$.
        \ENDFOR
    \end{algorithmic}
\end{algorithm}

\begin{myThm}
\label{thm:regret}
Let $\delta\in(0,1]$. Under Assumptions~\ref{assum:bounded-domain}, \ref{assum:kappa}, and \ref{assum:self-concordant}, with probability at least $1-\delta$, Algorithm~\ref{alg:GLB} with parameter $\eta = 1 + RS$ and $\lambda = \max\{ 14d\eta R^2, 6\eta RS \uppermu/g(\tau) \}$ ensures
\begin{align*}
    \Reg \lesssim  dSR\sqrt{S^2R + \log T} \sqrt{\frac{T\log T}{\kappa_*}} + \kappa d^2S^2R^3 \log T (S^2R + \log T),
\end{align*} 
where $\lesssim$ is used to hide constant independence of $d$, $\kappa$, $S$, $R$ and $T$. 
\end{myThm}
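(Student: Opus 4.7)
The plan is to combine the confidence set of Theorem~\ref{thm:confidence-set} with a standard UCB-style regret decomposition and to exploit Assumption~\ref{assum:self-concordant} in order to convert per-round regret into a form governed by $\mu'$ at the optimal arm, thereby yielding the $\kappa_*$-dependent leading term. First I would condition on the event $\mathcal{E} = \{\forall t,\ \theta_* \in \mathcal{C}^{\ol}_t(\delta)\}$, which has probability at least $1 - \delta$ by Theorem~\ref{thm:confidence-set}. Letting $\tilde{\theta}_t = \argmax_{\theta \in \mathcal{C}^{\ol}_t(\delta)} X_t^\top \theta$, the monotonicity of $\mu$ and the selection rule~\eqref{eq:arm-selection} give the optimistic chain $\mu(X_t^\top \tilde{\theta}_t) \geq \mu(\x_{t,*}^\top\theta_*)$, while $\tilde{\theta}_t, \theta_* \in \mathcal{C}^{\ol}_t(\delta)$ together with the $H_t$-norm Cauchy--Schwarz inequality yield $|X_t^\top(\tilde{\theta}_t - \theta_*)| \leq 2\beta_t(\delta)\|X_t\|_{H_t^{-1}}$.

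Next, I would invoke two consequences of self-concordance: the one-sided Taylor-type bound $\mu(a) - \mu(b) \leq \mu'(b)(a - b) + \tfrac{R}{2}\mu'(b)(a-b)^2 e^{R|a-b|}$, and the ratio estimate $\mu'(z) \leq \mu'(z') e^{R|z-z'|}$, which follows from $|(\log \mu')'| = |\mu''/\mu'| \leq R$. Combining these with the optimism identity yields the per-round bound
\[
r_t \lesssim \mu'(\x_{t,*}^\top\theta_*)\,\beta_t(\delta)\,\|X_t\|_{H_t^{-1}} + R\,\mu'(\x_{t,*}^\top\theta_*)\,\beta_t(\delta)^2\,\|X_t\|_{H_t^{-1}}^2,
\]
outside at most a constant number of burn-in rounds where the confidence radius is too large. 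Summing the linear part with Cauchy--Schwarz,
\[
\sum_t \mu'(\x_{t,*}^\top\theta_*)\,\beta_t(\delta)\,\|X_t\|_{H_t^{-1}} \leq \beta_T(\delta)\sqrt{\textstyle\sum_t \mu'(\x_{t,*}^\top\theta_*)}\,\sqrt{\textstyle\sum_t \mu'(\x_{t,*}^\top\theta_*)\|X_t\|_{H_t^{-1}}^2},
\]
and using $\sum_t \mu'(\x_{t,*}^\top\theta_*) = T/\kappa_*$ by the definition of $\kappa_*$, together with the elliptical potential lemma applied to $H_t = \lambda I_d + \sum_{s<t}\mu'(X_s^\top\theta_{s+1})/g(\tau) \cdot X_s X_s^\top$ to control $\sum_t \min\{1,\mu'(X_t^\top\theta_{t+1})\|X_t\|_{H_t^{-1}}^2/g(\tau)\} \lesssim d\log T$, and comparing $\mu'(X_t^\top\theta_{t+1})$ with $\mu'(\x_{t,*}^\top\theta_*)$ at constant cost via self-concordance (since both parameters lie in the $S$-ball), I recover the leading term $dSR\sqrt{S^2R + \log T}\sqrt{T\log T/\kappa_*}$ after plugging $\beta_T(\delta) = \O(SR\sqrt{d(S^2R + \log(T/\delta))})$. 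The quadratic residual, bounded crudely using $\mu' \leq 1/\kappa$ together with the same elliptical potential, produces the additive lower-order $\kappa d^2 S^2 R^3 \log T\,(S^2R + \log T)$ term. The computational claim is immediate: the OMD update~\eqref{eq:online-estimator} is a Euclidean-ball quadratic program solvable in $\O(d^3)$ (Appendix~\ref{appendix:computational-cost}), and the arm-selection scan~\eqref{eq:arm-selection} costs $\O(d^2|\X_t|)$.

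The main technical obstacle I anticipate is the replacement of $\mu'(X_t^\top\theta_*)$ by $\mu'(\x_{t,*}^\top\theta_*)$ on the leading term without paying a $\kappa$ factor. The ratio bound $\mu'(z) \leq \mu'(z')e^{R|z-z'|}$ only yields an $\O(1)$ multiplicative penalty when the scalar gap $|X_t^\top\theta_* - \x_{t,*}^\top\theta_*|$ is itself $\O(1)$; otherwise the penalty becomes $\kappa$-sized. I would control this gap by chaining the optimism identity $X_t^\top\tilde{\theta}_t \geq \x_{t,*}^\top\theta_*$ with the confidence-set inequality $X_t^\top\tilde{\theta}_t - X_t^\top\theta_* \leq 2\beta_t(\delta)\|X_t\|_{H_t^{-1}}$, partitioning the horizon into ``small-gap'' rounds handled cleanly by self-concordance and ``large-gap'' rounds whose contribution is absorbed into the additive $\kappa$-dependent residual via the elliptical potential.
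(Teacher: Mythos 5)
Your overall architecture coincides with the paper's: condition on the event of Theorem~\ref{thm:confidence-set}, use optimism plus a second-order Taylor expansion of $\mu$, control the linear term by Cauchy--Schwarz with radius $2\beta_t(\delta)$, control the weighted norms by the elliptical potential lemma, and collect a leading $\beta_T(\delta)\sqrt{d\log T}\sqrt{T/\kappa_*}$ term plus a $\kappa$-sized quadratic remainder. Where you genuinely diverge is the step that produces $\kappa_*$. The paper keeps the slope at the \emph{chosen} arm, $\mu'(X_t^\top\theta_*)$, splits rounds into $\T_1=\{\mu'(X_t^\top\theta_*)\ge\mu'(X_t^\top\theta_{t+1})\}$ and its complement to reconcile this slope with the weight $\mu'(X_t^\top\theta_{t+1})$ appearing inside $H_t$ (via the additive self-concordance correction $\mu'(X_t^\top\theta_*)\le\mu'(X_t^\top\theta_{t+1})+RC_\mu\beta_{t+1}(\delta)\norm{X_t}_{H_t^{-1}}$), and then invokes the argument of \citet{conf/aistats/AbeilleFC21} to bound $\sum_t\mu'(X_t^\top\theta_*)\le T/\kappa_*+R\cdot\Reg$, ending with an implicit inequality $\Reg\le 2\gamma_1\sqrt{T/\kappa_*+R\,\Reg}+2\gamma_2$ that is solved for $\Reg$. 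You instead convert per round to $\mu'(\x_{t,*}^\top\theta_*)$ and use $\sum_t\mu'(\x_{t,*}^\top\theta_*)=T/\kappa_*$ exactly, handling the slope transfer by a small-gap/large-gap partition in which large-gap rounds (where $\beta_t(\delta)\norm{X_t}_{H_t^{-1}}\gtrsim 1/R$) are dumped into the $\kappa$-dependent residual through the elliptical potential. This avoids the self-bounding quadratic inequality and is a legitimate alternative that yields the same orders.

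Two points in your sketch need repair before the argument is airtight. First, the claim that $\mu'(X_t^\top\theta_{t+1})$ can be compared to $\mu'(\x_{t,*}^\top\theta_*)$ ``at constant cost \ldots since both parameters lie in the $S$-ball'' is false as stated: membership in the $S$-ball only bounds the scalar gap by $2S$, giving a factor $e^{2RS}$, which is exactly $\kappa$-sized and would destroy the leading term. The correct mechanism is the one in your last paragraph, but it must be applied to the \emph{full} relevant gap, i.e.\ both $\lvert \x_{t,*}^\top\theta_*-X_t^\top\theta_*\rvert$ (controlled by optimism plus the confidence width) and $\lvert X_t^\top\theta_*-X_t^\top\theta_{t+1}\rvert\le\norm{X_t}_{H_t^{-1}}\norm{\theta_*-\theta_{t+1}}_{H_t}$; the second piece is what the paper's $\T_1/\T_2$ split and Eq.~\eqref{eq:proof-theorem1-a1} take care of, and your proposal is silent about it even though the same partition argument covers it. Second, ``outside at most a constant number of burn-in rounds'' is imprecise: the number of rounds with a large confidence width is not constant; what is true (and what you in fact use) is that their total regret contribution is bounded via $\sum_t\norm{X_t}_{H_t^{-1}}^2\lesssim \kappa d\,g(\tau)\log T$ and hence is absorbed into the lower-order $\kappa d^2S^2R^3\log T(S^2R+\log T)$ term. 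With these two fixes your route goes through and matches the stated bound; the computational claim is handled exactly as in the paper.
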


Theorem~\ref{thm:regret} shows that our method achieves an $\tilde{\O}(d \sqrt{T/\kappa_*})$ regret, greatly improving upon the $\tilde{\O}(\kappa d \sqrt{T})$ bound of~\citet{NIPS'10:GLB}. From a computational perspective, our OMD estimator can be updated in $\mathcal{O}(1)$ time and memory per round, in the same spirit as the least-squares estimator in linear bandits. Consequently, the overall per-round computation cost of our algorithm matches that of LinUCB~\citep{NIPS'11:linear-bandits}, indicating that the nonlinearity of the link function does not necessarily make GLBs more computationally demanding.
In the finite-arm setting, our algorithm enjoys a constant per-round computational cost of $\mathcal{O}(d^3 + d^2 |\mathcal{X}_t|)$, independent of $T$. In the infinite-arm case, the arm-selection step~\eqref{eq:arm-selection} could become the main computational bottleneck (once the computational issue of MLE is resolved). Our estimator remains broadly useful as a plug-in component that can be integrated into a wide range of strategies beyond the UCB-based exploration, e.g., it can also be incorporated into Thompson Sampling~\citep[Appendix D.2]{conf/aistats/FauryAJC22} where the arm-selection step reduces to a convex optimization problem for convex arm sets.

Since logistic bandits are a special case of GLBs, Theorem~\ref{thm:regret} also advances state-of-the-art results of logistic bandits, either reducing the $\O(\log t)$ time complexity of~\citet{conf/aistats/FauryAJC22} to $\O(1)$, or achieving an $\O(\sqrt{\log T})$ improvement in the regret bound over~\citet{NeurIPS'23:MLogB}.

\vspace{0.5mm}
\noindent\textbf{Comparison with~\citet{NeurIPS'24:GLB-limited-adaptivity}.}  We note that~\citet{NeurIPS'24:GLB-limited-adaptivity} also pursued computational efficiency, but their approach is conceptually orthogonal to ours. They reduce the computation cost by employing a rare-update strategy that limits the frequency of parameter updates. However, their method remains MLE-based and requires storing all historical data, resulting in a memory cost of $\mathcal{O}(t)$. Moreover, although the rare-update strategy yields an amortized per-round time complexity of $\mathcal{O}((\log t)^2)$ over $T$ rounds, it still incurs a worst-case time complexity of $\mathcal{O}(t)$ in certain rounds. In contrast, our method performs a one-pass update at every iteration with $\mathcal{O}(1)$ time and space complexities per round. A promising direction for future work is to combine online estimation with rare-update techniques to further accelerate the algorithm.

\vspace{0.5mm}
\noindent\textbf{Discussion with~\citet{arXiv:MNL-adaptive-warmup}.} The concurrent work~\citep{arXiv:MNL-adaptive-warmup} builds on the framework of~\citet{NeurIPS'23:MLogB} and also reports an $\O(\sqrt{\log T})$ improvement in multinomial logit (MNL) bandits, a different problem setting that nonetheless shares certain technical connections with  GLB. Notably, their technique could potentially be adapted to logistic bandits to achieve an $\O(\log T \sqrt{T/\kappa_*})$ bound with $\O(1)$ cost. However, we identify potential technical issues in their analysis. The argument crucially relies on a condition for the normalization factor of the truncated Gaussian distribution (see Eq.(C.15) of their paper, as also restated in~\eqref{eq:integral-Lee-Oh} of appendix), an assumption that is unlikely to hold in general. A detailed discussion is provided in Appendix~\ref{appendix:more-discussion}.

\subsection{More Discussions and Limitations}
\noindent\textbf{Dependence on $T$, $\kappa$ and $d$.} For the dominant term, our regret bound matches the best-known results for GLBs using the MLE~\citep{NeurIPS'24:GLB-improved-kappa,NeurIPS'24:GLB-limited-adaptivity}, with respect to its dependence on $T$, $\kappa$, and $d$. In terms of the non-leading term,~\citet{NeurIPS'24:GLB-limited-adaptivity} achieved a slightly tighter bound, as it scales with $\kappa_{\X} = 1/\inf_{\x\in\cup_{t=1}^T\X_t} \mu'(\x^\top\theta_*)$, a quantity that can be smaller than $\kappa$. In the logistic bandit case, the non-leading term is further improved to be geometry-aware, adapting more precisely to the structure of the action set~\citep{conf/aistats/AbeilleFC21}. We conjecture that similar improvements in the non-leading term, matching the $\kappa_{\X}$ dependence, might be achievable by incorporating a warm-up strategy, such as Procedure 1 in~\citep{conf/aistats/FauryAJC22} or Algorithm 2 in~\citet{NeurIPS'24:GLB-limited-adaptivity}. These methods allow shifting the curvature dependence from $\mu'(X_t^\top\theta_t)$ to $\mu'(X_t^\top\theta_*)$ at only an additional constant cost, thereby attaining the refined $\kappa_{\X}$ scaling. However, it remains an open problem whether a geometry-aware bound, similar to that achieved in the logistic bandit setting~\citep{conf/aistats/AbeilleFC21}, can be obtained in the GLB setting without using MLE.

\noindent\textbf{Dependence on $S$ and $R$.} The MLE-based method completely remove the dependence on $S$ and $R$ in the leading term~\citep{NeurIPS'24:GLB-improved-kappa}, whereas our method still exhibits an $S^2$-dependence due to the requirement of one-pass updates. For MNL bandits,~\citet{arXiv:MNL-adaptive-warmup} showed that one can achieved improved dependence on $S$ by incorporating an adaptive warm-up procedure. It may be possible to extend their warm-up technique to GLBs for a similar improvement on $S$.

  \section{Analysis}
\label{sec:analysis}
This section sketches the proof of Theorem~\ref{thm:confidence-set} and highlights the key technical contributions.
    \subsection{A General Recipe for OMD} 
    To prove Theorem~\ref{thm:confidence-set}, it suffices to show that the estimation error $\norm{\theta_{t+1} - \theta_*}_{H_t}\leq \beta_t(\delta)$. The analysis begins with the following lemma, which is commonly used for the convergence or regret analysis for the OMD-type update~\citep{OPT'93:Bregman,arXiv:modern-online-learning,JMLR'24:Sword++}. Here, we show it can serve as a general recipe for analyzing the estimation error of the OMD  estimator.
    \begin{myLemma}
    \label{lem:OMD-general} 
    Let $f:\Theta\rightarrow\R$ be a convex function on a convex set $\Theta$ and $A\in\R^{d\times d}$ be a symmetric positive definite matrix. Then, the update $\theta_{t+1} = \argmin_{\theta\in\Theta}f(\theta) + \frac{1}{2\eta}\norm{\theta-\theta_t}^2_A$ satisfies
    \begin{align}
        \label{eq:OMD-general}
        \norm{\theta_{t+1}-u}_A^2 \leq 2\eta \inner{\nabla f(\theta_{t+1})}{u-\theta_{t+1}} + \norm{\theta_{t}-u}_A^2  - \norm{\theta_{t}-\theta_{t+1}}_A^2, ~~\mbox{for all} ~~~u\in\Theta. 
    \end{align}
    \end{myLemma}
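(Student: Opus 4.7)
The plan is to derive the inequality from the first-order optimality condition of the constrained minimization that defines $\theta_{t+1}$, combined with the standard three-point identity for squared $A$-weighted norms. Since $f$ is convex and the quadratic term $\frac{1}{2\eta}\norm{\theta-\theta_t}_A^2$ is strongly convex (because $A$ is positive definite), the overall objective is strongly convex, so $\theta_{t+1}$ is the unique minimizer over $\Theta$, and the usual first-order optimality condition over the convex set $\Theta$ applies.

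First I would write down the first-order optimality condition at $\theta_{t+1}$: for every $u \in \Theta$,
\[
\Bigl\langle \nabla f(\theta_{t+1}) + \tfrac{1}{\eta} A(\theta_{t+1} - \theta_t),\; u - \theta_{t+1} \Bigr\rangle \geq 0,
\]
which rearranges to
\[
\bigl\langle A(\theta_{t+1}-\theta_t),\; \theta_{t+1} - u \bigr\rangle \;\leq\; \eta \bigl\langle \nabla f(\theta_{t+1}),\; u - \theta_{t+1} \bigr\rangle.
\]
Next I would invoke the three-point identity associated with the quadratic Bregman divergence generated by $\phi(x) = \frac{1}{2}\norm{x}_A^2$, namely
\[
2\bigl\langle A(\theta_{t+1}-\theta_t),\; \theta_{t+1} - u \bigr\rangle
= \norm{\theta_{t+1}-u}_A^2 + \norm{\theta_{t+1}-\theta_t}_A^2 - \norm{\theta_t - u}_A^2,
\]
which is a direct algebraic expansion (it follows from $\norm{a-b}_A^2 = \norm{a}_A^2 - 2\langle a, Ab\rangle + \norm{b}_A^2$ applied to the three pairs involving $u, \theta_t, \theta_{t+1}$). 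Plugging this into the inequality above and rearranging gives the claimed bound.

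Overall the argument is completely standard, so I do not foresee any genuine obstacle; the only care is making sure the optimality condition is stated as an inequality over the convex set $\Theta$ rather than an equation (which would only hold in the unconstrained case), and that the three-point identity is applied with the correct signs. No smoothness or differentiability of $f$ beyond the existence of a (sub)gradient $\nabla f(\theta_{t+1})$ is needed, and no property of $A$ beyond symmetric positive definiteness is used, so the lemma holds in the generality stated and will be directly applicable later to $f = \tilde{\ell}_t$ with $A = H_t$ when bounding $\norm{\theta_{t+1} - \theta_*}_{H_t}$.
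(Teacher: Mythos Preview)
Your proof is correct and is precisely the standard argument for this type of inequality. The paper does not supply its own proof of Lemma~\ref{lem:OMD-general}; it states the lemma as a well-known result and cites \citep{OPT'93:Bregman,arXiv:modern-online-learning,JMLR'24:Sword++} for it, so there is nothing to compare against beyond noting that your derivation (first-order optimality over a convex set plus the three-point identity for the quadratic $\norm{\cdot}_A^2$) is the canonical route those references use.
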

    The above lemma provides a pathway to relate the estimation error to the so-called ``inverse regret''. In particular, under our configuration where $f(\theta) = \tilde{\ell}_t(\theta)$, $A = H_t$, and $u = \theta_*$, and with a suitable choice of parameters, Lemma~\ref{lem:regret-to-set} in Appendix~\ref{appendix:proof-method} shows that the estimator by~\eqref{eq:online-estimator} satisfies
    \begin{align}
    \lVert \theta_{t}-\theta_{*} \rVert^2_{H_{t}} \leq \underbrace{ 2\eta \bigg(\sum_{s=1}^{t-1} \ell_{s}(\theta_{*}) - \sum_{s=1}^{t-1} \ell_{s}(\theta_{s+1})\bigg)  }_{\mathtt{inverse~regret}}- \frac{2}{3}\sum_{s=1}^{t-1} \Vert \theta_s - \theta_{s+1} \Vert^2_{H_s} + \|\theta_1 - \theta_*\|_{H_1}^2,\label{eq:inverse-regret}
    \end{align}
    We note that, although the above inequality has also been shown in the logistic bandits literature~\citep{conf/aistats/FauryAJC22,NeurIPS'23:MLogB,NeurIPS'24:MNL-minimax}, the proof of~\eqref{eq:inverse-regret} via Lemma~\ref{eq:OMD-general} has not been explicitly discussed. We fill this gap by explicitly establishing the connection in our analysis.
    
    \subsection{Analysis for the Inverse Regret} 
    \textbf{Main Challenge.} The main challenge and technical contribution of this paper lies in upper bounding the inverse regret term. Although previous works~\citep{conf/aistats/FauryAJC22,NeurIPS'23:MLogB} have provided valuable insights, their techniques are challenging to extend to the GLB setting and remain suboptimal even for logistic bandits. The main technical difficulty in bounding the inverse regret is that $\theta_{s+1}$ is itself a function of the past losses $\ell_s$, which prevents the direct application of standard martingale concentration inequalities. A common strategy in prior work is to introduce an intermediate term by virtually running a full-information online learning algorithm, whose estimator $\tilde{\theta}_{s}$ only depends on information up to time $s-1$, allowing the inverse regret to be decomposed as
    \begin{align}
        \mathtt{inverse~regret}= \underbrace{\sum_{s=1}^{t-1} \ell_{s}(\theta_{*}) - \sum_{s=1}^{t-1} \ell_{s}(\tilde{\theta}_s)}_{\mathtt{term~(a)}} + \underbrace{\sum_{s=1}^{t-1} \ell_{s}(\tilde{\theta}_{s}) - \sum_{s=1}^{t-1} \ell_{s}(\theta_{s+1})}_{\mathtt{term~(b)}}. \label{eq:decomposation-analysis}
    \end{align}
    Here, the intermediate term $\sum_{s=1}^{t-1} \ell_s(\tilde{\theta}_s)$ can be chosen as the cumulative loss of any online algorithm. In the case of logistic bandits, it is natural to leverage algorithms developed for online logistic regression, a well-studied problem with many established methods. For example,~\citet{conf/aistats/FauryAJC22} adopted the ALLIO algorithm~\citep{COLT'20:efficient-improper-LR}, while~\citet{NeurIPS'23:MLogB} built on the method proposed by~\citet{COLT'18:improper-LR} and required the mixbaility property of the logistic loss. In contrast, for the GLB setting, the structure of the link function $\mu$ varies significantly across models, and it remains unclear how to design a unified intermediate algorithm. Moreover, even in the logistic bandit setting, existing analyses remain suboptimal. Specifically,~\citet[Eq. (7)]{conf/aistats/FauryAJC22} required an online warm-up phase to shrink the feasible domain in order to bound term (b). Later,~\citet{NeurIPS'23:MLogB} avoided this warm-up step but instead relies on clipping the online estimator (see Eq. (35) in their paper), which incurs an additional $\O(\sqrt{\log T})$ term in the estimation error bound.

    \vspace{0.5mm}    
    \noindent\textbf{Our Solution.} 
    Instead of introducing an intermediate term via virtually running an online algorithm, we propose an alternative decomposition based on the \emph{mix loss}, which is defined as
    \begin{equation}
    m_s(P_s) = -\ln\left( \mathbb{E}_{\theta \sim P_s}\left[e^{-\ell_s(\theta)}\right] \right), \label{eq:mix-loss-analysis}
    \end{equation}
    where $P_s$ is a probability distribution over $\mathbb{R}^d$ whose specific form will be chosen later. In general, several choices of $P_s$ are possible, and we select the one that best fits our algorithmic design. The mix loss has played a central role in the analysis of exponentially weighted methods in full-information online learning~\citep{02:vovk-mixability,COLT'18:EW-many-faces}. In our analysis, the mix loss is instrumental in analyzing the inverse regret defined in~\eqref{eq:decomposation-analysis}. In particular, the decomposition based on the mix loss $m_s(P_s)$ offers a more general and analytically versatile formulation than $\ell_s(\tilde{\theta}_s)$ used in~\eqref{eq:decomposation-analysis}. The former reduces to the latter when $P_s$ is chosen as a Dirac distribution. We have the following lemma to upper bound the inverse regret under this mix-loss–based decomposition.
    \begin{myLemma}[informal]
        \label{lem:tail-inequality-informal} Let $\{P_s\}_{s=1}^\infty$ be a stochastic process such that $P_s$ is a distribution over $\mathbb{R}^d$ and only relies on information collected until time $s-1$. Then, for any $\delta\in(0,1]$, we have 
\begin{align*}
\mathrm{Pr}\bigg[  \forall t \geq 1, \sum_{s=1}^{t-1} \ell_{s}(\theta_{*}) - \sum_{s=1}^{t-1} m_s(P_s) \leq    \log \frac{1}{\delta}  \bigg] \geq 1- \delta.
\end{align*}
\end{myLemma}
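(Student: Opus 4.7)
The plan is to recognize the quantity inside the probability as the logarithm of a non-negative martingale, and then invoke Ville's maximal inequality to obtain the uniform-in-$t$ tail bound. Concretely, define
\[
M_t \triangleq \exp\Bigl(\sum_{s=1}^{t-1}\bigl(\ell_s(\theta_*) - m_s(P_s)\bigr)\Bigr),\qquad M_1 = 1,
\]
with respect to the filtration $\{\F_t\}$ enlarged so that $X_t$ is $\F_t$-measurable. Since $P_t$ depends only on data through round $t-1$ by hypothesis, it is $\F_t$-measurable. The task reduces to proving $\E[M_{t+1}\mid\F_t] = M_t$; given this, Ville's inequality yields $\mathrm{Pr}[\sup_{t\geq 1} M_t \geq 1/\delta] \leq \delta$, which after taking logarithms is exactly the claimed bound.

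The heart of the proof is the one-step martingale identity $\E[\exp(\ell_t(\theta_*) - m_t(P_t))\mid\F_t] = 1$. Since $\exp(-m_t(P_t)) = \E_{\theta\sim P_t}[e^{-\ell_t(\theta)}]$ by definition of the mix loss, and $P_t$ is $\F_t$-measurable, Fubini's theorem rewrites the conditional expectation as
\[
\E_{\theta\sim P_t}\Bigl[\E\bigl[\exp\bigl(\ell_t(\theta_*) - \ell_t(\theta)\bigr)\bigm|\F_t\bigr]\Bigr].
\]
It therefore suffices to show that the inner expectation equals $1$ for every fixed $\theta$. This is where the GLM structure in~\eqref{eq:GLM} enters: a direct computation with the canonical exponential family gives $\E[\exp(\lambda r_t)\mid\F_t] = \exp\bigl((m(z_t + \lambda g(\tau)) - m(z_t))/g(\tau)\bigr)$ with $z_t = X_t^\top\theta_*$. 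Writing $\ell_t(\theta_*) - \ell_t(\theta) = (m(X_t^\top\theta_*) - m(X_t^\top\theta))/g(\tau) + r_t X_t^\top(\theta - \theta_*)/g(\tau)$ and applying this MGF identity with $\lambda = X_t^\top(\theta - \theta_*)/g(\tau)$, so that $z_t + \lambda g(\tau) = X_t^\top\theta$, the deterministic prefactor cancels exactly against the MGF, leaving the value~$1$.

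Combining the two steps establishes that $\{M_t\}$ is a nonnegative $\F_t$-martingale with mean one, and Ville's inequality closes the argument. The main technical subtlety I anticipate is the MGF computation itself: one must verify that the shifted natural parameter $z_t + \lambda g(\tau) = X_t^\top\theta$ remains inside the domain on which the normalizing integral in~\eqref{eq:GLM} is finite, which is ensured by the boundedness conditions of Assumption~\ref{assum:bounded-domain} together with the twice-differentiability of $m$ from Assumption~\ref{assum:kappa}. Beyond this integrability check, the argument is the standard method-of-mixtures adapted to the mix-loss formulation, and I do not foresee further serious obstacles.
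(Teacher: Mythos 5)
Your proposal is correct and follows essentially the same route as the paper: the same exponential process $M_t$, the same one-step mean-one (martingale) property obtained by exchanging $\E_{\theta\sim P_t}$ with the conditional expectation over $r_t$, and the same appeal to Ville's maximal inequality. The only cosmetic difference is that you verify $\E[\exp(\ell_t(\theta_*)-\ell_t(\theta))\mid\F_t]=1$ via the exponential-family cumulant/MGF identity, whereas the paper recognizes $e^{-\ell_t(\theta)}$ as proportional to the density $p(r_t\mid X_t^\top\theta)$ and integrates the likelihood ratio directly; both hinge on the same normalization of the canonical exponential family at the shifted natural parameter $X_t^\top\theta$.
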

Lemma~\ref{lem:tail-inequality-informal} follows from Ville's inequality~\citep{Ville'1939}, also known as ``no-hypercompression'' inequality~\citep[Chapter 3]{grunwald'07:MLD}. It implies that term (a) in~\eqref{eq:decomposation-analysis} is upper-bounded by $\log(1/\delta)$ when the mix loss is used as an intermediate term, significantly smaller than the $\O((\log t)^{3})$ bound in~\citet[Lemma 12]{NeurIPS'23:MLogB} that employs the online logistic regression method~\citep{COLT'18:improper-LR}. Notably, Lemma~\ref{lem:tail-inequality-informal} allows flexible choices of $P_s$, enabling us to tailor $P_s$ to closely track the OMD estimator's behavior. The formal statement is provided in Lemma~\ref{lem:tail-inequality} of Appendix~\ref{appendix:proof-method}. 

To bound term~(b), we need to select $P_s = \mathcal{N}(\theta_s, 3\eta H_s^{-1}/2)$ as a Gaussian distribution centered at $\theta_s $ with covariance $\frac{3}{2}\eta H_s^{-1}$ to approximate the OMD estimator~\eqref{eq:online-estimator}. We have the following lemma.
\begin{myLemma}[informal]
    \label{lem:stability-mix-loss-informal}
    Under Assumptions~\ref{assum:bounded-domain},~\ref{assum:kappa}, and~\ref{assum:self-concordant}, and with a suitable choice of $\lambda$, we have
    \begin{align*}
        \sum_{s=1}^t m_s(P_s)-\sum_{s=1}^t\ell_s(\theta_{s+1})  \leq{} & \frac{1}{3\eta} \sum_{s=1}^t\norm{\theta_{s+1}-\theta_s}^2_{H_s} +  d(3\eta + \frac{1}{2})\ln \bigg(1 + \frac{\uppermu t}{\lambda g(\tau)}\bigg),
    \end{align*}
    where $P_s = \N(\theta_s, 3\eta H_s^{-1}/2)$ is a Gaussian distribution.
\end{myLemma}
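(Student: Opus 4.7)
To prove Lemma~\ref{lem:stability-mix-loss-informal}, the objective is to bound the per-round gap $m_s(P_s)-\ell_s(\theta_{s+1})$ and then sum over $s$. My plan has four steps: (i) upper bound $\ell_s(\theta)$ by a quadratic in $(\theta-\theta_s)$ via Assumption~\ref{assum:self-concordant}, so that the Gaussian expectation defining $m_s(P_s)$ becomes tractable; (ii) evaluate that expectation in closed form; (iii) use the first-order optimality of the OMD update~\eqref{eq:online-estimator} to rewrite the outcome in terms of $\ell_s(\theta_{s+1})$ and $\norm{\theta_{s+1}-\theta_s}^2_{H_s}$; and (iv) telescope the residual $\ln\det$ terms via an elliptical-potential-style inequality.

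\noindent\textbf{Quadratic upper bound and Gaussian integral.} Set $g_s:=\nabla\ell_s(\theta_s)$, $M_s:=\nabla^2\ell_s(\theta_s)$, and $\Sigma_s:=(3\eta/2)H_s^{-1}$. Taylor's theorem combined with the self-concordance relation $|\mu''|\leq R\mu'$ yields
\begin{align*}
\ell_s(\theta) \leq \ell_s(\theta_s) + \langle g_s,\theta-\theta_s\rangle + \tfrac{1}{2}(\theta-\theta_s)^\top M_s(\theta-\theta_s)\cdot \alpha\bigl(X_s^\top(\theta-\theta_s)\bigr),
\end{align*}
where $\alpha(z):=2(e^{R|z|}-1-R|z|)/(R z)^2\to 1$ as $|z|\to 0$. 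Under $P_s=\N(\theta_s,\Sigma_s)$ the scalar $X_s^\top(\theta-\theta_s)$ is Gaussian with variance at most $3\eta/(2\lambda)$, so the prescribed $\lambda\gtrsim d\eta R^2$ makes $\alpha$ effectively a constant close to one, which I would absorb into a slightly inflated multiplier on $M_s$. The standard Gaussian identity $\int e^{-\langle g_s,\theta-\theta_s\rangle-\tfrac{1}{2}(\theta-\theta_s)^\top M_s(\theta-\theta_s)}\,dP_s = \det(I+\Sigma_s M_s)^{-1/2}\exp\bigl(\tfrac{1}{2}g_s^\top(\Sigma_s^{-1}+M_s)^{-1}g_s\bigr)$ then yields the closed form
\begin{align*}
m_s(P_s) \leq \ell_s(\theta_s) + \tfrac{1}{2}\ln\det(I+\Sigma_s M_s) - \tfrac{1}{2}g_s^\top(\Sigma_s^{-1}+M_s)^{-1}g_s.
\end{align*}

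\noindent\textbf{OMD optimality, telescoping, and main obstacle.} The first-order optimality of~\eqref{eq:online-estimator} (away from the boundary of $\Theta$) reads $g_s+(M_s+H_s/\eta)(\theta_{s+1}-\theta_s)=0$, which turns the form $g_s^\top(\Sigma_s^{-1}+M_s)^{-1}g_s$ into a norm of $\theta_{s+1}-\theta_s$. Combining with the matching self-concordance lower bound $\ell_s(\theta_{s+1})\geq \ell_s(\theta_s)+\langle g_s,\theta_{s+1}-\theta_s\rangle+\tfrac{1}{2}(1-o(1))\norm{\theta_{s+1}-\theta_s}^2_{M_s}$ and carrying out the matrix algebra, the mismatch between the covariance coefficient $3\eta/2$ and the OMD regularizer coefficient $1/\eta$ (a ratio of $3/2$) is what produces the coefficient $1/(3\eta)$ in front of $\norm{\theta_{s+1}-\theta_s}^2_{H_s}$. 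Summing over $s$, the log-determinant terms telescope via $H_{s+1}=H_s+\nabla^2\ell_s(\theta_{s+1})$ together with a self-concordance comparison of $\nabla^2\ell_s(\theta_s)$ and $\nabla^2\ell_s(\theta_{s+1})$, producing $\tfrac{3}{2}d\eta\ln(1+C_\mu t/(\lambda g(\tau)))$ after invoking Assumption~\ref{assum:kappa}. The main obstacle lies in the first step: since the self-concordance correction $\alpha(\cdot)$ grows like $e^{R|\cdot|}$, integrating the quadratic upper bound over all of $\R^d$ is not directly valid. I plan to handle this either by truncating $P_s$ to a high-probability ball (and separately bounding the tail contribution) or by working with the Gaussian moment $\E[e^{R|X_s^\top(\theta-\theta_s)|}]$, which is finite precisely because $\lambda$ is large. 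The prescribed choice $\lambda=2\max\{7d\eta R^2,\max\{3\eta RS,1\}C_\mu/g(\tau)\}$ appears tuned so that this multiplicative inflation is benign and the final constants align with the stated bound.
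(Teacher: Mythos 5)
Your route---a pointwise quadratic surrogate for $\ell_s$, a closed-form Gaussian integral, and the OMD stationarity condition---is genuinely different from the paper's, but it has gaps that are not merely technical. (1) The central integration step is invalid as written: the self-concordance inflation factor $\alpha(X_s^\top(\theta-\theta_s))$ is unbounded over $\R^d$, so it cannot be ``absorbed into a slightly inflated multiplier on $M_s$''; you acknowledge this but the proposed repairs are not carried out, and the truncation route reintroduces a truncated-Gaussian normalization factor whose control is exactly the delicate point (compare the issue the paper flags in Appendix~\ref{appendix:more-discussion}), while the moment route destroys the closed form you rely on. (2) You invoke the unconstrained first-order condition $g_s+(M_s+H_s/\eta)(\theta_{s+1}-\theta_s)=0$, but the update~\eqref{eq:online-estimator} is a constrained minimization over the ball $\Theta$; when the constraint is active only a variational inequality holds, and the conversion of $g_s^\top(\Sigma_s^{-1}+M_s)^{-1}g_s$ into a norm of $\theta_{s+1}-\theta_s$ breaks down. (3) The constant $\tfrac{1}{3\eta}$ is not substantiated: the ``matching lower bound'' $\ell_s(\theta_{s+1})\ge\ell_s(\theta_s)+\langle g_s,\theta_{s+1}-\theta_s\rangle+\tfrac12(1-o(1))\norm{\theta_{s+1}-\theta_s}^2_{M_s}$ is unavailable because $\theta_{s+1}-\theta_s$ can have norm of order $S$, and self-concordance only yields a constant degradation factor of order $1/(2+2RS)$, measured moreover against the Hessian at the other endpoint (cf.\ Lemma~\ref{lem:self-concordant} and its use in Lemma~\ref{lem:regret-to-set}). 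Redoing your algebra with these correct factors leaves an uncancelled term of order $\norm{\theta_{s+1}-\theta_s}^2_{\nabla^2\ell_s(\theta_s)}$, i.e., a coefficient strictly larger than $\tfrac{1}{3\eta}$; this is not benign, since the proof of Theorem~\ref{thm:confidence-set} needs exactly $\tfrac{1}{3\eta}$ so that, after multiplying by $2\eta$, the leftover $-\tfrac13\sum_s\norm{\theta_s-\theta_{s+1}}^2_{H_s}$ can absorb the gradient-approximation error under $\lambda\ge 6\eta RSC_{\mu}/g(\tau)$.

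For contrast, the paper's proof of Lemma~\ref{lem:stability-mix-loss} needs neither a closed-form evaluation of $\E_{P_s}[e^{-\ell_s}]$ nor any optimality condition of the update. It uses the variational identity that the mix loss is the convex conjugate of the KL divergence (Lemma~\ref{lem:mixability-OMD}), so $m_s(P_s)\le\E_{\theta\sim Q_s}[\ell_s(\theta)]+\KL{Q_s}{P_s}$ for \emph{any} $Q_s$, and chooses $Q_s=\N(\theta_{s+1},\tfrac{3\eta}{2}H_s^{-1})$, centered at the look-ahead iterate with the same covariance as $P_s$. Symmetry of $Q_s$ kills the linear term, giving $\E_{Q_s}[\ell_s]=\ell_s(\theta_{s+1})+\E_{Q_s}[\D_{\ell_s}(\theta,\theta_{s+1})]$; the Bregman remainder is bounded \emph{in expectation} via self-concordance, Cauchy--Schwarz, the sub-Gaussian moment bound of Lemma~\ref{lem:integral-exp} (this is where $\lambda\gtrsim d\eta R^2$ enters and where the exponential factor you struggled with is tamed, precisely because it sits inside an expectation rather than as a pointwise bound), and a fourth-moment/trace computation telescoped into log-determinants (Lemma~\ref{lem:approximation-error}); finally, the KL between the two Gaussians equals $\tfrac{1}{3\eta}\norm{\theta_{s+1}-\theta_s}^2_{H_s}$ exactly, which is the sole source of that coefficient. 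If you wish to salvage your approach, the cleanest repair is to replace the surrogate-plus-closed-form-integral step by this variational step.
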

Lemma~\ref{lem:stability-mix-loss-informal} establishes a connection between the mix loss and the the cumulative loss of the ``look-ahead'' OMD estimator, where the loss of $\theta_{s+1}$ is measured over the $\ell_s$. A similar result for the logistic loss can be extracted from the proof of Lemma 14 in~\cite{NeurIPS'23:MLogB}. Here, we generalize this result to the GLB setting. Moreover, our proof simplifies the analysis in~\citep{NeurIPS'23:MLogB} by noticing that the mix loss can be interpreted as the convex conjugate of the KL divergence. The formal statement and complete proof are provided in Lemma~\ref{lem:stability-mix-loss} in Appendix~\ref{appendix:proof-method}.

\label{sec:technical-discussion}
\textbf{More Technical Comparisons.} Our analysis of the inverse regret is closely connected to prior work that bounds the cumulative negative log likelihood $L_t(\theta_*) = \sum_{s=1}^t \ell_s(\theta_*)$
using Ville's inequality, a technique that traces back to \citet{Robbins'1970}.
Most existing approaches aim to bound $L_t(\theta_*)$ with the loss of the MLE estimator, whereas our analysis focuses on the OMD estimator.
This fundamental difference leads to a distinct construction of the intermediate term used in the decomposition.

In the context of GLB, our Lemma~\ref{lem:tail-inequality-informal} resembles Lemma~3.3 of \citet{NeurIPS'24:GLB-improved-kappa}, as both apply Ville's inequality. However, their intermediate term $\mathbb{E}_{\theta \sim P_t}[L_t(\theta)]$ is built upon a distribution $P_t$ that is fixed across all individual functions $\{\ell_s\}_{s=1}^{t-1}$, making it naturally aligned with the MLE estimator but does not readily adapt to changing estimator. In contrast, our OMD estimator evolves with each individual function. Defining the intermediate term as the mix loss with time-varying $P_s$ thus provides the flexibility needed to track this changing comparator. Very recently, two concurrent works~\citep{arXiv'25:confidence-sequential-likelihood-mixing,arXiv'25:GLM-confidence-sequence} have also employed the mix loss to derive confidence sets. Our Lemma~\ref{lem:tail-inequality-informal} corresponds to Proposition~2.1 of \citet{arXiv'25:GLM-confidence-sequence}, and our mix loss aligns with the sequential likelihood mixing technique of \citet{arXiv'25:confidence-sequential-likelihood-mixing}. The key distinction lies in the choice of estimator -- MLE versus OMD -- which leads to a fundamentally different specification of the mix loss and necessitates a tailored analysis. In their case, $P_s$ is defined as the outcome of the continuous Hedge algorithm, and the gap between the mix loss and the loss of any single comparator (including the MLE) is bounded using a standard regret guarantee. In contrast, our analysis must dynamically track the evolving OMD estimator, motivating our choice of $P_s$ as a Gaussian distribution with a time-varying mean $\{\theta_s\}_{s=1}^t$. Lemma~\ref{lem:stability-mix-loss-informal} is then used to quantify the resulting loss gap.
  \section{Experiment}
\label{sec:experiment}
This section evaluates the proposed method on two representative GLB problems: logistic bandits ($\mu(z) = 1/(1+e^{-z})$) with bounded rewards, and Poisson bandits ($\mu(z) = e^z$), which pose a distinct challenge as an unbounded GLB setting. We also  conduct experiments on real data from the Covertype dataset~\citep{covertype}, with more detailed results provided in Appendix~\ref{appendix:experiment}. 

\noindent\textbf{Compared Methods.} Four GLBs algorithms are compared, including  \texttt{GLM-UCB}~\citep{NIPS'10:GLB}, \texttt{GLOC}~\citep{conf/nips/JunBNW17}, \texttt{RS-GLinCB}~\citep{NeurIPS'24:GLB-limited-adaptivity}, and \texttt{OFUGLB}~\citep{NeurIPS'24:GLB-improved-kappa}. For logistic bandits, we further include two specialized algorithms: an MLE-based method with nearly optimal regret \texttt{OFULog-r}~\citep{conf/aistats/AbeilleFC21}, and a jointly efficient method \texttt{ECOLog}~\citep{conf/aistats/FauryAJC22}. We do not include \texttt{OFUL-MLogB}~\citep{NeurIPS'23:MLogB} since its confidence set is larger than that of ours, hence has a larger regret bound. More details of the baselines are provided in Appendix~\ref{appendix:experiment}. All experiments are conducted over 10 trials, and we report the average regret and running time.

\begin{figure}[!t]
    \centering
    \begin{subfigure}[b]{0.32\linewidth}
        \centering
        \includegraphics[width=\linewidth]{./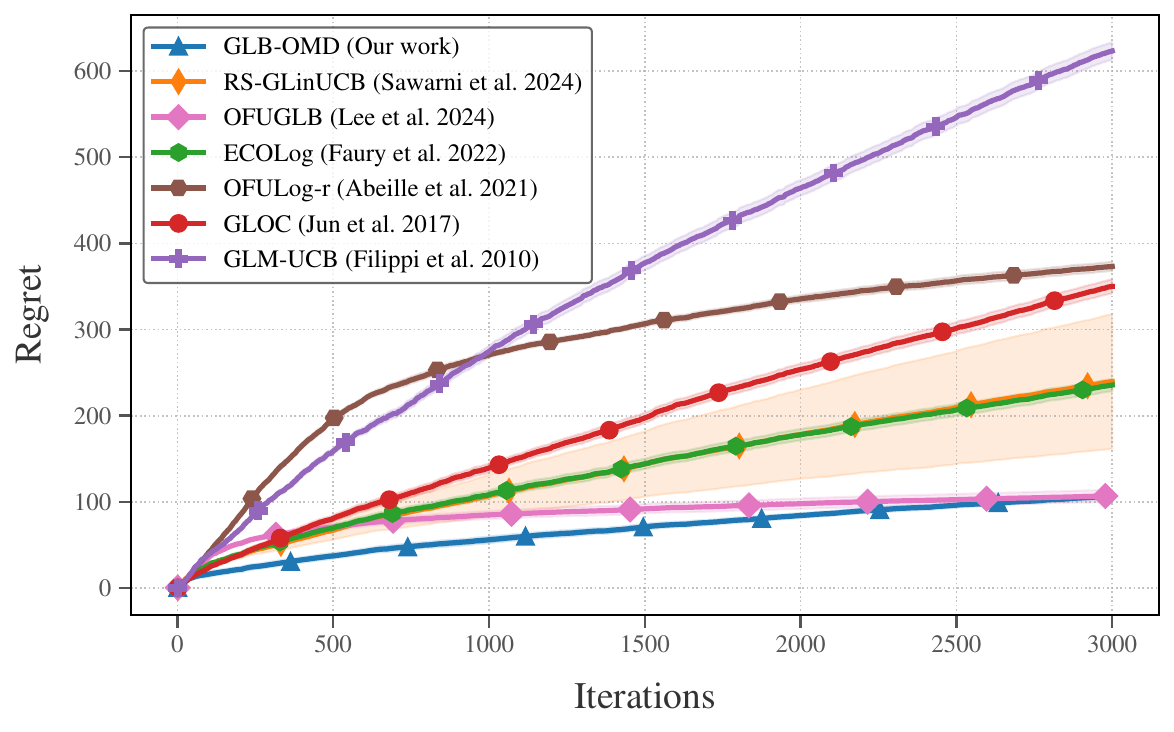}
        \caption{Logistic Bandit ($S=3$)}
        \label{fig:logistic3}
    \end{subfigure}
    \hfill
    \begin{subfigure}[b]{0.32\linewidth}
        \centering
        \includegraphics[width=\linewidth]{./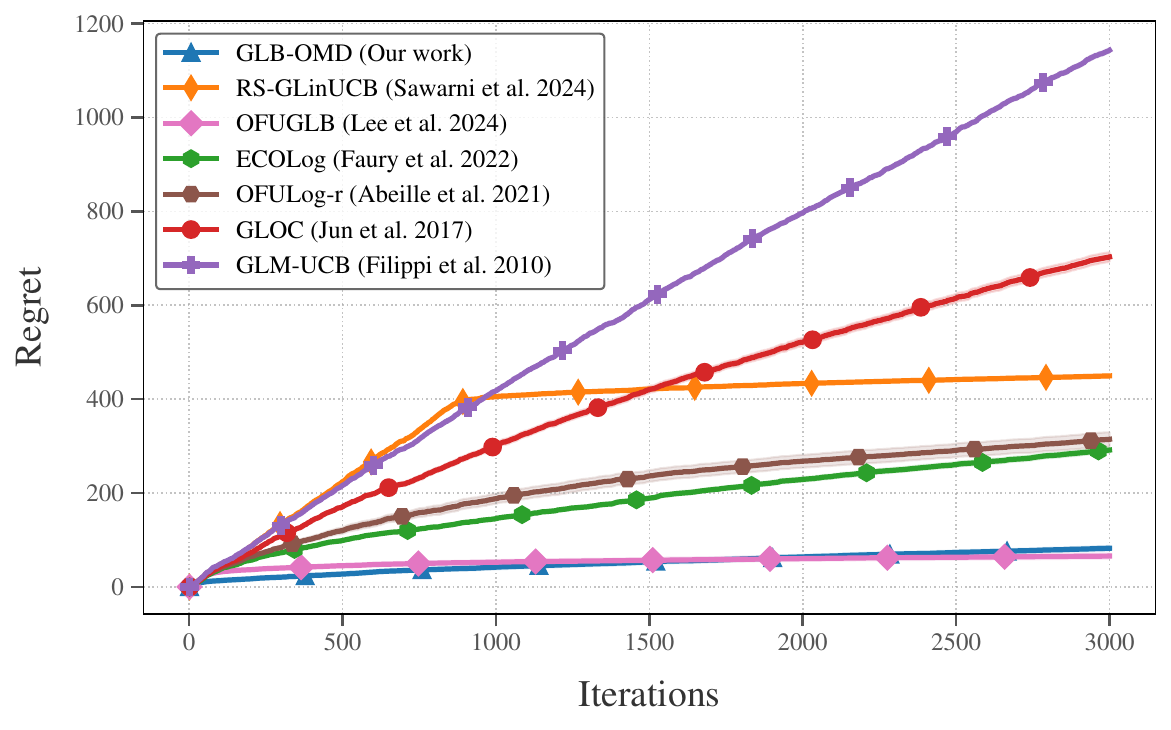}
        \caption{Logistic Bandit ($S=5$)}
        \label{fig:logistic5}
    \end{subfigure}
    \hfill
    \begin{subfigure}[b]{0.32\linewidth}
        \centering
        \raisebox{0.2cm}{
        \includegraphics[width=\linewidth]{./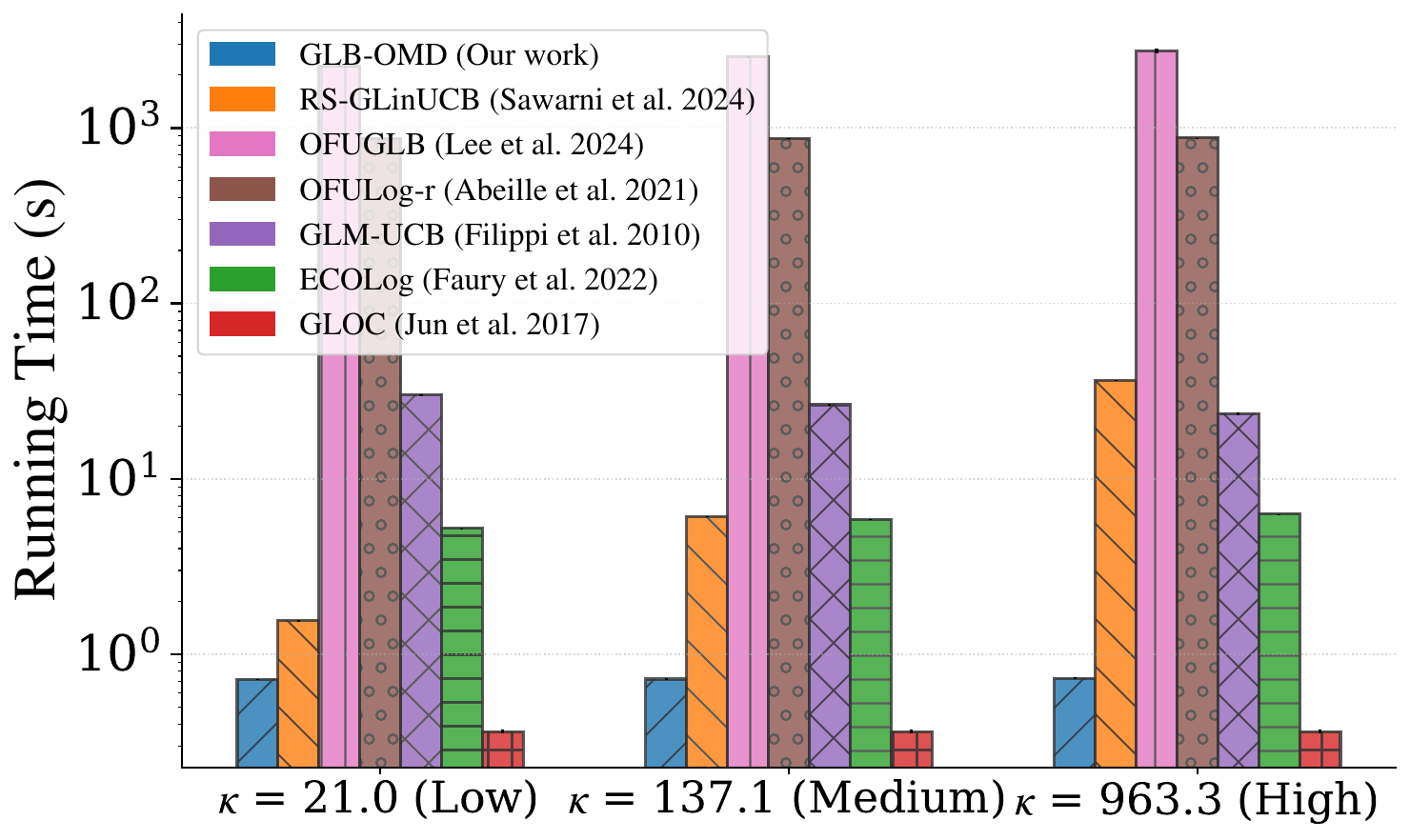}
        }
        \caption{Running time for Logistic Bandit}
        \label{fig:time_logistic}
    \end{subfigure}
    \vspace{-1mm}
    \caption{Regret and running time comparison of different algorithms on logistic bandits.}
    \vspace{-2mm}
    \label{fig:logistic_bandit_overall}
\end{figure}

\begin{figure}[!t]
    \centering
    \begin{subfigure}[b]{0.32\linewidth}
        \centering
    \includegraphics[width=\textwidth]{./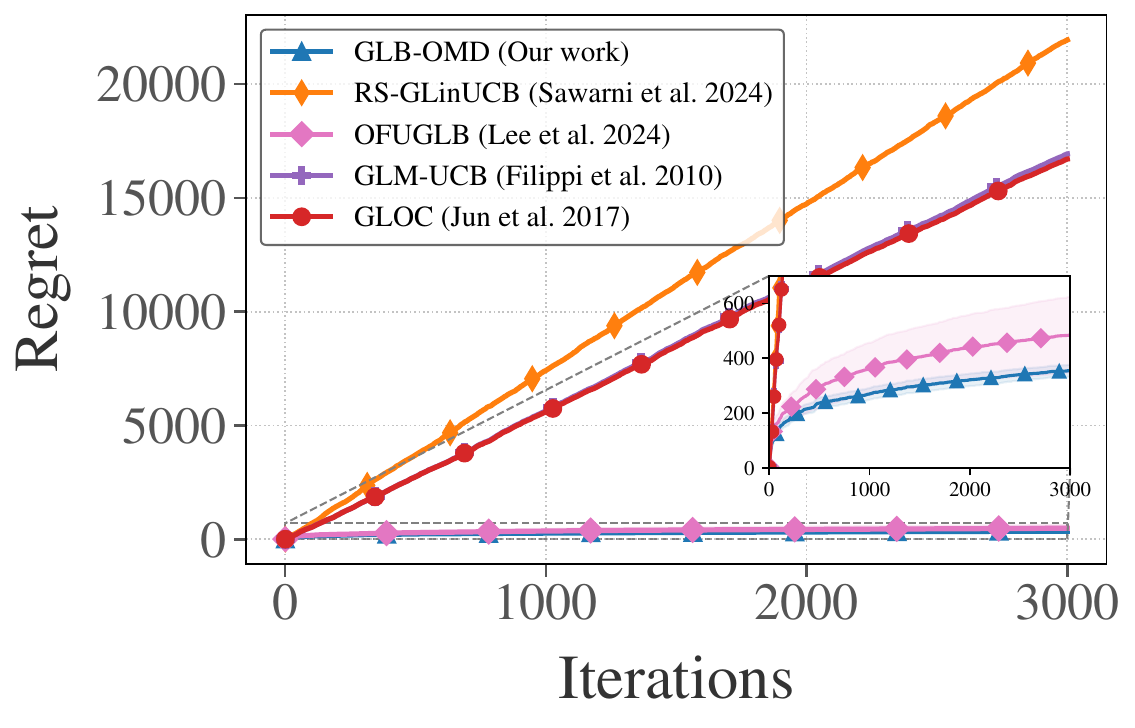} 
    \caption{Poisson Bandit ($S=3$)} 
        \label{fig:poisson3}
    \end{subfigure}
    \hfill
    \begin{subfigure}[b]{0.32\linewidth}
        \centering
    \includegraphics[width=\textwidth]{./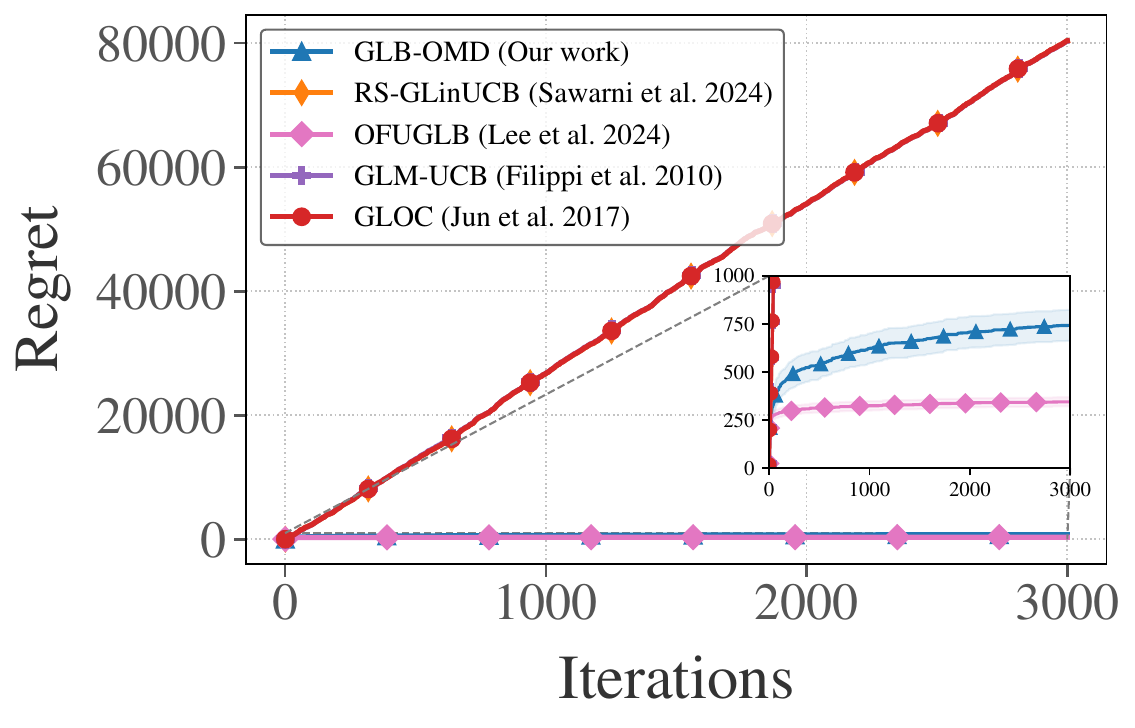} 
    \caption{Poisson Bandit ($S=5$)} 
        \label{fig:poisson4}
    \end{subfigure}
    \hfill
    \begin{subfigure}[b]{0.32\linewidth}
        \centering
        \raisebox{0.15cm}{
        \includegraphics[width=\textwidth]{./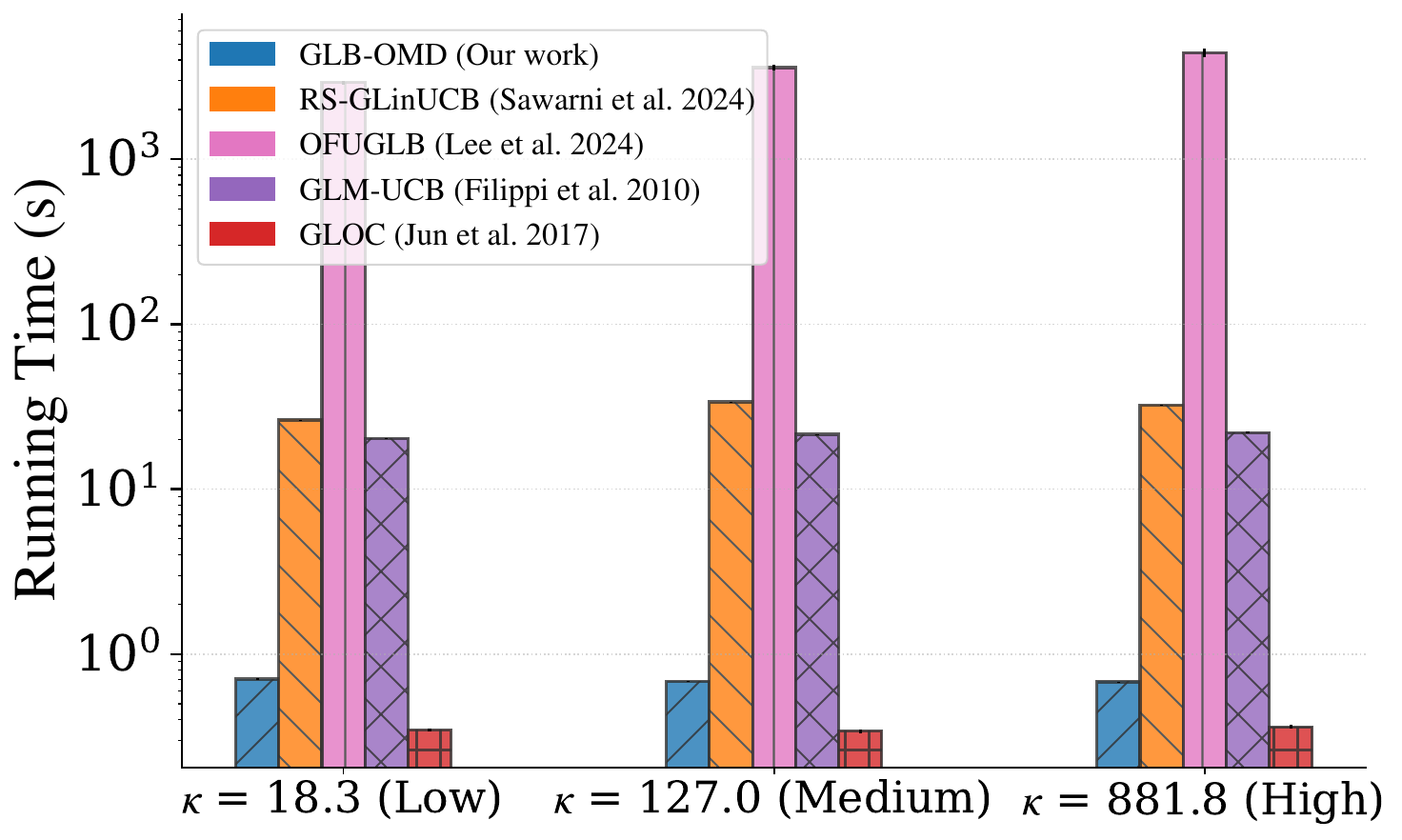}
        }
        \caption{Running time for Poisson Bandit} 
        \label{fig:time_poisson}
    \end{subfigure}
    \vspace{-1mm}
    \caption{Regret and running time comparison of different algorithms on Poisson bandits.}
    \vspace{-3mm}
    \label{fig:poisson_bandit_overall}
\end{figure}

\noindent\textbf{Results on Logistic Bandits.} We conduct experiments under different configurations of $S$. The underlying parameter $\theta_*$ is sampled from a $d$-dimensional sphere with radius $S = \{3, 5, 7\}$, corresponding to $\kappa = 21$, $137$, and $963$, respectively. Figure~\ref{fig:logistic_bandit_overall} reports the results. Among all methods, \texttt{GLOC} is the fastest but exhibits relatively large regret. \texttt{OFUGLB} attains the lowest regret due to its improved dependence on $\kappa$ and $S$, but as an MLE-based method, it incurs the highest computation cost. Our method strikes a favorable balance. Compared to \texttt{OFUGLB}, it achieves substantial cost savings with only a modest degradation in regret. Compared with \texttt{ECOLog} and \texttt{RS-GLinCB}, our method achieves comparable and even slightly better performance with improved computation cost. Moreover, it maintains a constant per-round cost across all regimes of $\kappa$, whereas the cost of \texttt{RS-GLinCB} increases with $\kappa$, as its rare-update strategy results in an update frequency that scales with $\kappa$.

\noindent\textbf{Results on Poisson Bandits.} We set the norm of the true parameter as $S \in \{3,5,7\}$, corresponding to $\kappa \approx 18$, $127$ and $882$. Poisson bandits have unbounded rewards, whereas \texttt{GLM-UCB} and \texttt{RS-GLinCB} require a predefined upper bound on the maximum reward as a parameter. We set it as 100, assuming rewards are effectively bounded with high probability. As shown in Figure~\ref{fig:poisson_bandit_overall}, our method reduces the computational cost of \texttt{OFUGLB} by roughly 1000 times, with only a modest increase in regret.

  \section{Conclusion}
\label{sec:conclusion}
This paper proposed a new method for the GLB problem that achieves a nearly optimal regret bound of $\mathcal{O}(\log T \sqrt{T/\kappa_*})$ with $\mathcal{O}(1)$ time and space complexities per round. Our approach builds on a novel analysis of the OMD estimator using the mix loss, enabling a tight confidence set construction for arm selection. A natural extension is to incorporate a warm-up strategy, as in prior work, to improve the dependence on $S$ and obtain $\kappa_{\mathcal{X}}$-based bounds. It also remains open whether geometry-aware bounds for GLBs can be achieved similar to those in logistic bandits~\citep{conf/aistats/AbeilleFC21}. Other directions include relaxing the self-concordance assumption toward weaker conditions~\citep{NeurIPS'24:self-concordance-bandits}, or improving $d$-dependence in the finite-arm setting~\citep{ICML'21:logistic-bandit-confidence,AAAI'22:experimental-design-logistic-bandits}.
  
  \begin{ack}
    Peng Zhao was supported by NSFC (62206125) and the Xiaomi Foundation. MS was supported by the Institute for AI and Beyond, UTokyo. The authors thank the reviewers for their valuable feedback and for bringing to our attention the recent works~\citep{arXiv'25:confidence-sequential-likelihood-mixing,arXiv'25:GLM-confidence-sequence}.
  \end{ack}
  
  \bibliographystyle{plainnat}
  \bibliography{myBib}
  
  \newpage
  \appendix
  % !TEX root = ../../main.tex
\section{Proof of Theorem~\ref{thm:confidence-set}}
\label{appendix:proof-method}
This section presents the proof of Theorem~\ref{thm:confidence-set}. We first provide the main proof, followed by the key lemmas used in the proof.

\subsection{Main Proof}
\begin{proof}[Proof of Theorem~\ref{thm:confidence-set}]
This part provides the proof of Theorem~\ref{thm:confidence-set}. By Lemma~\ref{lem:regret-to-set}, when setting $\eta = 1+RS$, we can upper bound the estimation error of the online estimator by the ``inverse regret'':
\begin{align}
    \lVert \theta_{t+1}-\theta_{*} \rVert^2_{H_{t}}  \leq{}& 2\eta\bigg( \underbrace{\sum_{s=1}^t \ell_{t}(\theta_{*}) - \sum_{s=1}^t \ell_{s}(\theta_{s+1}) }_{\mathtt{inverse~regret}}\bigg) + 4\lambda S^2\notag \\
    {}& + \frac{2\eta RS\uppermu}{g(\tau)} \sum_{s=1}^t \norm{\theta_s -\theta_{s+1}}_2^2 - \sum_{s=1}^t \Vert \theta_s - \theta_{s+1}\Vert^2_{H_s}. \label{eq:proof-confidence-set-1}
\end{align}
Then, we can further decompose the ``inverse regret term'' into two parts:
\begin{align}
    \sum_{s=t}^t \ell_{t}(\theta_{*}) - \sum_{s=1}^t \ell_{s}(\theta_{s+1}) = \underbrace{\sum_{s=1}^t \ell_{s}(\theta_{*}) - \sum_{s=1}^t m_s(P_s)}_{\mathtt{term~(a)}} + \underbrace{\sum_{s=1}^t m_s(P_s) - \sum_{s=1}^t  \ell_{s}(\theta_{s+1})}_{\mathtt{term~(b)}}. \label{eq:proof-confidence-set-2}
\end{align}
In the above, we define $P_s = \N(\theta_s, \alpha H_s^{-1} )$ as a $d$-dimensional multivariate Gaussian distribution with mean $\theta_s\in\R^d$ and covariance matrix $cH_s^{-1}\in\R^{d\times d}$, where $\alpha>0$ is a constant to be specified latter. The function $m_s:P\mapsto \R$ that maps the distribution $P$ to a real number value is defined by 
\begin{align*}
    m_s(P_s) = -\ln \left( \E_{\theta\sim P_s}\big[\exp\big(- \ell_s(\theta)\big)\big] \right).
\end{align*}
 We refer to the function $m_s$ as the ``mix loss'' because it mixes the loss with respect to the distribution $P_s$. This mixing has been found useful for achieving fast rates in prediction with expert advice and online optimization problems~\citep{02:vovk-mixability}. Here, we show that the mix loss plays a crucial role in obtaining a jointly efficient online confidence set.

Given $P_s$ is a Gaussian distribution with mean $\theta_s$ and $\alpha H_s^{-1}$, it is $\F_{s}$-measurable. Then, Lemma~\ref{lem:tail-inequality} implies that for any $\delta\in(0,1]$, we have
\begin{align}
    \mathtt{term~(a)} \leq \log\left(\frac{1}{\delta}\right),\label{eq:proof-confidence-set-3}
\end{align}
with probability at least $1-\delta$. Next, we proceed to analyze $\mathtt{term~(b)}$. Under the condition that $\lambda \geq 14d\eta R^2$, Lemma~\ref{lem:stability-mix-loss} with $\alpha = \frac{3}{2}\eta $ shows that
\begin{align}
    \mathtt{term~(b)} \leq \frac{1}{3\eta} \sum_{s=1}^t\norm{\theta_{s+1}-\theta_s}^2_{H_s} + d (3\eta + \frac{1}{2})\ln \bigg(1 + \frac{\uppermu t}{\lambda g(\tau)}\bigg).\label{eq:proof-confidence-set-4}
\end{align}
Plugging~\eqref{eq:proof-confidence-set-2},~\eqref{eq:proof-confidence-set-3},~\eqref{eq:proof-confidence-set-4} into~\eqref{eq:proof-confidence-set-1}, we obtain
\begin{align*}
    \lVert \theta_{t+1}-\theta_{*} \rVert^2_{H_{t}}  \leq{}& 4\lambda S^2 + 2\eta \log \Big(\frac{1}{\delta}\Big) + d \eta (6\eta + 1)\ln \bigg(1 + \frac{\uppermu t}{\lambda g(\tau)}\bigg)\\
    {}& + \frac{2\eta RS\uppermu}{g(\tau)} \sum_{s=1}^t \norm{\theta_s -\theta_{s+1}}_2^2 - \frac{1}{3}\sum_{s=1}^t \Vert \theta_s - \theta_{s+1}\Vert^2_{H_s}\\
    \leq{}& 4\lambda S^2 + 2\eta \log \Big(\frac{1}{\delta}\Big) + d \eta (6\eta + 1)\ln \bigg(1 + \frac{\uppermu t}{\lambda g(\tau)}\bigg),
\end{align*}
where the last inequality holds due to the condition $\lambda \geq 6\eta RS \uppermu/g(\tau)$. 
\end{proof}

%%%%%%%%%%%%%%%%%%%%%%%%%%%%%%%%%%%%%%%%%%%%%%%%%%%%%%%%%%%%
%%%%%%%%%%%%%%%%%%%%%%%%%%%%%%%%%%%%%%%%%%%%%%%%%%%%%%%%%%%%
% proof for key lemmas
%%%%%%%%%%%%%%%%%%%%%%%%%%%%%%%%%%%%%%%%%%%%%%%%%%%%%%%%%%%%
%%%%%%%%%%%%%%%%%%%%%%%%%%%%%%%%%%%%%%%%%%%%%%%%%%%%%%%%%%%%

\subsection{Useful Lemmas}
\label{appendix:section-useful-lemmas}
This section presents several key lemmas used in the proof of Theorem~\ref{thm:confidence-set}. 
\begin{myLemma}
    \label{lem:regret-to-set}
    Under Assumption~\ref{assum:bounded-domain} and~\ref{assum:self-concordant} and setting $\eta = 1+RS$, then for any $\lambda >0$, the online estimator returned by~\eqref{eq:online-estimator} satisfies
    \begin{align*}
        \lVert \theta_{t+1}-\theta_{*} \rVert^2_{H_{t+1}}  \leq{}& (2+2RS)\left( \sum_{s=1}^t \ell_{t}(\theta_{*}) - \sum_{s=1}^t \ell_{s}(\theta_{s+1}) \right) + 4\lambda S^2\\
        {}& + \frac{\uppermu(2+2RS)}{g(\tau)} \sum_{s=1}^t \norm{\theta_s -\theta_{s+1}}_2^2 - \sum_{s=1}^t \Vert \theta_s - \theta_{s+1}\Vert^2_{H_s}.  
    \end{align*}
    Furthermore, if $\lambda \geq 6\eta \uppermu RS/g(\tau)$, we can further have
    \begin{align*}
        \lVert \theta_{t+1}-\theta_{*} \rVert^2_{H_{t+1}}  \leq{}& (2+2RS)\left( \sum_{s=1}^t \ell_{t}(\theta_{*}) - \sum_{s=1}^t \ell_{s}(\theta_{s+1}) \right)  - \frac{2}{3} \sum_{s=1}^t \Vert \theta_s - \theta_{s+1}\Vert^2_{H_s} + 4\lambda S^2.
    \end{align*}
    \end{myLemma}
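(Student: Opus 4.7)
The plan is to combine Lemma~\ref{lem:OMD-general} with a Taylor-type comparison that trades the gradient of the quadratic surrogate $\tilde{\ell}_s$ for the true loss $\ell_s$, and then telescope over $s=1,\ldots,t$. First, I would apply Lemma~\ref{lem:OMD-general} to the OMD update~\eqref{eq:online-estimator} with $f=\tilde{\ell}_s$, $A=H_s$, and comparator $u=\theta_*\in\Theta$ (which holds by Assumption~\ref{assum:bounded-domain}), obtaining the per-round inequality
\begin{align*}
\norm{\theta_{s+1}-\theta_*}_{H_s}^2 \leq 2\eta\,\langle\nabla\tilde{\ell}_s(\theta_{s+1}),\theta_*-\theta_{s+1}\rangle + \norm{\theta_s-\theta_*}_{H_s}^2 - \norm{\theta_s-\theta_{s+1}}_{H_s}^2.
\end{align*}

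The main obstacle is to convert $\langle\nabla\tilde{\ell}_s(\theta_{s+1}),\theta_*-\theta_{s+1}\rangle$ into the loss gap $\ell_s(\theta_*)-\ell_s(\theta_{s+1})$ with only a $\norm{\theta_{s+1}-\theta_s}_2^2$ remainder. Using the explicit form $\nabla\tilde{\ell}_s(\theta_{s+1})=\nabla\ell_s(\theta_s)+\nabla^2\ell_s(\theta_s)(\theta_{s+1}-\theta_s)$ together with the fundamental theorem of calculus applied to $\nabla\ell_s(\theta_{s+1})-\nabla\ell_s(\theta_s)$, I express the surrogate-to-true gradient deviation as
\begin{align*}
\nabla\tilde{\ell}_s(\theta_{s+1}) - \nabla\ell_s(\theta_{s+1}) = \int_0^1 \bigl[\nabla^2\ell_s(\theta_s) - \nabla^2\ell_s(\theta_s + u(\theta_{s+1}-\theta_s))\bigr](\theta_{s+1}-\theta_s)\,du.
\end{align*}
Since the GLB Hessian equals $\mu'(X_s^\top\theta)/g(\tau)\cdot X_sX_s^\top$ and Assumption~\ref{assum:self-concordant} combined with $\mu'\leq C_\mu$ (Assumption~\ref{assum:kappa}) makes $\mu'$ an $RC_\mu$-Lipschitz function, plugging this in together with $\norm{X_s}_2\leq 1$ and the bounded-domain estimate $\lvert X_s^\top(\theta_*-\theta_{s+1})\rvert\leq 2S$ reduces the inner product of the deviation with $\theta_*-\theta_{s+1}$ to a term of order $(1+RS)\frac{C_\mu}{g(\tau)}\norm{\theta_{s+1}-\theta_s}_2^2$ after careful bookkeeping. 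Combined with the convexity inequality $\langle\nabla\ell_s(\theta_{s+1}),\theta_*-\theta_{s+1}\rangle\leq\ell_s(\theta_*)-\ell_s(\theta_{s+1})$, this produces the surrogate-to-loss bridge whose $2\eta$-scaled remainder matches the coefficient $C_\mu(2+2RS)/g(\tau)$ in the lemma statement.

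Plugging the bridge into the per-round bound and telescoping over $s=1,\ldots,t$ is then mostly bookkeeping, but delicate: I would track the identity $H_{s+1}=H_s+\nabla^2\ell_s(\theta_{s+1})$ so that the $H_s$-weighted norm on the left of round $s$ combines with the $H_s$-weighted norm of $\theta_s-\theta_*$ into the target $\norm{\theta_{t+1}-\theta_*}_{H_{t+1}}^2$, while preserving the negative stability sum $-\sum_s\norm{\theta_s-\theta_{s+1}}_{H_s}^2$ on the right. The initial term is bounded by $\norm{\theta_1-\theta_*}_{H_1}^2=\lambda\norm{\theta_1-\theta_*}_2^2\leq 4\lambda S^2$ via Assumption~\ref{assum:bounded-domain}, which completes the first inequality.

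For the second inequality, I would absorb the positive correction $\frac{C_\mu(2+2RS)}{g(\tau)}\sum_s\norm{\theta_s-\theta_{s+1}}_2^2$ into the negative stability term using $H_s\succeq\lambda I_d$, so that $\norm{\theta_s-\theta_{s+1}}_2^2\leq\lambda^{-1}\norm{\theta_s-\theta_{s+1}}_{H_s}^2$; the choice $\lambda\geq 6\eta C_\mu RS/g(\tau)$ then suppresses the correction to at most $\tfrac{1}{3}\norm{\theta_s-\theta_{s+1}}_{H_s}^2$, leaving the promised $-\tfrac{2}{3}\norm{\theta_s-\theta_{s+1}}_{H_s}^2$. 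The most delicate parts will be the telescoping step and the precise matching of the $(2+2RS)$ constant in the surrogate-to-loss bridge; once those are in hand, the Lipschitz-from-self-concordance argument and the final absorption via $\lambda$ are direct consequences of Assumptions~\ref{assum:bounded-domain}--\ref{assum:self-concordant}.
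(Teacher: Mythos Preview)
There is a genuine gap in your telescoping step. After applying Lemma~\ref{lem:OMD-general} and your surrogate-to-true gradient bridge, the per-round inequality reads
\[
\norm{\theta_{s+1}-\theta_*}_{H_s}^2 \;\leq\; 2\eta\bigl(\ell_s(\theta_*)-\ell_s(\theta_{s+1})\bigr) + (\text{remainder}) + \norm{\theta_s-\theta_*}_{H_s}^2 - \norm{\theta_s-\theta_{s+1}}_{H_s}^2.
\]
Because $H_{s+1}=H_s+\nabla^2\ell_s(\theta_{s+1})$, reconstructing $\norm{\theta_{t+1}-\theta_*}_{H_{t+1}}^2$ from the left-hand sides leaves behind an uncancelled \emph{positive} contribution $\sum_{s=1}^t\norm{\theta_{s+1}-\theta_*}^2_{\nabla^2\ell_s(\theta_{s+1})}$ on the right. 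This term can grow linearly in $t$ and is not controlled by anything in your outline; the phrase ``combines into the target $\norm{\theta_{t+1}-\theta_*}_{H_{t+1}}^2$'' hides exactly this deficit.

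The culprit is your use of the plain convexity inequality $\langle\nabla\ell_s(\theta_{s+1}),\theta_*-\theta_{s+1}\rangle\leq\ell_s(\theta_*)-\ell_s(\theta_{s+1})$: it discards precisely the curvature you need. The paper instead keeps the exact integral-remainder Taylor identity
\[
\langle\nabla\ell_s(\theta_{s+1}),\theta_*-\theta_{s+1}\rangle \;=\; \ell_s(\theta_*)-\ell_s(\theta_{s+1}) - \norm{\theta_{s+1}-\theta_*}^2_{\tilde h_s},
\qquad \tilde h_s=\int_0^1(1-v)\,\nabla^2\ell_s\bigl(\theta_{s+1}+v(\theta_*-\theta_{s+1})\bigr)\,dv,
\]
and then invokes self-concordance (Lemma~\ref{lem:self-concordant}) to obtain the lower bound $\tilde h_s\succeq\frac{1}{2+2RS}\nabla^2\ell_s(\theta_{s+1})$. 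After multiplying by $2\eta=2+2RS$ this yields an extra $-\norm{\theta_{s+1}-\theta_*}^2_{\nabla^2\ell_s(\theta_{s+1})}$ per round, which exactly closes the telescoping gap; this is the reason behind the choice $\eta=1+RS$. Your handling of the surrogate-to-true gradient deviation via the integral/Lipschitz argument is essentially the same as the paper's Lagrange-remainder bound and gives the correct $\frac{RSC_\mu}{g(\tau)}\norm{\theta_s-\theta_{s+1}}_2^2$ per round (your stated constant $(1+RS)C_\mu/g(\tau)$ looks slightly off, but this is minor), and your $\lambda$-absorption for the second inequality is fine.
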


\begin{proof}[Proof of Lemma~\ref{lem:regret-to-set}]
    We begin by using the integral formulation of Taylor’s expansion. Since $\mu$ is twice differentiable, we have  
    \begin{align}
        \ell_s(\theta_{s+1}) - \ell_s(\theta_*) = \langle \nabla \ell_s(\theta_{s+1}), \theta_{s+1}-\theta_* \rangle - \Vert \theta_{s+1}-\theta_{*}\Vert^2_{\tilde{h}_s},\label{eq:proof-regret-to-set-1}
    \end{align}
    where $\tilde{h}_s = \int_{v=0}^1 (1-v) \nabla^2 \ell_s\big(\theta_{s+1} + v(\theta_*-\theta_{s+1})\big) \, \mathrm{d}v.$ By the definition of the loss function in~\eqref{eq:loss-hessian}, we can further express the Hessian as  
    \[
    \tilde{h}_s = \frac{\tilde{\alpha}(\theta_{s+1},\theta_*,X_s)}{g(\tau)} X_sX_s^\top,
    \]
    where $\tilde{\alpha}(\theta_1,\theta_2,X_s) = \int_{0}^1 (1-v)\, \mu'\Big( X_s^\top\theta_{1} + v\, X_s^\top(\theta_2-\theta_{1})\Big) \, \mathrm{d}v.$
    
    Next, under Assumption~\ref{assum:self-concordant}, we have $|\mu''(z)|\leq R \cdot \mu'(z)$ for all $z\in[-S,S]$. Consequently, Lemma~\ref{lem:self-concordant} in Appendix~\ref{appendix:technical-lemma} implies that for any $\theta_*\in\Theta \triangleq\{\theta\in\R^d\mid \norm{\theta}_2\leq S\}$,  
    \[
 \tilde{\alpha}(\theta_{s+1},\theta_*,X_s) \geq \frac{\mu'(X_s^\top\theta_{s+1})}{2+2RS}.
    \]
    This inequality shows that  
    \begin{equation}
    \tilde{h}_s \succcurlyeq \frac{1}{2+2RS}\nabla^2 \ell_s(\theta_{s+1}) \label{eq:hessian-lower-bound},
    \end{equation}
    indicating that the Hessian $\tilde{h}_s$ is bounded from below by $\frac{1}{2+2RS}\nabla^2 \ell_s(\theta_{s+1})$ in the positive semidefinite order. Substituting~\eqref{eq:hessian-lower-bound} into~\eqref{eq:proof-regret-to-set-1} yields  
    \begin{align}
    \ell_s(\theta_{s+1}) - \ell_s(\theta_*)
     \leq{}& \langle \nabla \ell_s(\theta_{s+1}), \theta_{s+1}-\theta_* \rangle - \frac{1}{2+2RS}\Vert \theta_{s+1}-\theta_*\Vert^2_{\nabla^2 \ell_s(\theta_{s+1})}\label{eq:proof-regret-to-set-2}
    \end{align}
    Since $\theta_{s+1}$ is the optimal solution of~\eqref{eq:online-estimator}, Lemma~\ref{lem:OMD-general} with $\u = \theta_*$ implies
    \begin{align}
    {}&\left\langle \nabla \ell_s(\theta_{s+1}),\, \theta_{s+1} - \theta_* \right\rangle\notag\\
    = {}& \langle \nabla \ell_s(\theta_{s+1}) - \nabla \tilde{\ell}_s(\theta_{s+1}), \theta_{s+1}-\theta_* \rangle + \langle  \nabla \tilde{\ell}_s(\theta_{s+1}), \theta_{s+1}-\theta_* \rangle\notag\\
    \leq{}& \langle \nabla \ell_s(\theta_{s+1}) - \nabla \tilde{\ell}_s(\theta_{s+1}), \theta_{s+1}-\theta_* \rangle \notag\\
    {}& +\frac{1}{2\eta} \Bigl( \|\theta_s - \theta_*\|_{H_s}^2 - \|\theta_{s+1} - \theta_*\|_{H_s}^2 - \|\theta_s - \theta_{s+1}\|^2_{H_s} \Bigr). \label{eq:proof-regret-to-set-3}
    \end{align}
    We can further express the first term on the right-hand side of~\eqref{eq:proof-regret-to-set-3} as
    \begin{align}
        &\langle \nabla \ell_s(\theta_{s+1}) - \nabla \tilde{\ell}_s(\theta_{s+1}), \theta_{s+1}-\theta_* \rangle \notag\\
        = {}& \langle \nabla \ell_s(\theta_{s+1}) - \nabla \ell_s(\theta_{s}) - \nabla^2 \ell_s(\theta_{s})(\theta_{s+1}-\theta_{s}), \theta_{s+1}-\theta_* \rangle\notag\\
        ={}& \frac{1}{g(\tau)}\langle \mu(X_{s}^\top\theta_{s+1})\cdot X_{s} - \mu(X_{s}^\top\theta_{s})\cdot X_{s} - \mu'(X_{s}^\top\theta_{s})\cdot X_{s}X_s^\top(\theta_{s+1}-\theta_{s}), \theta_{s+1}-\theta_* \rangle\notag\\
        = {}&\frac{\mu''(X_s^\top\bm{\xi}_s)}{2 g(\tau)} \cdot \norm{\theta_s-\theta_{s+1}}^2_{X_sX_s^\top}\cdot X_s^\top(\theta_{s+1}-\theta_*)\notag\\
        \leq{}& \frac{R}{2 g(\tau)}\cdot \norm{\theta_s-\theta_{s+1}}^2_{\mu'(X_s^\top\bm{\xi}_s)X_sX_s^\top} \cdot X_s^\top(\theta_{s+1} - \theta_*)\notag\\
        \leq {}& RS \norm{\theta_s- \theta_{s+1}}^2_{\nabla^2 \ell_s(\bm{\xi}_{s})}\notag\\
        \leq {}& \frac{RS\uppermu}{g(\tau)} \norm{\theta_s- \theta_{s+1}}^2_2\label{eq:proof-additional-term},
    \end{align}
    where $\bm{\xi}_s \in \Theta$ lies on the line connecting $\theta_s$ and $\theta_{s+1}$. The first equality follows from the definition of $\nabla \tilde{\ell}_s(\theta_{s+1})$ and the Taylor expansion with Lagrange's remainder. The first inequality uses the self-concordance property of the loss function, which ensures that $\mu''(z) \leq R \mu'(z)$ and $\mu'(z) \leq \uppermu$ for all $z \in [-S, S]$. The last inequality is due to $\norm{X_s}_2\leq 1$ and $\norm{\theta_s- \theta_{s+1}}_2\leq 2S$.
    
    Combining~\eqref{eq:proof-regret-to-set-3},~\eqref{eq:proof-additional-term} with~\eqref{eq:proof-regret-to-set-2}, setting $\eta = 1+RS $ and taking the summation over $t\in[T]$ yield
    \begin{align*}
        &\sum_{s=1}^t \ell_s(\theta_{s+1}) - \sum_{s=1}^t \ell_s(\theta_*) \\
        \leq{}& \frac{1}{2+2RS} \left(\norm{\theta_1-\theta_*}^2_{H_1} - \norm{\theta_{t+1}-\theta_*}^2_{H_{t+1}} - \sum_{s=1}^t \norm{\theta_s - \theta_{s+1}}^2_{H_s}\right) + \frac{RS\uppermu}{g(\tau)} \norm{\theta_s-\theta_{s+1}}_2^2.  
    \end{align*}
    We complete the proof by rearranging the terms and noticing $\norm{\theta_1-\theta_*}^2_{H_1} \leq 4\lambda S^2$.
\end{proof}

\begin{myLemma}
\label{lem:tail-inequality}
Let $\{\mathcal{F}_t\}_{t=1}^\infty$ be a filtration defined by $\mathcal{F}_t = \sigma\bigl(\{(X_s, r_s)\}_{s=1}^{t-1}\bigr)$. Let $\{P_t\}_{t=1}^\infty$ be a stochastic process such that the random variable $P_t$ is a distribution over $\mathbb{R}^d$ and is $\mathcal{F}_t$-measurable. Moreover, assume that the loss function $\ell_t$ defined by~\eqref{eq:loss-hessian} is $\mathcal{F}_{t+1}$-measurable. For any $t \geq 1$, define 
\begin{equation*}
L_t(\theta_*) =\sum_{s=1}^{t} \ell_{s}(\theta_{*})~~~\mbox{and}~~~
F_t =-\sum_{s=1}^t\ln\left( \mathbb{E}_{\theta\sim P_s}\left[e^{-\ell_{s}(\theta)}\right]\right).
\end{equation*}
Then, for any $\delta\in(0,1]$, we have 
\begin{align*}
\mathrm{Pr}\bigg[  \forall t \geq 1, L_{t}(\theta_*) \leq F_t +   \log \Big( \frac{1}{\delta} \Big)  \bigg] \geq 1- \delta.
\end{align*}
\end{myLemma}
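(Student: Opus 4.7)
The plan is to prove the anytime tail bound via the standard recipe of constructing a non-negative exponential martingale and applying Ville's maximal inequality. Concretely, I would define
\[
M_t \triangleq \exp\bigl(L_t(\theta_*) - F_t\bigr) = \prod_{s=1}^t \E_{\theta \sim P_s}\bigl[\exp(\ell_s(\theta_*) - \ell_s(\theta))\bigr],
\]
with the convention $M_0 = 1$. The entire task reduces to showing that $\{M_t\}_{t\geq 0}$ is a non-negative martingale with $\E[M_t] = 1$ adapted to the filtration $\{\F_{t+1}\}$: once this is in hand, Ville's inequality yields $\Pr[\sup_{t\geq 0} M_t \geq 1/\delta] \leq \delta$, and taking logarithms of the complementary event recovers the claimed uniform bound $L_t(\theta_*) - F_t \leq \log(1/\delta)$.

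The crux of the argument is verifying the one-step identity $\E[M_s \mid \F_s] = M_{s-1}$. Since $M_{s-1}$ and the measure $P_s$ are both $\F_s$-measurable, and $P_s$ does not depend on the fresh randomness $r_s$, Fubini's theorem reduces the martingale identity to the pointwise-in-$\theta$ claim
\[
\E\bigl[\exp\bigl(\ell_s(\theta_*) - \ell_s(\theta)\bigr) \,\big|\, \F_s\bigr] = 1 \quad \text{for each fixed } \theta \in \R^d.
\]
This is where the GLM structure~\eqref{eq:GLM} enters in an essential way. Substituting $\ell_s(\theta) = (m(X_s^\top \theta) - r_s X_s^\top \theta)/g(\tau)$, the integrand rearranges into the likelihood ratio $p(r_s \mid X_s^\top \theta, \F_s) / p(r_s \mid X_s^\top \theta_*, \F_s)$. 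Since $r_s$ is drawn from the true conditional density $p(\cdot \mid X_s^\top \theta_*, \F_s)$ and $X_s$ is $\F_s$-measurable, the conditional expectation evaluates to $\int p(r_s \mid X_s^\top \theta, \F_s)\, dr_s = 1$ by the normalization of the exponential family. Averaging this identity over $\theta \sim P_s$ preserves its value and gives the desired martingale property.

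The main obstacle is more technical than conceptual: one has to justify the interchange of expectations and ensure integrability of the inner likelihood ratio. This follows from the fact that the exponential-family normalization holds at every natural parameter in the effective domain, so the pointwise identity extends unconditionally to all $\theta$ in the support of $P_s$---including points outside $\Theta$, which matters for the downstream application where $P_s$ is taken to be a Gaussian supported on all of $\R^d$ (cf.~Lemma~\ref{lem:stability-mix-loss-informal}). Apart from this technicality, the proof is clean: the final application of Ville's inequality is intrinsically anytime-valid, which is precisely why routing through a martingale obviates the need for a union bound over $t$ and yields the sharp $\log(1/\delta)$ deviation term rather than $\log(t/\delta)$.
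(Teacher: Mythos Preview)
Your proposal is correct and matches the paper's proof essentially line for line: the paper defines the same process $M_t = \exp(L_t(\theta_*) - F_t)$, rewrites the one-step ratio as the likelihood ratio $\E_{\theta\sim P_t}[p(r_t\mid X_t^\top\theta)]/p(r_t\mid X_t^\top\theta_*)$ using the exponential-family form, integrates out $r_t$ to obtain the martingale property, and then invokes the maximal inequality (Ville's inequality, stated there as Lemma~\ref{lem:maximum-inequality}). The only cosmetic difference is the order of the Fubini step---you fix $\theta$ first and then average over $P_s$, whereas the paper keeps the $P_t$-expectation inside and swaps with the $r$-integral at the end---and your explicit remark about the support of $P_s$ extending beyond $\Theta$ is a nice point that the paper leaves implicit.
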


\begin{proof}[Proof of Lemma~\ref{lem:tail-inequality}]
Let $M_0 = 1$ and for any $t\geq 1$, we define 
\begin{align*}
    M_t = \exp(L_t(\theta_*) - F_t).
\end{align*}
To prove the lemma, it suffices to show that the sequence $\{M_t\}_{t=1}^\infty$ is a non-negative (super)-martingale; then, the maximum inequality \citet[Theorem 3.9]{lattimore2020bandit} can be applied to obtain the desired result. To verify that $\{M_t\}_{t=1}^\infty$ is a (super)martingale, we begin by defining the density function of the natural exponential family distribution as follows:
\begin{align*}
    p(r \vert z) = \exp\left(\frac{rz - m(z)}{g(\tau) }+ h(r,\tau)\right),
\end{align*}
where the function $m$, $g$ and $h$ share the same formulation as those in~\eqref{eq:GLM}. Then, for each time $t\geq 1$, we can rewrite the expression of $M_t$ as 
\begin{align}
    M_{t} = {}&\frac{\prod_{s=1}^t  E_{\theta\sim P_{s}}\left[ \exp(-\ell_{s}(\theta)) \right] }{\prod_{s=1}^t \exp\left( -\ell_{s}(\theta_{*}) \right)}
    ={} M_{t-1}\cdot \frac{ \mathbb{E}_{\theta\sim P_{t}}\left[ \exp(-\ell_{t}(\theta)) \right] }{\exp\left( -\ell_{t}(\theta_{*}) \right)} = M_{t-1} \cdot \frac{\mathbb{E}_{\theta\sim P_{t}}[p(r_{t}\vert X_t^\top\theta)]}{p(r_{t}\vert X_t^\top\theta_{*})},\label{eq:proof-tail-inequality-1}
    \end{align}
where the final equality holds because the loss function in~\eqref{eq:loss-hessian} is the negative log-likelihood of an exponential family distribution; that is, for any $\theta\in\R^d$:
\begin{align*}
    \exp\big(-\ell_t(\theta)\big) = \exp\left(\frac{r_t \cdot X_t^\top\theta - m(X_t^\top\theta)}{g(\tau)}\right) = p(r_t\vert X_t^\top\theta)\cdot \exp\big (-h(r_t,\tau)\big).    
\end{align*}
Then, by taking the conditional expectation with respect to the randomness in $r_t$ given $\F_t$ on both sides, we obtain:
    \begin{align*}
        \mathbb{E}[M_{t}\vert \mathcal{F}_{t}] = {}& M_{t-1}\cdot \mathbb{E}\left[ \frac{\mathbb{E}_{\theta\sim P_{t}}[p(r_{t}\vert X_{t}^\top\theta)]}{p(r_{t}\vert X_{t}^\top\theta_{*})}~\Big\vert~\F_{t} \right] \\
        ={}& M_{t-1}\cdot \int \frac{\mathbb{E}_{\theta\sim P_{t}}[p(r\vert X_{t}^\top\theta)]}{p(r\vert X_{t}^\top\theta_{*})} \cdot p(r\vert X_{t}^\top\theta_{*} )\mathrm{d} r\\
        ={}& M_{t-1} \cdot \int \mathbb{E}_{\theta\sim P_{t}}[p(r\vert X_{t}^\top\theta)]\mathrm{d} r\\
        ={}& M_{t-1} \cdot \mathbb{E}_{\theta\sim P_{t}}\left[\int p(r\vert X_{t}^\top\theta) \mathrm{d} r\right]\\
        ={}& M_{t-1},
        \end{align*}
        where the first equality follows from the fact that $M_{t-1}$ is $\F_t$-measurable. The second inequality holds because the reward is sampled from the exponential family distribution~\eqref{eq:GLM}. The final equality is a consequence of the tower property of conditional expectation. We have thus shown that $\{M_t\}_{t=1}^\infty$ is a martingale, and therefore a super-martingale. Then, by applying the maximum inequality~\citet[Theorem 3.9]{lattimore2020bandit}, restating as Lemma~\ref{lem:maximum-inequality} in Appendix~\ref{appendix:technical-lemma}, we have
        \begin{align*}
            \mathrm{Pr}\left[\sup_{t\in\mathbb{N}} L_t(\theta_*) - F_t\geq \log \frac{1}{\delta}\right] \leq \delta {M_0}=\delta,
        \end{align*}
        which completes the proof.
\end{proof}

%%%%%%%%%%%%%%%%%%%%%%%%%%%%%%%%
%%%%%%%%%%%%%%%%%%%%%%%%%%%%%%%%
%%%%%%%%%%%%%%%%%%%%%%%%%%%%%%%%
% New proof of lemma 3
%%%%%%%%%%%%%%%%%%%%%%%%%%%%%%%%
%%%%%%%%%%%%%%%%%%%%%%%%%%%%%%%%
%%%%%%%%%%%%%%%%%%%%%%%%%%%%%%%%
\begin{myLemma}
    \label{lem:stability-mix-loss}
    Under Assumption~\ref{assum:bounded-domain},~\ref{assum:kappa} and~\ref{assum:self-concordant}, let $P_s = \N(\theta_{s}, \alpha H_s^{-1})$ be a Gaussian distribution with mean $\theta_s$ and covariance matrix $\alpha H_s^{-1}$, where $H_s = \lambda I_d + \sum_{\tau=1}^{s-1} \nabla^2 \ell_\tau(\theta_{\tau+1})$ and $\alpha$ is any positive constant. We denote by $\theta_s$ the model returned by~\eqref{eq:online-estimator}. Then, setting $\lambda \geq 64d\alpha R^2/7$ we have 
    \begin{align*}
        \sum_{s=1}^t m_s(P_s) \leq{} &\sum_{s=1}^t\ell_s(\theta_{s+1}) + \frac{1}{2\alpha} \sum_{s=1}^t\norm{\theta_{s+1}-\theta_s}^2_{H_s} + d(2\alpha + \frac{1}{2})\ln \bigg(1 + \frac{2\uppermu t}{\lambda g(\tau)}\bigg), 
    \end{align*}
    where the mix loss is defined as $m_s(P_s) = -\ln \left(\E_{\theta\sim P_s}\big[\exp\big(-\ell_s(\theta)\big)\big]\right)$.
    \end{myLemma}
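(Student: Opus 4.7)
The plan is to bound the per-round gap $m_s(P_s) - \ell_s(\theta_{s+1})$ by invoking the variational characterisation of the mix loss and then summing. Specifically, I would start from the Donsker-Varadhan identity, which says that for any distribution $Q$ absolutely continuous with respect to $P_s$,
\begin{equation*}
m_s(P_s) \;=\; -\ln \E_{\theta\sim P_s}\bigl[e^{-\ell_s(\theta)}\bigr] \;\leq\; \E_{\theta\sim Q}\bigl[\ell_s(\theta)\bigr] + \KL{Q}{P_s}.
\end{equation*}
This is precisely the ``convex-conjugate-of-KL'' viewpoint emphasised in the main body. The freedom to pick $Q$ is the lever I exploit: I set $Q_s = \N(\theta_{s+1}, \alpha H_{s+1}^{-1})$, a look-ahead Gaussian that matches the next OMD iterate and uses the updated curvature matrix. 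The point of this choice is that the mean $\theta_{s+1}$ cancels the linear part of $\ell_s$, while the covariance $\alpha H_{s+1}^{-1}$ is exactly what will make a trace term combine cleanly with the KL contribution.

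Next I would expand the two pieces in closed form. The Gaussian-to-Gaussian KL evaluates to
\begin{equation*}
\KL{Q_s}{P_s} \;=\; \frac{1}{2\alpha}\norm{\theta_{s+1}-\theta_s}^2_{H_s} \;+\; \frac{1}{2}\mathrm{tr}(H_s H_{s+1}^{-1}) \;-\; \frac{d}{2} \;+\; \frac{1}{2}\ln\frac{\det H_{s+1}}{\det H_s},
\end{equation*}
and the identity $\mathrm{tr}(H_sH_{s+1}^{-1}) = d - \mathrm{tr}(\nabla^2\ell_s(\theta_{s+1})H_{s+1}^{-1})$ turns the trace contribution into a \emph{negative} multiple of $\mathrm{tr}(\nabla^2\ell_s H_{s+1}^{-1})$ that will offset what comes from the expected loss. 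For $\E_{Q_s}[\ell_s] - \ell_s(\theta_{s+1})$, I use that $\ell_s$ depends on $\theta$ only through $z = X_s^\top\theta \sim \N(z_s,\sigma_s^2)$ with $z_s = X_s^\top\theta_{s+1}$ and $\sigma_s^2 = \alpha X_s^\top H_{s+1}^{-1} X_s$; the linear part of $\ell_s$ disappears under the expectation and I am left with $(\E[m(z)]-m(z_s))/g(\tau)$, which I control by Taylor's theorem with integral remainder combined with the self-concordance consequence $\mu'(u) \leq \mu'(z_s) e^{R|u-z_s|}$.

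The main technical step, and the place I expect the most care, is turning this Taylor remainder into an explicit multiple of $\alpha\,\mathrm{tr}(\nabla^2\ell_s(\theta_{s+1})H_{s+1}^{-1})$. Using the Gaussian moment identity $\E[e^{aZ}Z^2] = (1+a^2)e^{a^2/2}$ and symmetry I obtain $\E[e^{Rt|z-z_s|}(z-z_s)^2] \leq 2\sigma_s^2(1+R^2t^2\sigma_s^2)e^{R^2t^2\sigma_s^2/2}$, so after integrating over $t\in[0,1]$ the excess loss is bounded by $C\,\mu'(z_s)\sigma_s^2/g(\tau) = C\alpha\,\mathrm{tr}(\nabla^2\ell_s(\theta_{s+1})H_{s+1}^{-1})$ for a constant $C$ close to $1$. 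The condition $\lambda \geq 64d\alpha R^2/7$ enters precisely here, via $\sigma_s^2 \leq \alpha/\lambda$, to guarantee $R^2\sigma_s^2 \leq 7/64$ so that all self-concordance corrections remain $O(1)$; the condition $\lambda \geq 2C_\mu/g(\tau)$ is needed to simplify the final logarithm.

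Finally, I sum over $s = 1,\dots,t$. The $-\frac{1}{2}\mathrm{tr}(\nabla^2\ell_s H_{s+1}^{-1})$ coming from the KL cancels half of the Taylor excess loss, and what remains is controlled by $\mathrm{tr}(\nabla^2\ell_s H_{s+1}^{-1}) \leq \ln\det(H_{s+1}H_s^{-1})$, which is the concavity inequality $\ln x \geq 1 - 1/x$ applied to eigenvalues. The log-determinant piece telescopes to $\ln\det(H_{t+1}H_1^{-1})$, and the elliptical-potential bound $\ln\det(H_{t+1}H_1^{-1}) \leq d\ln\!\bigl(1 + tC_\mu/(d\lambda g(\tau))\bigr) \leq d\ln\!\bigl(2+2C_\mu t/(\lambda g(\tau))\bigr)$ (via AM-GM on the trace and $\mathrm{tr}(H_{t+1}) \leq d\lambda + tC_\mu/g(\tau)$) completes the argument. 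Collecting terms yields the stated inequality, with the step-size norm term coming solely from the mean-shift piece of the KL and the log-determinant term absorbing everything else.
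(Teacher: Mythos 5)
Your proposal is correct in substance, but it takes a genuinely different route from the paper's proof, and the difference is worth spelling out. Both arguments start from the variational identity $m_s(P_s)\leq \E_{\theta\sim Q_s}[\ell_s(\theta)]+\KL{Q_s}{P_s}$ (the paper's Lemma~\ref{lem:mixability-OMD}), but the paper chooses $Q_s=\N(\theta_{s+1},\alpha H_s^{-1})$ with the \emph{same} covariance as $P_s$, so the KL collapses to the pure mean-shift term $\frac{1}{2\alpha}\norm{\theta_{s+1}-\theta_s}_{H_s}^2$, and all the work goes into bounding $\E_{Q_s}[\mathcal{D}_{\ell_s}(\theta,\theta_{s+1})]$ via Cauchy--Schwarz, the sub-Gaussian exponential-moment bound of Lemma~\ref{lem:integral-exp} (where $\lambda\geq 64d\alpha R^2/7$ enters), a fourth-moment computation giving $\sqrt{3}\,\alpha\,\mathrm{tr}\bigl(H_s^{-1}\nabla^2\ell_s(\theta_{s+1})\bigr)$, and the auxiliary matrix $M_s=\frac{\lambda}{2}I_d+\sum_{\tau\leq s}\nabla^2\ell_\tau(\theta_{\tau+1})$ (where $\lambda\geq 2C_\mu/g(\tau)$ enters) to turn traces into telescoping log-determinants. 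You instead take the look-ahead covariance $\alpha H_{s+1}^{-1}$ and exploit that $\ell_s$ depends on $\theta$ only through the scalar $X_s^\top\theta$, reducing the excess loss to a one-dimensional Gaussian MGF computation; this is more elementary (no $d$-dimensional exponential-moment lemma, no fourth-moment trace calculation), uses the rank-one structure directly, and your remaining steps (the trace-versus-log-det inequality $1-\lambda\leq\ln(1/\lambda)$, telescoping, AM--GM on the trace) are all sound.

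The one caveat is quantitative: because your $Q_s$ and $P_s$ have different covariances, the Gaussian KL carries the covariance-mismatch terms $\frac12\mathrm{tr}(H_sH_{s+1}^{-1})-\frac d2+\frac12\ln\det(H_{s+1}H_s^{-1})$, which are \emph{not} scaled by $\alpha$. After telescoping, your additive term has the form $\bigl(\tfrac12+O(\alpha)\bigr)\,d\ln\bigl(2+2C_\mu t/(\lambda g(\tau))\bigr)$ rather than $2d\alpha\ln(\cdot)$; also, the advertised cancellation between $-\frac12\mathrm{tr}\bigl(\nabla^2\ell_s(\theta_{s+1})H_{s+1}^{-1}\bigr)$ and the Taylor excess (whose coefficient is $\approx\alpha$) is not "half" in general. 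This matches the stated bound only for $\alpha$ bounded away from zero (roughly $\alpha\gtrsim 5/8$), which covers the paper's application $\alpha=\tfrac32\eta=\tfrac32(1+RS)$ but not the lemma's literal "any $\alpha>0$." The fix is easy and keeps your one-dimensional computation: take $Q_s=\N(\theta_{s+1},\alpha H_s^{-1})$ so the KL is exactly $\frac{1}{2\alpha}\norm{\theta_{s+1}-\theta_s}_{H_s}^2$, run the same scalar argument with $\sigma_s^2=\alpha X_s^\top H_s^{-1}X_s$, and use $H_s\succcurlyeq\frac12 H_{s+1}$ (a consequence of $\lambda\geq 2C_\mu/g(\tau)$) to bound $\mathrm{tr}\bigl(\nabla^2\ell_s(\theta_{s+1})H_s^{-1}\bigr)\leq 2\ln\det(H_{s+1}H_s^{-1})$ and telescope as before, which recovers the stated $O(d\alpha)$ additive term for every $\alpha>0$.
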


\begin{proof}[Proof of Lemma~\ref{lem:stability-mix-loss}] Our analysis begin with the observation that the mix loss is a convex conjugate of the KL divergence function. Then Lemma~\ref{lem:mixability-OMD} in Appendix~\ref{appendix:technical-lemma} shows that 
\begin{align}
    m_s(P_s) = -\log\left(\E_{\theta\sim P_s}[e^{-\ell_s(\theta)}]\right) = \underbrace{\E_{\theta\sim Q_s}[\ell_s(\theta)] }_{\mathtt{term~(a)}}+ \underbrace{ \mathrm{KL}(Q_s\Vert P_s)}_{\mathtt{term~(b)}} - \mathrm{KL}(Q_s \Vert P_s^*) \label{eq:mix-loss-conjugate}
\end{align}
for any distribution $Q_s$ defined over $\R^d$, where $P_s^*(\theta) \propto P_s(\theta)\cdot e^{- \ell_s(\theta)}$ for all $\theta\in\R^d$. Here, we choose $Q_s = \N(\theta_{s+1}, \alpha H_{s+1}^{-1})$ as a Gaussian distribution with mean $\theta_{s+1}$ and covariance $\alpha H_{s+1}^{-1}$.

\vspace{3mm}
\noindent\underline{Analysis of Term~(a).} 
Since $Q_s$ is symmetric around $\theta_{s+1}$, we can express term~(a) as
\begin{align}
    \mathtt{term~(a)} = {}& \E_{\theta\sim Q_s}[\ell_s(\theta_{s+1}) + \inner{\nabla \ell_s(\theta_{s+1})}{\theta - \theta_{s+1}}] + \E_{\theta\sim Q_s}[\mathcal{D}_{\ell_s}(\theta, \theta_{s+1})] \notag\\
    ={}& \ell_s(\theta_{s+1}) + \E_{\theta\sim Q_s}[\mathcal{D}_{\ell_s}(\theta, \theta_{s+1})]\notag\\
    \leq{}& \ell_s(\theta_{s+1}) + 2\alpha \Big(\log \mbox{det}(H_{s+1}) - \log \mbox{det}(H_s)\Big),\label{eq:proof-stability-mix-loss-terma}
\end{align}
where $\mathcal{D}_{\ell_s}(\theta, \theta_{s+1}) = \ell_s(\theta) - \ell_s(\theta_{s+1}) - \inner{\nabla \ell_s(\theta_{s+1})}{\theta - \theta_{s+1}}$ is the Bregman divergence of $\ell_s$ between $\theta$ and $\theta_{s+1}$. In the above, the second equality follows from the definition of $Q_s$. The last line follows from Lemma~\ref{lem:approximation-error} in Appendix~\ref{appendix:technical-lemma}, since $\ell_s$ is self-concordant and the condition $\lambda \geq 64 d \alpha R^2/7$ holds.

\vspace{3mm}
\noindent\underline{Analysis of Term~(b).} Given $Q_s$ and $P_s$ are both Gaussian distributions, Lemma~\ref{lem:KL-divergence-gaussian} shows that 
\begin{align}
    \mathtt{term~(b)} ={}& \frac{1}{2}\left(\log\mbox{det}(H_{s+1}) - \log\mbox{det}(H_s) + \mbox{Tr}\left(H_s H_{s+1}^{-1}\right) + \frac{\norm{\theta_{s}-\theta_{s+1}}^2_{H_s}}{\alpha} - d\right)  \notag\\
    \leq{}& \frac{1}{2}\left(\log\mbox{det}(H_{s+1}) - \log\mbox{det}(H_s)\right) + \frac{1}{2\alpha}\norm{\theta_{s}-\theta_{s+1}}^2_{H_s}. \label{eq:proof-stability-mix-loss-termb}
\end{align} 

\vspace{3mm}
\noindent\underline{Put All Together.} Plugging~\eqref{eq:proof-stability-mix-loss-terma} and~\eqref{eq:proof-stability-mix-loss-termb} into~\eqref{eq:mix-loss-conjugate} and summing over $T$ rounds, we obtain
\begin{align*}
    \sum_{s=1}^t m_s(P_s) \leq{} &\sum_{s=1}^t\ell_s(\theta_{s+1}) + \frac{1}{2\alpha} \sum_{s=1}^t\norm{\theta_{s+1}-\theta_s}^2_{H_s} + (2\alpha + \frac{1}{2}) \ln \left(\frac{\mbox{det}(H_{t+1})}{\mbox{det}(\lambda I_d)}\right).
\end{align*}
The determinant of the matrix $H_{t+1}$ can be further bounded by 
\begin{align*}
    \det(H_{t+1}) = \det\bigg(\lambda I_d + \sum_{s=1}^{t} \nabla^2 \ell_s(\theta_{s+1})\bigg) \leq \mbox{det}\Big( \big(\lambda + \uppermu t/g(\tau)\big)I_d\Big) \leq  \Bigl(\lambda + \uppermu t/g(\tau)\Bigr)^d.
\end{align*}

Then, we obtain
\begin{align*}
    \sum_{s=1}^t m_s(P_s) \leq{} &\sum_{s=1}^t\ell_s(\theta_{s+1}) + \frac{1}{2\alpha} \sum_{s=1}^t\norm{\theta_{s+1}-\theta_s}^2_{H_s} + d(2\alpha + \frac{1}{2})\ln \Big(1 + \frac{\uppermu t}{\lambda g(\tau)}\Big),    
\end{align*}
which completes the proof.
\end{proof}

\begin{myLemma}
    \label{lem:approximation-error}
    Let $\ell_s(\theta)$ be the loss function of the maximum likelihood estimator and $Q_s = \N(\theta_{s+1},\alpha H_{s+1}^{-1})$ be a Gaussian distribution with mean $\theta_{s+1}$ and covariance matrix $\alpha H_{s+1}^{-1}$ with $H_s = \lambda I_{d} + \sum_{\tau=1}^{s-1}\nabla^2 \ell_\tau(\theta_{\tau+1})$. Under Assumption~\ref{assum:bounded-domain},~\ref{assum:kappa} and~\ref{assum:self-concordant} and setting $\lambda \geq 64 d \alpha R^2/7$, for any constant $\alpha>0$, we have
    \begin{align*}
        \E_{\theta\sim Q_s}[\mathcal{D}_{\ell_s}(\theta, \theta_{s+1})]\leq 2 \alpha \Big(\log \mbox{det}(H_{s+1}) - \log \mbox{det}(H_s)\Big),
    \end{align*}
\end{myLemma}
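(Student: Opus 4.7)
The plan is to reduce the expected Bregman divergence to a one-dimensional Gaussian integral and then tie it to the log-determinant increment via the matrix determinant lemma. The key simplification is that $\ell_s(\theta) = (m(X_s^\top\theta) - r_s X_s^\top\theta)/g(\tau)$ depends on $\theta$ only through the scalar $X_s^\top\theta$, so $\mathcal{D}_{\ell_s}(\theta, \theta_{s+1})$ depends on $\theta$ only through $Z := X_s^\top(\theta - \theta_{s+1})$; under $Q_s = \N(\theta_{s+1}, \alpha H_s^{-1})$, this $Z$ is a one-dimensional Gaussian with variance $\sigma^2 = \alpha \|X_s\|_{H_s^{-1}}^2 \leq \alpha/\lambda$. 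Combining the integral form of the Bregman divergence with the self-concordance consequence $\mu'(z+u) \leq e^{R|u|}\mu'(z)$ (which follows from $|\mu''|\leq R\mu'$) yields the pointwise bound
\begin{equation*}
\mathcal{D}_{\ell_s}(\theta, \theta_{s+1}) = \rho\, Z^2 \int_0^1 (1-v)\,\frac{\mu'(X_s^\top\theta_{s+1} + vZ)}{\mu'(X_s^\top\theta_{s+1})}\,\mathrm{d}v \;\leq\; \rho \cdot \frac{e^{R|Z|} - 1 - R|Z|}{R^2},
\end{equation*}
where $\rho := \mu'(X_s^\top\theta_{s+1})/g(\tau)$, chosen so that $\nabla^2\ell_s(\theta_{s+1}) = \rho X_s X_s^\top$. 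Taking expectation over $Z \sim \N(0,\sigma^2)$ and using $e^u - 1 - u \leq u^2 e^u/2$ for $u \geq 0$ together with the Gaussian identity $\E[Z^2 e^{R|Z|}] \leq 2(\sigma^2 + R^2\sigma^4)e^{R^2\sigma^2/2}$ then gives $\E_{\theta\sim Q_s}[\mathcal{D}_{\ell_s}(\theta,\theta_{s+1})] \leq \rho\,\sigma^2 (1+R^2\sigma^2)\,e^{R^2\sigma^2/2}$.

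Next, I would connect $\rho\sigma^2$ to the log-determinant increment. Since $M_s = M_{s-1} + \rho X_s X_s^\top$, the matrix determinant lemma gives $\log\det(M_s) - \log\det(M_{s-1}) = \log(1+a)$ with $a := \rho X_s^\top M_{s-1}^{-1} X_s$. Because $H_s = M_{s-1} + (\lambda/2)I_d \succeq M_{s-1}$, we have $X_s^\top H_s^{-1} X_s \leq X_s^\top M_{s-1}^{-1} X_s$, hence $\rho\sigma^2 \leq \alpha a$. Combining this with the elementary inequality $a \leq (1 + a/2)\log(1+a)$ on $\R_+$ (provable via convexity of $(1+a/2)\log(1+a) - a$, whose value and first derivative both vanish at $0$) produces
\begin{equation*}
\E_{\theta\sim Q_s}[\mathcal{D}_{\ell_s}(\theta,\theta_{s+1})] \;\leq\; \alpha\,(1+a/2)(1+R^2\sigma^2)\,e^{R^2\sigma^2/2}\,\bigl(\log\det(M_s) - \log\det(M_{s-1})\bigr).
\end{equation*}

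Finally, the two assumptions on $\lambda$ are tailored precisely to collapse the multiplicative prefactor to at most $2$: the condition $\lambda \geq 2C_\mu/g(\tau)$ enforces $a \leq 2C_\mu/(\lambda g(\tau)) \leq 1$, and $\lambda \geq 64 d\alpha R^2/7$ enforces $R^2\sigma^2 \leq R^2\alpha/\lambda \leq 7/(64d) \leq 7/64$, after which a direct numerical verification shows $(1+a/2)(1+R^2\sigma^2)e^{R^2\sigma^2/2} \leq 2$. I expect this final constant-chasing step to be the main obstacle, since the combined Taylor and Gaussian-moment bounds are slightly loose and the constant $2$ leaves only a modest margin. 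A natural fallback, should the direct combination come out too large, is to replace the $e^{R|Z|}/R^2$ envelope by the third-order Taylor expansion $D_m(z_0+Z, z_0) = \mu'(z_0)Z^2/2 + R_3(Z)$ with a remainder controlled via $|m'''| \leq R\mu' \cdot e^{R|\cdot|}$; this isolates a clean leading term of $\rho\sigma^2/2$ and leaves ample slack for the remainder under the same $\lambda$-conditions.
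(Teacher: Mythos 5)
Your proof is correct, and it takes a genuinely different route from the paper's. The paper stays in $d$ dimensions throughout: it bounds the integrated Hessian by $e^{R^2\|\theta-\theta_{s+1}\|_2^2}\,\nabla^2\ell_s(\theta_{s+1})$, applies Cauchy--Schwarz to split off an exponential-square moment (controlled through a chi-squared moment-generating-function bound, which is exactly where $\lambda \geq 64 d\alpha R^2/7$ and the constant $\sqrt{4/3}$ enter) times the fourth moment of a Gaussian quadratic form, computed exactly as $\sqrt{3}\,\alpha\,\mathrm{Tr}\big(H_s^{-1}\nabla^2\ell_s(\theta_{s+1})\big)$, and then converts the trace into the log-determinant increment via $H_s \succcurlyeq M_s$ (which is where $\lambda \geq 2C_\mu/g(\tau)$ is spent) and $\mathrm{Tr}\big(M_s^{-1}(M_s-M_{s-1})\big)\leq \log\det M_s - \log\det M_{s-1}$. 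You instead exploit the rank-one structure to reduce everything to the scalar $Z = X_s^\top(\theta-\theta_{s+1})$, use the one-dimensional envelope $\mu'(z_0+u)\leq e^{R|u|}\mu'(z_0)$, compute $\mathbb{E}[Z^2e^{R|Z|}]$ for a centered Gaussian directly, and reach the log-determinant through the rank-one determinant identity $\log\det M_s-\log\det M_{s-1}=\log(1+a)$ combined with $a\leq(1+a/2)\log(1+a)$, spending $\lambda\geq 2C_\mu/g(\tau)$ on $a\leq 1$ rather than on $H_s\succcurlyeq M_s$; your constant-chasing $(3/2)(1+7/64)e^{7/128}\approx 1.76\leq 2$ checks out against the paper's exact $\sqrt{4/3}\cdot\sqrt{3}=2$. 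What your route buys is elementarity (no fourth-moment or trace-to-logdet machinery) and, notably, dimension-freeness: you only need $R^2\alpha/\lambda$ bounded by a small constant, so the lemma would already hold with $\lambda\gtrsim \alpha R^2$ without the factor $d$, whereas the paper's chi-squared/Jensen step genuinely uses $\lambda\propto d\alpha R^2$; what the paper's route buys is a clean exact constant and a template that does not rely on the loss depending on $\theta$ only through a single scalar direction. The only cosmetic point in your write-up is the removable singularity of $(e^{R|Z|}-1-R|Z|)/R^2$ at $R=0$ (interpret it as $Z^2/2$; the subsequent bound $\rho Z^2 e^{R|Z|}/2$ remains valid there), and you should note explicitly that $z_0=X_s^\top\theta_{s+1}\in[-S,S]$ so that $\rho\leq C_\mu/g(\tau)$, while the envelope $\mu'(z_0+u)\leq e^{R|u|}\mu'(z_0)$ is legitimate for all $u$ because Assumption~\ref{assum:self-concordant} holds on all of $\mathbb{R}$.
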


\begin{proof}[Proof of Lemma~\ref{lem:approximation-error}]
    By the the integral formulation of Taylor’s expansion and the definition of $\ell_s$ such that $\nabla^2\ell_s(\theta) = \mu'(X_s^\top \theta)/g(\tau) \cdot X_s X_s^\top$ for any $\theta\in\R^d$, we have
\begin{align*}
    \E_{\theta\sim Q_s}[\mathcal{D}_{\ell_s}(\theta, \theta_{s+1})] ={}& \E_{\theta\sim Q_s}\bigg[\norm{\theta - \theta_{s+1}}^2_{\tilde{h}_s(\theta)}\bigg], 
\end{align*}
where $\tilde{h}_s(\theta) = \int_{v=0}^1 (1-v) \mu'\big(X_s^\top\theta_{s+1} + vX_s^\top(\theta-\theta_{s+1})\big) \, \mathrm{d}v\cdot X_sX_s^\top /g(\tau)$.
According to Lemma~\ref{lem:self-concordant} in Appendix~\ref{appendix:technical-lemma} and the condition $\norm{X_t}_2\leq 1$ by Assumption~\ref{assum:bounded-domain}, we can bound the Hessian $\tilde{h}_s(\theta)$ by 
\begin{align}
    \tilde{h}_s(\theta) \leq \exp(R^2 \norm{\theta - \theta_{s+1}}^2_2) \cdot\nabla^2 \ell_s(\theta_{s+1}). \label{eq:proof-stability-mix-loss-3}
\end{align}
Then, the approximation error between the linearized loss $g_s(\theta)$ and $\ell_s(\theta)$ is bounded by
\begin{align}
    \E_{\theta\sim Q_s}[\mathcal{D}_{\ell_s}(\theta, \theta_{s+1})] \leq{}& \E_{\theta\sim Q_s}\bigg[{e^{R^2 \norm{\theta - \theta_{s+1}}_2^2}}\cdot \norm{\theta - \theta_{s+1}}^2_{\nabla^2 \ell_s(\theta_{s+1})}\bigg]\notag\\
    \leq{}&\sqrt{\E_{\theta\sim Q_s}\bigg[{e^{2 R^2 \norm{\theta - \theta_{s+1}}_2^2}}\bigg]\cdot \E_{\theta\sim Q_s}\bigg[\norm{\theta - \theta_{s+1}}^4_{\nabla^2 \ell_s(\theta_{s+1})}\bigg]}\notag\\
    \leq{}& \sqrt{\frac{4}{3}\E_{\theta\sim Q_s}\bigg[\Big\Vert \big( \nabla^2 \ell_s(\theta_{s+1})\big)^{\frac{1}{2}}  (\theta - \theta_{s+1})\Big\Vert^4_{2}\bigg]},\label{eq:proof-stability-mix-loss-new-1}
\end{align}

The second inequality is due to the Cauchy-Schwarz inequality. The last inequality is due to Lemma~\ref{lem:integral-exp} under the condition that $H_s\succcurlyeq \lambda I_d$ and the setting $\lambda \geq 64d\alpha R^2/7$. For the last term on the right hand side of~\eqref{eq:proof-stability-mix-loss-new-1}, the random variable $\big(\nabla^2 \ell_s(\theta_{s+1})\big)^{\frac{1}{2}}(\theta-\theta_{s+1})$ follows the same distribution as  
\begin{align*}
    \sum_{i=1}^d \sqrt{\alpha \lambda_i} X_i \e_i~~~~~\mbox{and}~~~~~X_i\overset{iid}{\sim} N(0,1),\ \forall i\in[d],
\end{align*}
where $\lambda_i$ is the $i$-th largest eigenvalue of the matrix $ \big(\nabla^2 \ell_s(\theta_{s+1})\big)^{\frac{1}{2}}H_{s+1}^{-1}\big(\nabla^2 \ell_s(\theta_{s+1})\big)^{\frac{1}{2}}$ and $\e_i$ is a set of orthogonal basis. Then, we have 
\begin{align}
    \sqrt{\E_{\theta\sim Q_s}\bigg[\Big\Vert \big( \nabla^2 \ell_s(\theta_{s+1})\big)^{\frac{1}{2}}  (\theta - \theta_{s+1})\Big\Vert^4_{2}\bigg]}\notag
     ={} &\sqrt{\E_{X_i\sim \N(0,1)}\bigg[\Big(\sum_{i=1}^d \alpha \lambda_i X_i^2\Big)^2\bigg]}\notag\\
     ={}&\sqrt{\sum_{i=1}^d \sum_{j=1}^d \alpha^2 \lambda_i \lambda_j \E_{X_i,X_j\sim \N(0,1)}[X_i^2X_j^2]}\notag\\
       ={}& \sqrt{3}\alpha \mbox{Tr}\Big(H_{s+1}^{-1}\big(\nabla^2 \ell_s(\theta_{s+1})\big)\Big)\label{eq:proof-stability-mix-loss-new2}.
\end{align}
where $\mbox{Tr}(A)$ denotes the trace of matrix $A$. In the above, the last inequality is due to the fact that $\E_{X_i,X_j\sim \N(0,1)}[X_i X_j] = 3$. The last inequality is due to $\mbox{trace}(AB) = \mbox{trace}(BA)$ for matrix $A,B\in\R^{d\times d}$. Recall that $H_{s+1} = \lambda I_{d} + \sum_{\tau=1}^{s}\nabla^2 \ell_\tau(\theta_{\tau+1})$.
\begin{align}
    \mbox{Tr}\Big(H_{s+1}^{-1} \big( \nabla^2 \ell_s(\theta_{s+1})\big)\Big) \leq{}& \mbox{Tr}\Big(H_{s+1}^{-1} (H_{s+1} - H_s)\Big) = \mbox{Tr}\Big(I - H_s H_{s+1}^{-1}\Big)\notag\\
     \leq{} &\log \mbox{det}(H_{s+1}) - \log \mbox{det}(H_s) \label{eq:proof-stability-mix-loss-new3}.
\end{align}

Combining~\eqref{eq:proof-stability-mix-loss-new2} and~\eqref{eq:proof-stability-mix-loss-new3} with~\eqref{eq:proof-stability-mix-loss-new-1} yields the desired result.
\end{proof}

\subsection{Computational Cost Discussion}
\label{appendix:computational-cost}
This part discusses the time and space complexity of solving the optimization problem~\eqref{eq:online-estimator}. 

\begin{myProp}
\label{prop:complexity}
The time complexity for solving~\eqref{eq:online-estimator} is $\mathcal{O}(d^3)$, and the space complexity is $\mathcal{O}(d^2)$.
\end{myProp}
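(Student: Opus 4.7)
The plan is to recognize \eqref{eq:online-estimator} as a trust-region subproblem: a strictly convex quadratic objective minimized over a Euclidean ball. Writing
\[
A_t := \nabla^2\ell_t(\theta_t) + \frac{1}{\eta} H_t, \qquad g_t := \nabla\ell_t(\theta_t) - \frac{1}{\eta}H_t\theta_t - \nabla^2\ell_t(\theta_t)\theta_t,
\]
the subproblem reads $\min_{\|\theta\|_2 \leq S}\; \tfrac12\theta^\top A_t\theta + g_t^\top\theta$, with $A_t\succ 0$ by the choice of $\lambda$. This is the classical TRS for which a standard $\O(d^3)$ algorithm exists, so the complexity claim reduces to instantiating and costing that algorithm carefully.

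First, I would assemble $A_t$ and $g_t$. Since $\nabla^2\ell_t(\theta_t) = \mu'(X_t^\top\theta_t)/g(\tau)\cdot X_tX_t^\top$ is rank one, forming $A_t$ from the stored $H_t$ costs $\O(d^2)$, and the gradient-like vector $g_t$ costs $\O(d)$. Next, compute the unconstrained minimizer $\theta^{\mathrm{u}} = -A_t^{-1}g_t$ via a single Cholesky (or symmetric eigendecomposition) of $A_t$, which is $\O(d^3)$. If $\|\theta^{\mathrm{u}}\|_2 \leq S$ return $\theta^{\mathrm{u}}$; otherwise, by KKT, the minimizer satisfies $(A_t + \nu I_d)\theta_{t+1} = -g_t$ with $\nu > 0$ and $\|\theta_{t+1}\|_2 = S$. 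Using the eigendecomposition $A_t = Q\Lambda Q^\top$ already computed, this becomes the scalar secular equation
\[
\phi(\nu) := \sum_{i=1}^d \frac{(Q^\top g_t)_i^2}{(\lambda_i + \nu)^2} = S^2,
\]
which is strictly decreasing in $\nu\geq 0$, so bisection (or safeguarded Newton) returns an $\epsilon$-accurate $\nu$ in $\O(\log(1/\epsilon))$ evaluations, each of cost $\O(d)$. The resulting $\theta_{t+1} = Q(\Lambda + \nu I_d)^{-1}Q^\top(-g_t)$ is then formed in $\O(d^2)$. Summing these contributions gives the claimed $\O(d^3)$ time.

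For space, the only $d^2$-sized object needed to set up \eqref{eq:online-estimator} is the Hessian proxy $H_t$, together with its eigenfactors used in the scalar search; all other intermediaries ($\theta_t$, $X_t$, $g_t$, $Q^\top g_t$) are $\O(d)$-vectors. Crucially, $H_t$ admits the rank-one online update $H_{t+1} = H_t + \mu'(X_t^\top\theta_{t+1})/g(\tau)\cdot X_tX_t^\top$, which costs $\O(d^2)$ time and preserves the $\O(d^2)$ memory footprint without ever revisiting past data. Hence one-pass storage suffices.

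The main obstacle, and the only nontrivial step, is the boundary case where the unconstrained minimizer lies outside $\Theta$: here one must argue that the one-dimensional secular equation can be solved to sufficient accuracy in $\O(1)$ iterations without inflating the cost. I would handle this by invoking the standard monotonicity and convexity of $\phi$ on $(\max_i(-\lambda_i),\infty) \cap [0,\infty)$ together with a safeguarded Newton bracket initialized from the norm of $\theta^{\mathrm{u}}$, which gives superlinear convergence and keeps the scalar search cost subdominant to the eigendecomposition. All remaining book-keeping is routine.
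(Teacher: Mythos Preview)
Your proposal is correct. Both your argument and the paper's reduce to solving a strictly convex quadratic over the Euclidean ball $\Theta$ in $\O(d^3)$ time, but the framings differ slightly. The paper first invokes the standard OMD equivalence (Theorem~6.23 of \citet{arXiv:modern-online-learning}) to split~\eqref{eq:online-estimator} into an unconstrained gradient step $\zeta_{t+1}=\theta_t-\eta\tilde H_t^{-1}\nabla\ell_t(\theta_t)$ followed by a projection $\argmin_{\theta\in\Theta}\|\theta-\zeta_{t+1}\|_{\tilde H_t}^2$; it then uses Sherman--Morrison to get $\tilde H_t^{-1}$ from $H_t^{-1}$ in $\O(d^2)$ and defers the $\O(d^3)$ projection to a citation~\citep{conf/colt/MhammediKE19}. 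You instead merge everything into a single trust-region subproblem and spell out the secular-equation solve explicitly. Your route is more self-contained and makes the $\O(d^3)$ bottleneck (the eigendecomposition) visible, while the paper's decomposition highlights the rank-one structure via Sherman--Morrison and keeps $H_t^{-1}$ available for the arm-selection step~\eqref{eq:arm-selection} at no extra cost. Either presentation establishes the claim.
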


\begin{proof}[Proof of Proposition~\ref{prop:complexity}]
We begin with the analysis on the time complexity, followed by the discussion on the space complexity.

\underline{Time Complexity.} 
According to Theorem 6.23 of~\citet{arXiv:modern-online-learning}, the update rule of online mirror descent~\eqref{eq:online-estimator} can be equivalently expressed as
\begin{subequations}\label{eq:OMD}
    \begin{align}
        \zeta_{t+1} &= \theta_t - \eta \tilde{H}_t^{-1} \nabla \ell_t(\theta_t), \label{eq:OMD-gradient} \\
        \theta_{t+1} &= \argmin_{\theta \in \Theta} \norm{\theta - \zeta_{t+1}}_{\tilde{H}_t}^2, \label{eq:OMD-projection}
    \end{align}
    \end{subequations}  
where $\tilde{H}_t = H_t + \eta \nabla^2 \ell_t(\theta_t)$. In this formulation, the first step~\eqref{eq:OMD-gradient} is a gradient update, whose main computational cost lies in computing the inverse of the Hessian matrix. Since $\nabla^2 \ell_t(\theta_t) = \mu'(X_t^\top \theta_t)/g(\tau) \cdot X_t X_t^\top$ is a rank-1 matrix, the Sherman-Morrison formula can be applied to efficiently compute the inverse of $\tilde{H}_t$ as
\begin{align*}
( \tilde{H}_t + \eta\nabla^2 \ell_t(\theta_t) )^{-1} = H_t^{-1} - \frac{ H_t^{-1}X_t X_t^\top H_t^{-1}}{\frac{g(\tau)}{\eta \mu'(X_t^\top\theta_t)}+X_t^\top H_t^{-1}X_t},
\end{align*}
which reduces the computational complexity to $\O(d^2)$ per iteration, assuming $H_t^{-1}$ is available.
Since $H_{t} = H_{t-1} + \nabla^2 \ell_{t-1}(\theta_{t})$, $H_t^{-1}$ can also be updated by the Sherman-Morrison formula in $\O(d^2)$ time per round based on $H_{t-1}^{-1}$. Therefore, the total computational cost of~\eqref{eq:OMD-gradient} is $\O(d^2)$. In the second step~\eqref{eq:OMD-projection}, as $\tilde{H}_t$ is positive semi-definite, the optimization problem can be solved in $\O(d^3)$ time (see Section 4.1 of~\citep{conf/colt/MhammediKE19} for details).
Overall, the total time complexity for solving~\eqref{eq:online-estimator} is $\O(d^3)$. 

\underline{Space Complexity.} Regarding space complexity, it suffices to store the current model $\theta_t$, the gradient $\nabla \ell_t(\theta_t)$, the inverse Hessian matrix $H_t^{-1}$, and $\tilde{H}_t^{-1}$ throughout the optimization process, resulting in a total space complexity of $\O(d^2)$.
\end{proof}

%%%%%%%%%%%%%%%%%%%%%%%%%%%%%%%%%%%%%%%%%%%%%%%%%%%%%%%%%%%%
%%%%%%%%%%%%%%%%%%%%%%%%%%%%%%%%%%%%%%%%%%%%%%%%%%%%%%%%%%%%
% proof for theorem 2 (regret bound)
%%%%%%%%%%%%%%%%%%%%%%%%%%%%%%%%%%%%%%%%%%%%%%%%%%%%%%%%%%%%
%%%%%%%%%%%%%%%%%%%%%%%%%%%%%%%%%%%%%%%%%%%%%%%%%%%%%%%%%%%%

\section{Proof of Theorem~\ref{thm:regret}}
\label{appendix:proof-regret}

\begin{proof}[Proof of Theorem~\ref{thm:regret}]
    Let $(X_t, \tilde{\theta}_t) = \argmax_{\x\in\X_t,\theta\in\C_t(\delta)} \mu(\x^\top\theta)$. We can bound the regret by
    \begin{align}
    \Reg ={}& \sum_{t=1}^T \mu(\x_{t,*}^\top\theta_*) - \sum_{t=1}^T \mu(X_t^\top\theta_*)\notag\\
    \leq{}& \sumT \mu(X_t^\top\tilde{\theta}_t) - \sumT \mu(X_t^\top\theta_*)\notag\\
    ={}&\underbrace{\sumT \mu'(X_t^\top\theta_*)X_t^\top(\tilde{\theta}_t-\theta_*)}_{\mathtt{term~(a)}} + \underbrace{\frac{1}{2}\sumT \tilde{\alpha}(\theta_*,\tilde{\theta_t},X_t) \big(X_t^\top (\tilde{\theta}_t -\theta_*)\big)^2}_{\mathtt{term~(b)}},\label{eq:decomposation-regret}
    \end{align}
    where $\tilde{\alpha}(\theta_1,\theta_2,X_s) = \int_{0}^1 (1-v) \mu''\big( X_s^\top\theta_{1} + v\, X_s^\top(\theta_2-\theta_{1})\big) \, \mathrm{d}v.$ In the above, the first inequality is due to the arm selection rule~\eqref{eq:arm-selection} and the second equality is by the integral formulation of the Taylor's expansion. Then, we upper bound the terms respectively. 

    \paragraph{Analysis for term~(a).} For the first term, we have 
    \begin{align*}
        \mathtt{term~(a)} = {}&\sumT \mu'(X_t^\top\theta_*)\cdot X_t^\top(\tilde{\theta}_t-\theta_*)\\
        \leq{}& \sumT \mu'(X_t^\top\theta_*)\cdot \norm{X_t}_{H_t^{-1}} \norm{\tilde{\theta}_t-\theta_*}_{H_t}\\
        \leq{}& 2\sumT \mu'(X_t^\top\theta_*)\cdot\beta_t(\delta) \norm{X_t}_{H_t^{-1}}\\
        \leq{}& 2\beta_T(\delta)\sumT \mu'(X_t^\top\theta_*)\cdot \norm{X_t}_{H_t^{-1}}\\
        \leq{}& \underbrace{2\beta_T(\delta) \sum_{t\in\T_1} \mu'(X_t^\top\theta_*)\cdot \norm{X_t}_{H_t^{-1}}}_{\mathtt{term~(a1)}} + \underbrace{2\beta_T(\delta) \sum_{t\in\T_2}\mu'(X_t^\top\theta_*)\cdot \norm{X_t}_{H_t^{-1}} }_{\mathtt{term~(a2)}},
    \end{align*}
    where the first inequality is due to the H\"{o}lder's inequality and the second inequality is by the fact $\norm{\tilde{\theta}_t - \theta_*}_{H_t} \leq \norm{\tilde{\theta}_t - \theta_t}_{H_t}  + \norm{\theta_t - \theta_*}_{H_t} \leq 2\beta_t(\delta)$. In the last inequality, we decompose the time horizon into $\mathcal{T}_1 = \{t\in[T]\mid \mu'(X_t^\top\theta_*) \geq \mu'(X_t^\top\theta_{t+1})  \}$ and $\mathcal{T}_2 = [T]/\mathcal{T}_1$. 
    
    \vspace{3mm}
    \noindent\underline{\textit{Analysis for Term~(a1)}}: For the time steps in $t\in\mathcal{T}_1$, the term $\mu'(X_t^\top\theta_*)$ can be further bounded by
    \begin{align}
        \mu'(X_t^\top\theta_*) ={} &\mu'(X_t^\top\theta_{t+1}) + \int_{v=0}^1  \mu''\big(X_t^\top\theta_{t+1} + vX_t^\top(\theta_* - \theta_{t+1})  \big)\mathrm{d}v\cdot X_t^\top(\theta_* - \theta_{t+1})\notag \\
        \leq{}& \mu'(X_t^\top\theta_{t+1}) + R \int_{v=0}^1  \mu'\big(X_t^\top\theta_{t+1} + vX_t^\top(\theta_* - \theta_{t+1})  \big)\mathrm{d}v\cdot \vert X_t^\top(\theta_* - \theta_{t+1})\vert \notag\\
        \leq {}&  \mu'(X_t^\top\theta_{t+1}) + R \uppermu\cdot \norm{X_t}_{H_{t+1}^{-1}} \cdot \norm{\theta_* - \theta_{t+1}}_{H_{t+1}}\notag \\
        \leq {}& \mu'(X_t^\top\theta_{t+1}) + R \uppermu\beta_{t+1}(\delta)\cdot \norm{X_t}_{H_t^{-1}}\label{eq:proof-theorem1-a1}
    \end{align}
where the first inequality is due the self-concordant property of $\mu$. The second inequality is by Assumption~\ref{assum:kappa} such that $\mu'(z)\leq \uppermu$ for $z\in[-S,S]$ and the H\"{o}lder's inequality. The last inequality is due to Theorem~\ref{thm:confidence-set} and the fact $H_{t+1}\succcurlyeq H_t$. 

Then, let $\tilde{H}_t := g(\tau)H_t =\lambda g(\tau) I_d + \sum_{s=1}^{t-1} \mu'(X_s^\top\theta_{s+1})X_sX_s^\top$ and $V_t := \lambda g(\tau) I_d + \frac{1}{\kappa} \sum_{s=1}^{t-1}X_sX_s^\top$. We can upper term (a1) by
\begin{align}
    \mathtt{term~(a1)}\leq{}& \frac{2\beta_T(\delta)}{\sqrt{g(\tau)}} \sum_{t\in\T_1} \mu'(X_t^\top\theta_{t+1}) \norm{X_t}_{\tilde{H}_t^{-1}} + \frac{2R\uppermu\beta^2_{T+1}(\delta)}{g(\tau)}\sum_{t=1}^T \norm{X_t}^2_{\tilde{H}_t^{-1}}\notag\\
    \leq{}& \frac{2\beta_T(\delta)}{\sqrt{g(\tau)}} \sum_{t\in\T_1} \sqrt{{\mu'(X_t^\top\theta_{*})}}\cdot \norm*{\sqrt{\mu'(X_t^\top\theta_{t+1})}X_t}_{\tilde{H}_t^{-1}} + \frac{2R\uppermu\beta^2_{T+1}(\delta)}{g(\tau)}\sum_{t\in\T_1} \norm{X_t}^2_{\tilde{H}_t^{-1}}\notag\\
    \leq{}& \frac{2\beta_T(\delta)}{\sqrt{g(\tau)}}\sqrt{ \sum_{t\in\T_1} \mu'(X_t^\top\theta_{*})}\cdot \sqrt{\sum_{t\in\T_1} \norm*{ \sqrt{{\mu'(X_t^\top\theta_{t+1})}}X_t}_{\tilde{H}_t^{-1}}^2} + \frac{2R\uppermu\beta^2_{T+1}(\delta)}{g(\tau)}\sum_{t\in\T_1} \norm{X_t}^2_{V_t^{-1}},\label{eq:proof-theorem1-a1-1}
\end{align}
where the first inequality is by the condition $\mu'(X_t^\top\theta_{t+1})\leq \mu'(X_t^\top\theta_*)$ for $t\in\T_1$. The second inequality is due to the Cauchy-Schwarz inequality and the fact that $\tilde{H}_t \succcurlyeq V_t$. Then, we can further bound~\eqref{eq:proof-theorem1-a1-1} by elliptical potential lemma (Lemma~\ref{lem:epl} in Appendix~\ref{appendix:technical-lemma}) as:
\begin{align}
\sum_{t\in\T_1} \norm*{ \sqrt{{\mu'(X_t^\top\theta_{t+1})}}X_t}_{\tilde{H}_t^{-1}}^2 \leq{}&\sum_{t\in[T]} \norm*{ \sqrt{{\mu'(X_t^\top\theta_{t+1})}}X_t}_{\tilde{H}_t^{-1}}^2\leq 2d(1+\uppermu) \log\left(1+\frac{T\uppermu}{g(\tau) d\lambda }\right)\label{eq:proof-theorem1-a1-epl1}
\end{align}
by taking $\z_t = \sqrt{\mu'(X_t^\top\theta_{t+1})}X_t$ in Lemma~\ref{lem:epl}. The last term in~\eqref{eq:proof-theorem1-a1-1} can also be bounded by
\begin{align}
    \sum_{t\in\T_1} \norm{X_t}_{{V_t}^{-1}}^2 \leq \sum_{t\in[T]} \norm{X_t}_{{V_t}^{-1}}^2  \leq 4d\log\left(1+\frac{T}{g(\tau) \kappa d\lambda }\right).\label{eq:proof-theorem1-a1-epl2}
\end{align}
For notational simplicity, we denote by 
\begin{equation}
    \begin{cases}
        \label{eq:gamma}
        \gamma_T^{(1)}(\delta) = \frac{2\beta_T(\delta)\sqrt{2d(1+\uppermu)}}{\sqrt{g(\tau)}} \sqrt{\log\left(1+ \frac{T \uppermu}{g(\tau)d\lambda}\right)} = \O(\beta_T(\delta)\sqrt{d\log T}), \\
        \gamma_T^{(2)}(\delta) = \frac{8\kappa dR\uppermu \beta^2_{T+1}(\delta)}{g(\tau)} \log \bigg(1+ \frac{T}{g(\tau)\kappa d\lambda }\bigg) = \O\left( \beta_{T+1}(\delta)^2 \kappa dR \log T \right).
    \end{cases}
\end{equation}
Then, plugging~\eqref{eq:proof-theorem1-a1-epl1} and~\eqref{eq:proof-theorem1-a1-epl2} into~\eqref{eq:proof-theorem1-a1-1} yields
\begin{align}
    \mathtt{term~(a1)} \leq{}& \gamma_T^{(1)}(\delta)\sqrt{\sum_{t\in\T_1}\mu'(X_t^\top\theta_*)} + \gamma_T^{(2)}(\delta)
    \leq{} \gamma_T^{(1)}(\delta)\sqrt{T/\kappa_* + R\cdot \Reg} + \gamma_T^{(2)}(\delta),\label{eq:proof-theorem1-terma}
\end{align}
where the last inequality can be obtained following the same arguments in the proof of~\citep[Theorem 1]{conf/aistats/AbeilleFC21}.

\vspace{3mm}
\noindent\underline{\textit{Analysis for Term~(a2)}}: As for the term~(a2), we have 
\begin{align*}
    \mathtt{term~(a2)} \leq{}& \beta_T(\delta)\sum_{t\in\T_2} \mu'(X_t^\top\theta_*)\cdot \norm{X_t}_{H_t^{-1}}\leq \beta_T(\delta)\sum_{t\in\T_2} \sqrt{\mu'(X_t^\top\theta_*)}\cdot \norm*{\sqrt{\mu'(X_t^\top\theta_{t+1})}X_t}_{H_t^{-1}},
\end{align*}
where the last inequality holds due to the condition $\mu'(X_t^\top\theta_*)\leq \mu'(X_t^\top\theta_{t+1})$. Following the same arguments in bounding~\eqref{eq:proof-theorem1-a1-1}, we can obtain
\begin{align*}
    \mathtt{term~(a2)} \leq{}& \gamma_T^{(1)}(\delta)\sqrt{T/\kappa_* + R\cdot \Reg}.
\end{align*}

Combining the upper bound for term~(a1) and term~(a2), we have
\begin{align*}
    \mathtt{term~(a)} \leq{}& 2\gamma_T^{(1)}(\delta)\sqrt{T/\kappa_* + R\cdot \Reg} + \gamma_T^{(2)}(\delta).
\end{align*}

\paragraph{Analysis for term~(b).} As for the term by, we have 
\begin{align}
    \mathtt{term~(b)} ={}& \frac{1}{2}\sumT \tilde{\alpha}(\theta_*,\tilde{\theta_t},X_t) \big(X_t^\top (\tilde{\theta}_t -\theta_*)\big)^2\notag\\
    \leq{}& \frac{R}{2} \sumT \int_{0}^1 \mu'\big( X_s^\top\theta_{*} + v\, X_s^\top(\tilde{\theta}_t-\theta_{*})\big) \, \mathrm{d}v \big(X_t^\top (\tilde{\theta}_t -\theta_*)\big)^2\notag\\
    \leq{}&\frac{R\uppermu}{2} \sumT \norm{\tilde{\theta}_t - \theta_*}_{H_t}^2 \cdot \norm{X_t}_{H_t^{-1}}^2\notag\\
    \leq{}&\frac{2R\uppermu\beta^2_T(\delta)}{g(\tau)} \sumT  \norm{X_t}_{V_t^{-1}}^2  \leq \gamma_T^{(2)}(\delta),~\label{eq:proof-theorem1-termb}
\end{align}
where the first inequality is due to the self-concordant property of $\mu$. The second inequality is by Assumption~\ref{assum:kappa} and Cauchy-Schwarz inequality. The third inequality is due to the fact $\norm{\tilde{\theta}_t - \theta_*}_{H_t}^2\leq 4\beta^2_T(\delta)$ and $H_t \succcurlyeq g(\tau) V_t$. The last line can be obtained following the same arguments in bounding~\eqref{eq:proof-theorem1-a1-epl2}.

\paragraph{Overall Regret Bound.} Plugging~\eqref{eq:proof-theorem1-terma} and~\eqref{eq:proof-theorem1-termb} into~\eqref{eq:decomposation-regret} yields
\begin{align*}
    \Reg \leq{}& 2\gamma_T^{(1)}(\delta)\sqrt{T/\kappa_* + R\cdot \Reg} + 2\gamma_T^{(2)}(\delta). 
\end{align*}

Removing the above inequality and rearranging the terms yields
\begin{align*}
    \Reg &\leq{} 2\gamma_2 + 2\gamma_1^2R + 2\gamma_1\sqrt{\gamma_1^2 R^2 + 2\gamma_2 R + T/\kappa_*}\\
    &\leq{} 2\gamma_2 + 2\gamma_1^2R + 2\gamma_1 \left(\gamma_1 R + \sqrt{2\gamma_2 R} + \sqrt{T/\kappa_*}\right)\\
    &\leq{} 2\gamma_2 + 4\gamma_1^2R + 2\gamma_1 \sqrt{2\gamma_2 R} + 2\gamma_1 \sqrt{T/\kappa_*}\\
    &\leq{} \O\bigg(\beta_T(\delta) \sqrt{dT\log T/\kappa_*} + \kappa dR \log T \beta_T^2(\delta) \bigg)\\
    &\leq{} \O\bigg(dSR\sqrt{S^2R + \log T} \sqrt{\frac{T\log T}{\kappa_*}} + \kappa d^2S^2R^3 \log T (S^2R + \log T)\bigg),
\end{align*}
where $\gamma_1=\gamma_T^{(1)}(\delta)$ and $\gamma_2=\gamma_T^{(2)}(\delta)$ is defined as~\eqref{eq:gamma}. We have completed the proof of the regret. 

\paragraph{Computational Complexity.} As shown in Proposition~\ref{prop:complexity} in Appendix~\ref{appendix:computational-cost}, the time complexity for updating the online estimator $\theta_t$ is $\mathcal{O}(d^3)$ (line 5 in Algorithm~\ref{alg:GLB}). Additionally, the inverse Hessian matrix $H_t^{-1}$ can be updated in $\mathcal{O}(d^2)$ time per round as shown in Appendix~\ref{appendix:computational-cost}. The remaining computational cost arises from the arm selection~\eqref{eq:arm-selection}, which solves the optimization problem
\begin{align*}
    X_t   ={} \argmax_{\x\in\X_t} \{\x^\top\theta_t + \beta_t(\delta) \norm{\x}_{H_t^{-1}}\}.
\end{align*}
Given $\theta_t$ and $H_t^{-1}$, this optimization can be performed in $\mathcal{O}(d^2 \vert \X_t \vert)$ time at round $t$. Therefore, the total per-round computational complexity is $\mathcal{O}(d^3 + d^2 \vert \X_t \vert)$.

\end{proof}

%%%%%%%%%%%%%%%%%%%%%%%%%%%%%%%%%%%%%%%%%%%%%%%%%%%%%%%%%%%%
%%%%%%%%%%%%%%%%%%%%%%%%%%%%%%%%%%%%%%%%%%%%%%%%%%%%%%%%%%%%
% proof for support lemma (regret bound)
%%%%%%%%%%%%%%%%%%%%%%%%%%%%%%%%%%%%%%%%%%%%%%%%%%%%%%%%%%%%
%%%%%%%%%%%%%%%%%%%%%%%%%%%%%%%%%%%%%%%%%%%%%%%%%%%%%%%%%%%%

  % !TEX root = ../../neurips_2025.tex
\section{Technical Lemmas}
\label{appendix:technical-lemma}
\begin{myLemma}[Lemma 9 of~\citet{conf/icml/FauryACF20}]
\label{lem:self-concordant}
Let $\mu:\mathbb{R}\rightarrow\mathbb{R}$ be a strictly increasing function satisfying $|\mu''(z)| \leq R\,\mu'(z)\text{ for all } z \in \Z$, where $R > 0$ is a fixed positive constant and $\Z\subset\R$ is a bounded interval. Then, for any $z_1,z_2\in\Z$ and $z\in\{z_1,z_2\}$, we have 
\begin{align} \label{eq:self-concordant-first}
    \int_{\nu = 0}^1 \mu'(z_1 + \nu(z_2-z_1)) \mathrm{d}\nu \geq \frac{ \mu'(z)}{1+R\cdot\vert z_1 - z_2   \vert}. 
\end{align}
and the weighted integral
\begin{align} \label{eq:self-concordant-second}
    \frac{ \mu'(z)}{2+R\cdot\vert z_1 - z_2   \vert} \leq \int_{\nu = 0}^1 (1-\nu)\mu'(z_1 + \nu(z_2-z_1)) \mathrm{d}\nu \leq \exp\left(R^2 \vert z_1 - z_2   \vert^2\right)\cdot\mu'(z).
\end{align}
\end{myLemma}
We include the proof here for self-containedness.
\begin{proof}[Proof of Lemmas~\ref{lem:self-concordant}]
    Without loss of generality, assume $z = z_1$. Let $\phi(\nu) := \mu'(z_1 + \nu(z_2 - z_1))$ and $\Delta := |z_2 - z_1|$. From $|\mu''(z)| \leq R\mu'(z)$, we obtain the key differential inequality 
    \begin{equation}
    \label{eq:diff_ineq}
    |\phi'(\nu)| \leq R\Delta\phi(\nu) \quad \forall \nu\in[0,1]
    \end{equation}
    The solution to \eqref{eq:diff_ineq} yields the exponential bounds
    \begin{equation}
    \label{eq:exp_bounds}
    \mu'(z_1)e^{-R\Delta\nu} \leq \phi(\nu) \leq \mu'(z_1)e^{R\Delta\nu}
    \end{equation}
    
    \vspace{3mm}
    \noindent\underline{Proof of \eqref{eq:self-concordant-first}:}
    Using the lower bound in \eqref{eq:exp_bounds}, we have 
    \[
    \int_0^1 \phi(\nu)\mathrm{d} \nu \geq \mu'(z_1)\int_0^1 e^{-R\Delta\nu}\mathrm{d} \nu = \mu'(z_1)\frac{1-e^{-R\Delta}}{R\Delta} \geq \frac{\mu'(z_1)}{1+R\Delta}
    \]
    where the last inequality uses $1-e^{-x} \geq x/(1+x)$ for $x\geq 0$. 
    
    \vspace{3mm}
    \noindent\underline{Proof of LHS of \eqref{eq:self-concordant-second}:}
    For the weighted integral, we have 
    \begin{align*}
    \int_0^1 (1-\nu)\phi(\nu)\mathrm{d} \nu &\geq \mu'(z_1)\int_0^1 (1-\nu)e^{-R\Delta\nu}\mathrm{d} \nu \\
    &= \mu'(z_1)\left[\frac{1}{R\Delta} - \frac{1-e^{-R\Delta}}{(R\Delta)^2}\right] \\
    &\geq \frac{\mu'(z_1)}{2 + R\Delta}
    \end{align*}
    The final inequality follows from the fact that 
    \[
    \frac{1}{x} - \frac{1-e^{-x}}{x^2} \geq \frac{1}{2+x} \quad \forall x \geq 0.
    \]
    
    \vspace{3mm}
    \noindent\underline{Proof of RHS of \eqref{eq:self-concordant-second}:}
    Using the upper bound in \eqref{eq:exp_bounds}, we have 
    \begin{align*}
    \int_0^1 (1-\nu)\phi(\nu)\mathrm{d} \nu &\leq \mu'(z_1)\int_0^1 (1-\nu)e^{R\Delta\nu}\mathrm{d} \nu \\
    &= \frac{e^{R\Delta}-1-R\Delta}{(R\Delta)^2} \mu'(z_1) \\
    &\leq e^{R^2\Delta^2} \mu'(z_1).
    \end{align*}
    where we used $e^x-1-x \leq x^2e^{x^2}$ for $x\geq 0$. 
        
    The case for $z = z_2$ follows by symmetry.
    \end{proof}

\begin{myLemma}[\citet{Ville'1939}]
    \label{lem:maximum-inequality}
    Let $\{M_t\}_{t=0}^{\infty}$ be a supermartingale with $M_t \geq 0$ almost surely for all $t\geq 0$. Then, for any $\epsilon>0$,
    \begin{align*}
        \mathrm{Pr}\left[ \sup_{t\in\mathbb{N}} M_t \geq \epsilon \right] \leq \frac{\E[X_0]}{\epsilon}.
    \end{align*}
\end{myLemma}

\begin{myLemma}
\label{lem:integral-exp}
Let $P = \N(\mathbf{0}, \eta H^{-1})$ be a Gaussian distribution with mean $\mathbf{0}\in\R^d$ and covariance $\eta H^{-1}$, where $H\succcurlyeq \lambda I_d$ is a positive definite matrix and $\eta>0$ . Then, if $\lambda \geq  32d\eta c/7$ we have 
\begin{align*}
    \E_{\theta\sim P}\big[\exp\big(c\norm{\theta}_2^2\big)\big]\leq \frac{4}{3}.
\end{align*}
\end{myLemma}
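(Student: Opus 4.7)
The plan is to dominate the anisotropic covariance by an isotropic one, reduce the expectation to the moment generating function of a chi-squared distribution, and then verify a small numerical inequality in $d$.

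First I would exploit the spectral bound $H \succcurlyeq \lambda I_d$, which gives $\eta H^{-1} \preccurlyeq (\eta/\lambda) I_d$. Writing $\theta = \sqrt{\eta}\, H^{-1/2} Z$ with $Z\sim\mathcal{N}(\mathbf{0},I_d)$, the pointwise (in $Z$) bound
\begin{align*}
\norm{\theta}_2^2 = \eta\, Z^\top H^{-1} Z \;\leq\; \tfrac{\eta}{\lambda}\norm{Z}_2^2
\end{align*}
holds, and monotonicity of $x\mapsto e^{cx}$ yields
\begin{align*}
\E_{\theta\sim P}\!\bigl[\exp(c\norm{\theta}_2^2)\bigr] \;\leq\; \E_{Z\sim\mathcal{N}(\mathbf{0},I_d)}\!\bigl[\exp(\tfrac{c\eta}{\lambda}\norm{Z}_2^2)\bigr].
\end{align*}

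Next, since $\norm{Z}_2^2$ follows a chi-squared distribution with $d$ degrees of freedom, I would apply the standard identity $\E[\exp(s\chi^2_d)] = (1-2s)^{-d/2}$, valid for $s<1/2$, with $s = c\eta/\lambda$. Under the hypothesis $\lambda \geq 32 d\eta c/7$ we have $2s = 2c\eta/\lambda \leq 7/(16d) < 1/2$, so
\begin{align*}
\E_{\theta\sim P}\!\bigl[\exp(c\norm{\theta}_2^2)\bigr] \;\leq\; \bigl(1 - \tfrac{2c\eta}{\lambda}\bigr)^{-d/2} \;\leq\; \bigl(1 - \tfrac{7}{16d}\bigr)^{-d/2}.
\end{align*}

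Finally, the claim reduces to the purely numerical inequality $(1 - 7/(16d))^{-d/2} \leq 4/3$ for every integer $d\geq 1$. At $d=1$ the left-hand side equals $(9/16)^{-1/2} = 4/3$ exactly, so the bound is tight and the constant $32/7$ in the hypothesis is calibrated precisely to hit $4/3$. The main (mild) obstacle is verifying that $d=1$ is the worst case, i.e.\ that $\bigl(1-a/d\bigr)^{-d/2}$ is decreasing in $d$ for fixed $a\in(0,1)$. I would handle this by differentiating $h(d) \triangleq -\tfrac{d}{2}\log(1-a/d)$ and, with $u=a/d$, rewriting
\begin{align*}
h'(d) \;=\; -\tfrac{1}{2}\log(1-u) \;-\; \tfrac{u}{2(1-u)} \;=\; -\tfrac{1}{2}\sum_{k\geq 2}\tfrac{k-1}{k}\,u^k \;<\; 0,
\end{align*}
using the series expansions $-\log(1-u) = \sum_{k\geq 1} u^k/k$ and $u/(1-u) = \sum_{k\geq 1} u^k$. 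This establishes strict monotonicity in $d$ and completes the proof.
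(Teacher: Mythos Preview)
Your proposal is correct and follows essentially the same route as the paper: dominate $\eta H^{-1}$ by $(\eta/\lambda)I_d$, reduce to the chi-squared MGF, and check the constant at $d=1$. The only cosmetic difference is the last step: the paper obtains $(1-7/(16d))^{-d/2}\le(1-7/16)^{-1/2}=4/3$ in one line via Jensen's inequality (applying $x\mapsto x^d$ to $\E[\exp(\tfrac{c\eta}{\lambda}Z)]$ with $Z\sim\chi^2_1$), whereas you prove the same monotonicity in $d$ by differentiating and using power-series expansions.
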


\begin{proof}[Proof of Lemma~\ref{lem:integral-exp}]
Let $\theta\in\R^d$ be a random variable sampled from $P$. One can verify that it also follows the sample distribution as 
\begin{align*}
    \sum_{i=1}^d \sqrt{\eta \lambda_i} X_i \e_i~~~~~\mbox{and}~~~~~X_i \overset{i.i.d.}{\sim}\N(0,1),\ \forall i\in[d],
\end{align*}
where $\{\e_i\}_{i=1}^d$ is a set of orthogonal base and $\lambda_i$ is the $i$-th largest eigenvalue of $H^{-1}$. Then, we have 
\begin{align*}
    \E_{\theta\sim P}\big[\exp\big(c\norm{\theta}_2^2\big)\big] ={}& \E_{X_i\sim\N(0,1)}\left[\prod_{i=1}^d\exp\big(c \eta \lambda_i X_i^2\big)\right] \leq \bigg(\E_{X\sim\N(0,1)}\Big[\exp\big(c \eta X^2/\lambda\big)\Big]\bigg)^d\\
    ={}& \bigg(\E_{Z\sim\chi^2}\Big[\exp\Big(\frac{c \eta}{\lambda} Z\Big)\Big]\bigg)^d \leq \E_{Z\sim\chi^2}\Big[\exp\Big(\frac{c \eta d}{\lambda} Z\Big)\Big] \leq \frac{4}{3}
\end{align*}
where $\chi^2$ denotes the chi-squared distribution with degree of freedom 1. The first inequality is because $\max_{i\in[d]} \lambda_i \leq 1/\lambda$ due to the condition $H\succcurlyeq \lambda I_d$. The last second equality is by the Jensen's inequality since $x^d$ is a convex function with respect to $x$. The last inequality is due to the condition $c\eta d/\lambda \leq 7/32$ and the fact that the moment-generating function of the chi-squared distribution$\E[\exp(tZ)] \leq (1-2t)^{-1/2}$ for $t<1/2$.
\end{proof}

\begin{myLemma}[Lemma 9 of~\citet{conf/aistats/FauryAJC22}]
\label{lem:epl}
Let $\lambda\geq 1$ and $\{\z_s\}_{s=1}^\infty$ a sequence in $\R^d$ such that $\norm{\z_s}_2\leq Z$ for all $s\in\mathbb{N}$. For $t\geq 2 $ define $V_{t} := \sum_{s=1}^{t-1} \z_s\z_s^\top + \lambda I_d$. The following inequality holds
\begin{align*}
    \sum_{t=1}^T \norm{\z_t}^2_{V_t} \leq 2d(1+Z^2) \log \left(1+ \frac{T Z^2}{d\lambda}\right).
\end{align*}
\end{myLemma}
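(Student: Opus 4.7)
The plan is to reduce the sum of Mahalanobis norms to a telescoping sum of log-determinants via the matrix determinant lemma, and then to control the terminal determinant by an AM-GM bound on its eigenvalues. Throughout I will treat the weighted norm as $\norm{\z_t}_{V_t^{-1}}^2$ (otherwise the inequality is trivially false once the $V_t$ grows), which is the form under which this lemma is classically used (cf.\ the applications in the proof of \pref{thm:regret}).

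First, I would exploit the rank-one update $V_{t+1} = V_t + \z_t\z_t^\top$. By Sylvester's determinant identity,
\begin{equation*}
\det(V_{t+1}) = \det(V_t)\bigl(1 + \z_t^\top V_t^{-1}\z_t\bigr) = \det(V_t)\bigl(1 + \norm{\z_t}_{V_t^{-1}}^2\bigr).
\end{equation*}
Taking logs and telescoping across $t = 1,\dots,T$ gives
\begin{equation*}
\sum_{t=1}^T \log\!\bigl(1 + \norm{\z_t}_{V_t^{-1}}^2\bigr) \;=\; \log\frac{\det(V_{T+1})}{\det(V_1)} \;=\; \log\frac{\det(V_{T+1})}{\lambda^d}.
\end{equation*}
I would then apply AM-GM to the eigenvalues of $V_{T+1}$: since $\mathrm{tr}(V_{T+1}) = d\lambda + \sum_{s=1}^T \norm{\z_s}_2^2 \leq d\lambda + TZ^2$, we obtain $\det(V_{T+1}) \leq (\lambda + TZ^2/d)^d$, and therefore $\sum_{t=1}^T \log(1 + \norm{\z_t}_{V_t^{-1}}^2) \leq d\log(1 + TZ^2/(d\lambda))$.

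The remaining step is to convert the log terms back to bare $\norm{\z_t}_{V_t^{-1}}^2$. I would use the classical inequality $\log(1+x) \geq x/(1+x)$, rearranged as $x \leq (1+x)\log(1+x)$. Because $\norm{\z_t}_{V_t^{-1}}^2 \leq \norm{\z_t}_2^2/\lambda_{\min}(V_t) \leq Z^2/\lambda \leq Z^2$ (using $\lambda \geq 1$), this upgrades to $\norm{\z_t}_{V_t^{-1}}^2 \leq (1+Z^2)\log(1+\norm{\z_t}_{V_t^{-1}}^2)$. Summing across $t$ and plugging in the determinant bound yields
\begin{equation*}
\sum_{t=1}^T \norm{\z_t}_{V_t^{-1}}^2 \;\leq\; (1+Z^2)\, d\,\log\!\Bigl(1 + \tfrac{TZ^2}{d\lambda}\Bigr),
\end{equation*}
which matches the claimed bound up to the constant factor of $2$.

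The main obstacle is really only cosmetic: nailing down the precise prefactor $2d(1+Z^2)$ rather than $d(1+Z^2)$. The factor of $2$ is a standard slackening that can be recovered by instead using a cruder inequality such as $x \leq 2\log(1+x)\cdot \max\{1,x\}$ (valid on $x \in [0,Z^2]$), which trades sharpness for a cleaner closed form. All three ingredients — Sylvester's identity, AM-GM on eigenvalues, and $\log(1+x)\geq x/(1+x)$ — are elementary, so I do not expect any conceptual difficulty beyond bookkeeping.
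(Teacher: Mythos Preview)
The paper does not supply its own proof of this lemma; it simply cites it as Lemma~9 of \citet{conf/aistats/FauryAJC22}. Your argument is the standard elliptical potential lemma proof (Sylvester's determinant identity, telescoping log-determinants, AM--GM on eigenvalues, and the elementary bound $x \leq (1+x)\log(1+x)$), and it is correct as written. You also correctly identified that the norm in the statement should be $\norm{\z_t}_{V_t^{-1}}^2$, consistent with how the lemma is invoked in the proof of \pref{thm:regret}. In fact your chain of inequalities yields the sharper constant $d(1+Z^2)$ rather than $2d(1+Z^2)$; the factor of $2$ in the quoted statement is slack, so there is nothing to recover --- your bound already implies the stated one.
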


\begin{myLemma}
    \label{lem:mixability-OMD}
Let $P$ be a probability distribution defined over $\R^d$ and $\Delta$ be the set of all measurable distributions. For any loss function $\ell:\R^d\rightarrow \R$, we have
\begin{align}
    -\frac{1}{\alpha}\log\left(\E_{\theta\sim P}[e^{-\alpha \ell(\theta)}]\right) = {}& \E_{\theta\sim P_*}[\ell(\theta)] + \frac{1}{\alpha}\mathrm{KL}(P_* \Vert P), \label{eq:mixability-OMD-1}
\end{align}
where $P_* = \argmin_{P'\in\Delta} \E_{\theta\sim P'}[\ell(\theta)] + \frac{1}{\alpha}\mathrm{KL}(P' \Vert P)$ is the optimal solution. Furthermore, for any distribution $Q$ defined over $\R^d$, we have 
\begin{align}
    -\frac{1}{\alpha}\log\left(\E_{\theta\sim P}[e^{-\alpha \ell(\theta)}]\right) = \E_{\theta\sim Q}[\ell(\theta)] + \frac{1}{\alpha} \mathrm{KL}(Q\Vert P) - \frac{1}{\alpha} \mathrm{KL} (Q\Vert P_*). \label{eq:mixability-OMD-2}
\end{align}
\end{myLemma}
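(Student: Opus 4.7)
The plan is to recognize Lemma~\ref{lem:mixability-OMD} as the standard Donsker--Varadhan variational representation of the log-partition functional $-\frac{1}{\alpha}\log\E_{\theta\sim P}[e^{-\alpha\ell(\theta)}]$, and to prove it by explicitly identifying the minimizer $P_*$ as the Gibbs tilt of $P$. Concretely, I would introduce the normalizing constant $Z := \E_{\theta \sim P}[e^{-\alpha\ell(\theta)}]$ and the candidate distribution $P_*$ whose Radon--Nikodym derivative with respect to $P$ is $\frac{dP_*}{dP}(\theta) = e^{-\alpha\ell(\theta)}/Z$, which is automatically a probability measure on $\R^d$.

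The core computation is then purely algebraic. For any distribution $Q \ll P$ (so that the KL terms are finite), I would exploit the chain identity $\log(dQ/dP) = \log(dQ/dP_*) + \log(dP_*/dP) = \log(dQ/dP_*) - \alpha\ell(\theta) - \log Z$, and take $Q$-expectations on both sides to obtain
\begin{align*}
\mathrm{KL}(Q \Vert P) \;=\; \mathrm{KL}(Q \Vert P_*) \;-\; \alpha\,\E_{\theta \sim Q}[\ell(\theta)] \;-\; \log Z.
\end{align*}
Dividing by $\alpha$ and rearranging immediately yields~\eqref{eq:mixability-OMD-2} for arbitrary $Q$, which is already the stronger of the two claims in the lemma.

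The first identity~\eqref{eq:mixability-OMD-1} then drops out as the special case $Q = P_*$, where the $\mathrm{KL}(Q \Vert P_*)$ term vanishes and we get $\E_{\theta\sim P_*}[\ell(\theta)] + \frac{1}{\alpha}\mathrm{KL}(P_*\Vert P) = -\frac{1}{\alpha}\log Z$. To verify that this $P_*$ is indeed the minimizer named in the statement, I would invoke Gibbs' inequality $\mathrm{KL}(Q \Vert P_*) \geq 0$ with equality iff $Q = P_*$; applied to~\eqref{eq:mixability-OMD-2}, this shows $\E_{\theta \sim Q}[\ell(\theta)] + \frac{1}{\alpha}\mathrm{KL}(Q \Vert P) \geq -\frac{1}{\alpha}\log Z$ for every $Q \in \Delta$, with the inequality tight at $Q = P_*$. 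This simultaneously confirms that $P_*$ solves the variational problem and that the minimum equals $-\frac{1}{\alpha}\log Z$.

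I do not anticipate a serious obstacle: the entire argument reduces to density manipulations and a single appeal to the non-negativity of relative entropy. The only subtlety worth flagging is the measurability and integrability setup---if $\mathrm{KL}(Q\Vert P) = \infty$ the variational objective is trivially $+\infty$, so restricting the infimum to $Q \ll P$ with finite KL is without loss of generality and legitimizes the change-of-density calculations. One also implicitly needs $Z > 0$ and $|\log Z| < \infty$, which hold as long as $\ell$ is $P$-integrable in the relevant sense; this is the operative regime in which the lemma is used (e.g.\ in Lemma~\ref{lem:stability-mix-loss}, where $\ell_s$ is a GLM negative log-likelihood and $P_s$ is Gaussian).
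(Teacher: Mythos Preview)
Your proposal is correct and takes essentially the same approach as the paper: both identify $P_*$ as the Gibbs tilt $dP_*/dP \propto e^{-\alpha\ell}$ and then verify the two identities by direct KL-divergence algebra using $\log(dP_*/dP) = -\alpha\ell - \log Z$. The only cosmetic difference is ordering---you derive~\eqref{eq:mixability-OMD-2} first and specialize to~\eqref{eq:mixability-OMD-1}, whereas the paper goes in the reverse direction---and you explicitly justify the $\argmin$ claim via Gibbs' inequality, which the paper leaves as ``one can check.''
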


\begin{proof}[Proof of Lemma~\ref{lem:mixability-OMD}] To prove~\eqref{eq:mixability-OMD-1}, one can check that the optimal solution to the optimization problem on the right-hand side of~\eqref{eq:mixability-OMD-1} is given by
    \begin{align*}
        P_*(\theta) = \frac{P(\theta)e^{-\alpha\ell(\theta)}}{\int_{\theta\in\R^d} P(\theta) e^{-\alpha\ell(\theta)}\mathrm{d} \theta}.
    \end{align*}
    Substituting $P_*$ back into the right-hand side yields the desired equality. Alternatively,~\eqref{eq:mixability-OMD-1} can be shown by noting that the mix loss is the convex conjugate of the Kullback–Leibler divergence~\citep{COLT'15:generalized-mixability}. By the definition of the convex conjugate, the equality holds.
    
    To prove the second part of the lemma, namely~\eqref{eq:mixability-OMD-2}, let $Z = \int_{\theta\in\R^d} P(\theta) e^{-\alpha \ell(\theta)}\mathrm{d}\theta$. We then have 
    \begin{align*}
        {}&\frac{1}{\alpha} \mathrm{KL}(Q\Vert P) - \frac{1}{\alpha} \mathrm{KL}(Q \Vert P_*) - \frac{1}{\alpha}\mathrm{KL}(P_* \Vert P)\\
        ={}& \frac{1}{\alpha} \E_{\theta\sim Q} \left[ \ln \Big( \frac{P_*(\theta)}{P(\theta)} \Big) \right] - \frac{1}{\alpha}\E_{\theta\sim P_*} \left[ \ln \Big( \frac{P_*(\theta)}{P(\theta)} \Big) \right]\\
        ={}& \frac{1}{\alpha} \E_{\theta\sim Q} \left[ \ln \Big( \frac{e^{-\alpha \ell(\theta)}}{Z} \Big) \right] - \frac{1}{\alpha} \E_{\theta\sim P_*} \left[ \ln \Big( \frac{e^{-\alpha \ell(\theta)}}{Z} \Big) \right]\\
        ={}&\E_{\theta\sim P_*} [\ell(\theta)] - \E_{\theta\sim Q} [\ell(\theta)].
        \end{align*}
        where the second equality is due to the fact that ${P_*(\theta)}/{P(\theta)} = {e^{-\alpha \ell(\theta)}}/{Z}$ for all $\theta\in\R^d$. Then, rearranging the above displayed equation gives us
        \begin{align*}
            \E_{\theta\sim P_*} [\ell(\theta)] +\frac{1}{\alpha}\mathrm{KL}(P_* \Vert P) = \E_{\theta\sim Q} [\ell(\theta)] + \frac{1}{\alpha} \mathrm{KL}(Q\Vert P) - \frac{1}{\alpha} \mathrm{KL}(Q\Vert P_*),
        \end{align*}
        which completes the proof by using~\eqref{eq:mixability-OMD-2}.
\end{proof}

\begin{myLemma}[Theorem 1.8.2 of~\citet{Book:information-theory}]
    \label{lem:KL-divergence-gaussian}
    The Kullback-Leibler divergence between two $d$-dimensional Gaussian distributions $P = \N(\u_p,\Sigma_P)$ and $Q = \N(\u_q,\Sigma_q)$ is given by 
    \begin{align*}
      \mathrm{KL}(Q \Vert P) = \frac{1}{2}\left(\ln \left(\frac{\vert\Sigma_p\vert}{\vert\Sigma_q\vert}\right) + \mathrm{Tr}(\Sigma_q \Sigma_p^{-1}) + \norm{\u_p-\u_q}^2_{\Sigma_p^{-1}} - d\right).
    \end{align*}
  \end{myLemma}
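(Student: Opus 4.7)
The plan is to unpack the definition $\mathrm{KL}(Q \Vert P) = \E_{\theta \sim Q}\bigl[\ln(q(\theta)/p(\theta))\bigr]$, substitute the densities of the two multivariate Gaussians, and reduce everything to elementary Gaussian moment calculations. This is a classical computation and I do not expect any genuine obstacle; the only point that takes a line of work is evaluating the expectation of a quadratic form under a Gaussian, which is a standard trace identity.

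First I would write the density of $\mathcal{N}(\u,\Sigma)$ in the usual form $(2\pi)^{-d/2}|\Sigma|^{-1/2}\exp\bigl(-\tfrac{1}{2}(\theta-\u)^\top\Sigma^{-1}(\theta-\u)\bigr)$ and take logs. The log-ratio splits into a deterministic log-determinant term $\tfrac{1}{2}\ln(|\Sigma_p|/|\Sigma_q|)$ and the difference of two quadratic forms, one in $\Sigma_p^{-1}$ centred at $\u_p$ and one in $\Sigma_q^{-1}$ centred at $\u_q$. Taking $\E_{\theta\sim Q}$ leaves the log-determinant untouched and reduces the problem to computing these two quadratic expectations.

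For the $Q$-term, since $\theta-\u_q$ has covariance $\Sigma_q$ under $Q$, I would use the identity $\E_Q[(\theta-\u_q)^\top A (\theta-\u_q)] = \mathrm{Tr}(A\Sigma_q)$ with $A=\Sigma_q^{-1}$ to obtain $d$. For the $P$-term, I would decompose $\theta-\u_p = (\theta-\u_q)+(\u_q-\u_p)$, expand the quadratic form in $\Sigma_p^{-1}$, note that the cross term vanishes because $\E_Q[\theta-\u_q]=0$, and apply the same trace identity to get $\mathrm{Tr}(\Sigma_p^{-1}\Sigma_q) + \norm{\u_p-\u_q}^2_{\Sigma_p^{-1}}$.

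Collecting the three pieces and multiplying by $\tfrac{1}{2}$ yields exactly the stated expression, with the $-d$ coming from the $Q$-quadratic form and the $+\mathrm{Tr}(\Sigma_q\Sigma_p^{-1})$ from the $P$-quadratic (using $\mathrm{Tr}(\Sigma_p^{-1}\Sigma_q)=\mathrm{Tr}(\Sigma_q\Sigma_p^{-1})$ by the cyclic property of the trace). Since every step is a direct substitution or a standard Gaussian moment identity, there is no real obstacle; the proof is essentially bookkeeping once the log-ratio has been written out.
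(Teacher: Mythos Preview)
Your proposal is correct and is the standard derivation of this classical identity. The paper does not give its own proof of this lemma; it is stated there as a direct citation of Theorem~1.8.2 from an information-theory textbook, so there is nothing to compare against.
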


  \section{More Discussions on~\citet{arXiv:MNL-adaptive-warmup}}
\label{appendix:more-discussion}
For MNL bandits, the concurrent work~\citep{arXiv:MNL-adaptive-warmup} also claimed an ${\O}(\sqrt{\log T})$ improvement in the regret bound, which could potentially be applied to logistic bandits to achieve an $\O(\log T \sqrt{T/\kappa_*})$ bound with $\O(1)$ computational cost. However, their analysis critically relies on a specific upper bound on the normalization factor of a truncated Gaussian, which may not always hold. We elaborate on the main technical issue below in the context of the logistic bandit problem.

With slight abuse of notation, we define the logistic loss as $\ell_s(\theta) = r_t X_t^\top\theta + \log(1+\exp(X_t^\top\theta))$, where $X_t$ is the action selected by the learner and $r_t\in\{0,1\}$ is the observed reward. Specifically, building upon the framework of~\citet{NeurIPS'23:MLogB}, the Lemma C.1 of~\citet{arXiv:MNL-adaptive-warmup} shows that the estimation error of their estimator satisfies
\begin{align*}
    \norm{\theta_{t+1} - \theta_*}^2_{H_{t+1}} \lesssim  \underbrace{\sum_{s=1}^{t} \ell_s(\theta_*) - \sum_{s=1}^t \bar{\ell}_s(\tilde{z}_s)}_{\texttt{term~(a)}} + \underbrace{\sum_{s=1}^t \bar{\ell}_s(\tilde{z}_s) -  \sum_{s=1}^t \ell_s(\theta_{s+1})}_{\texttt{term~(b)}},
\end{align*}
where $\bar{\ell}_s(z) = r_s z + \log(1+\exp(z))$. In~\citet{arXiv:MNL-adaptive-warmup}, the intermediate term is chosen as $\tilde{z}_s = \sigma^+\left( \E_{\theta\sim{P_s}}[\sigma(X_s^\top\theta)]\right)$, where $\sigma(z) = 1/(1+e^{-z})$ and $\sigma^+(p) = \log (\frac{p}{1-p})$.

A key step of their analysis lies in the choice of $P_s$, which ensures that term~(a) and term~(b) can be bounded separately. In contrast to the Gaussian distribution used in~\citet{NeurIPS'23:MLogB},~\citet{arXiv:MNL-adaptive-warmup} take $P_s$ as a truncated Gaussian, thereby allowing the bound on term~(a) to avoid the additional $\O(\sqrt{T})$ factor incurred in~\citet{NeurIPS'23:MLogB}. However, when analyzing term~(b), their argument relies on a crucial condition for the normalization factor of the truncated Gaussian distribution, which does not holds in general (see Eqn (C.15) in their paper).

For completeness, we restate it below: there exists a constant $\mathfrak{C}_s\geq 1$ such that
\begin{equation}
    \label{eq:integral-Lee-Oh}
\int_{\|\theta\|_{H_s} \leq \frac{3}{2}\gamma} e^{-\frac{1}{2c} \|\theta\|_{H_s}^2} \, \mathrm{d}\theta
\leq 
\int_{\|\theta\|_{H_s} \leq \frac{1}{2}\gamma} e^{-\frac{\mathfrak{C}_s}{2c} \|\theta\|_{H_s}^2} \, \mathrm{d}\theta,
\end{equation}
where $\gamma,c>0$ are certain constants and $H_s$ is a symmetric positive-definite matrix. Condition~\eqref{eq:integral-Lee-Oh} plays an important role in ensuring that the term $\log (Z_s/\hat{Z}_{s+1})$ in Eq.~(C.17) of the paper remains non-positive and does not affect the final bound of~term(b). However, it is not evident how to select $\mathfrak{C}_s \geq 1$ to guarantee this property holds throughout, as the left-hand side integrates over a strictly larger region while the integrand decays more slowly.

  % !TEX root = ../../main.tex
\section{Additional Experimental Details}
\label{appendix:experiment}
In this section, we provide additional experimental details and results.

\subsection{Experimental Setup}

\paragraph{Implementation Details.} All the experiments were conducted on Intel Xeon Gold 6242R processors (40 cores, 4.1GHz base frequency). 
The algorithms were implemented in Python, utilizing the \texttt{scipy} library for numerical computations, such as solving non-linear optimization problems and calculating vector norms, and employing \texttt{np.linalg.pinv} to compute the pseudo-inverse of matrices. The running time was measured using the \texttt{time} library.
The shaded regions in the regret plots represent 99\% confidence intervals, computed from 10 independent runs with different random seeds. 

\paragraph{Algorithm Configuration.} 
Throughout our experiments, all algorithm parameters were configured according to their theoretical derivations without additional fine-tuning, with the sole exception of the regularization parameter $\lambda$.
To ensure a fair comparison, we adopted a unified approach for setting $\lambda$ across different algorithm categories: we set $\lambda = d$ for all efficient online algorithms (including \texttt{GLB-OMD}, \texttt{RS-GLinCB}, \texttt{ECOLog}, and \texttt{GLOC}), while using $\lambda = d \log (1+t)$ for offline algorithms that require regularization. This distinction reflects the practical consideration that real-world scenarios often exhibit more favorable conditions than the worst-case assumptions.

\subsection{More results on Synthetic Data}
\begin{figure}[!t]
    \centering
    \begin{subfigure}[b]{0.323\linewidth}
        \centering
        \includegraphics[width=\linewidth]{./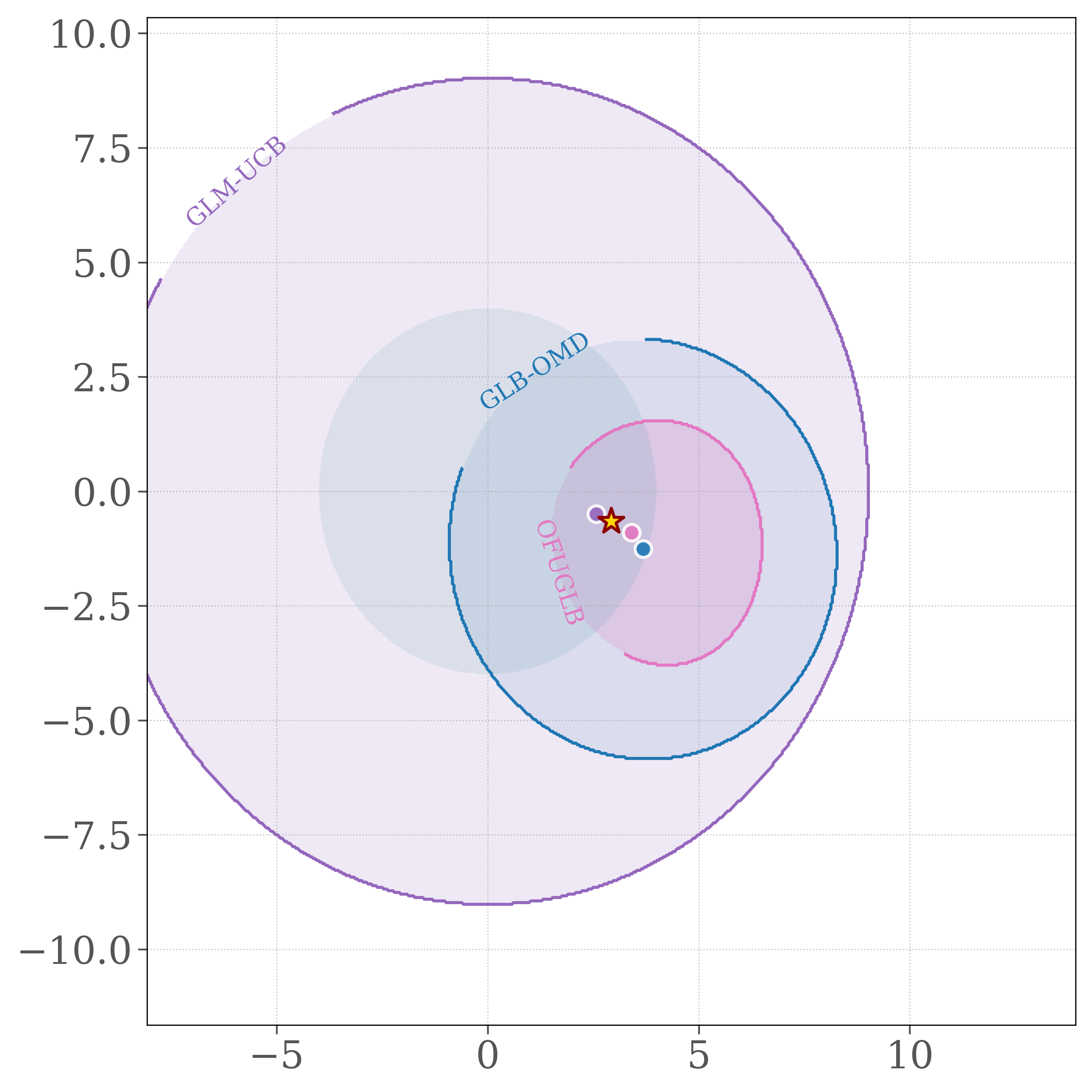}
        \caption{$d = 2, S = 3, T = 300$.}
        \label{fig:cfd2hz300S3}
    \end{subfigure}
    \begin{subfigure}[b]{0.323\linewidth}
        \centering
        \includegraphics[width=\linewidth]{./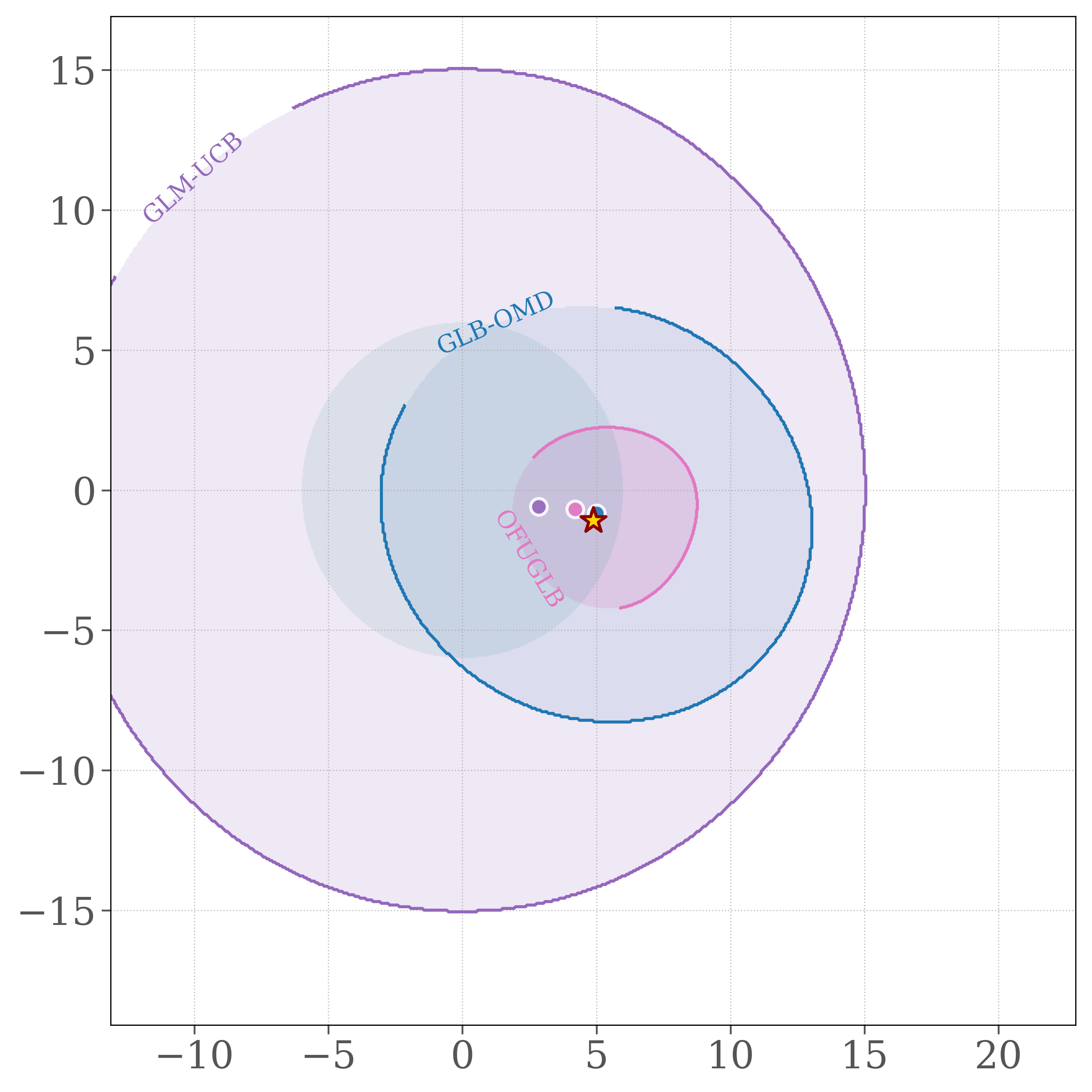}
        \caption{$d = 2, S = 5, T = 300$.}
        \label{fig:cfd2hz300S5}
    \end{subfigure}
    \begin{subfigure}[b]{0.323\linewidth}
        \centering
        \includegraphics[width=\linewidth]{./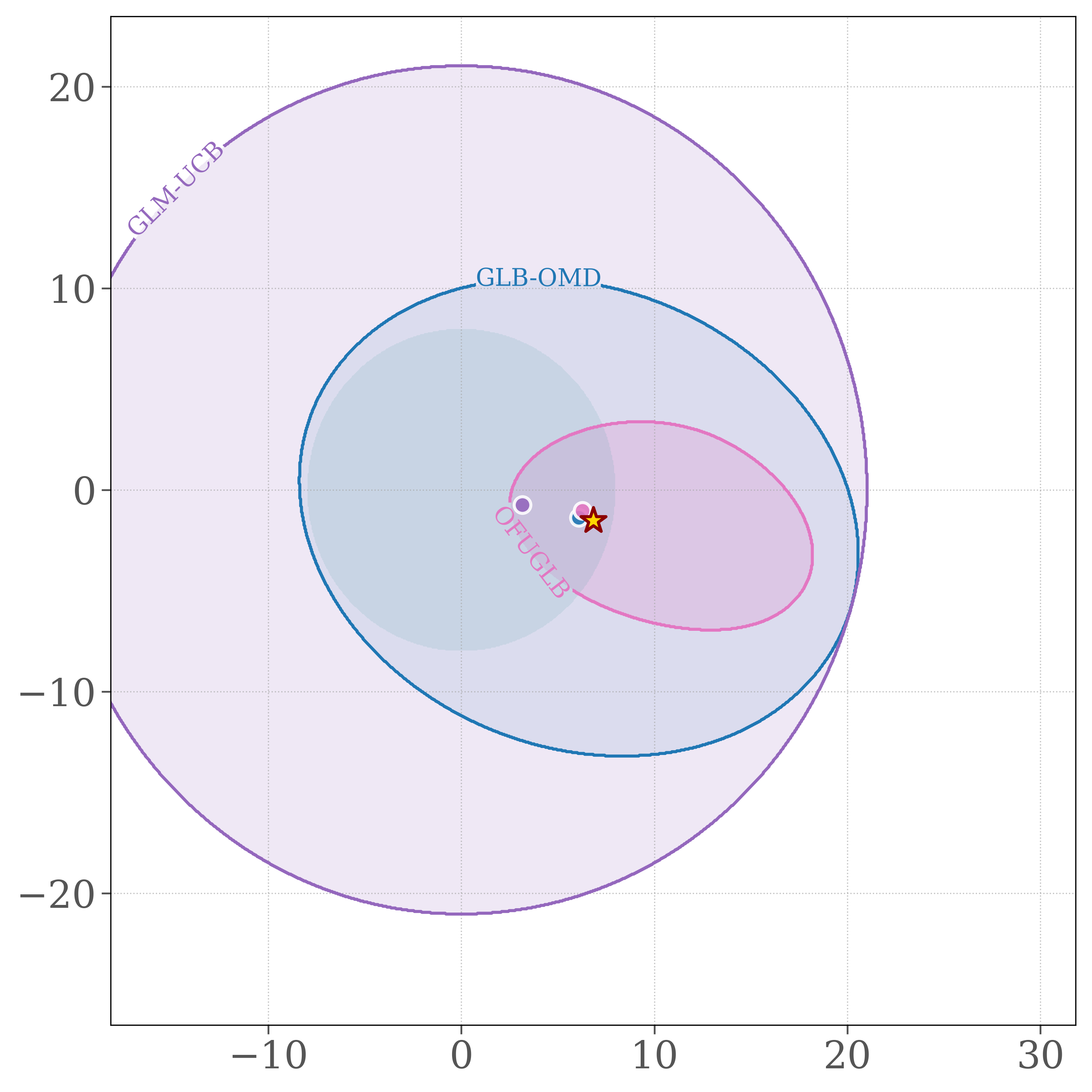}
        \caption{$d = 2, S = 7, T = 300$.}
        \label{fig:cfd2hz300S7}
    \end{subfigure}
    \caption{Confidence Region of Parameter Estimation.}
    \label{fig:confidence_region}
\end{figure}

\begin{figure}[!t]
    \centering
    \begin{subfigure}[b]{0.323\linewidth}
        \centering
        \includegraphics[width=\linewidth]{./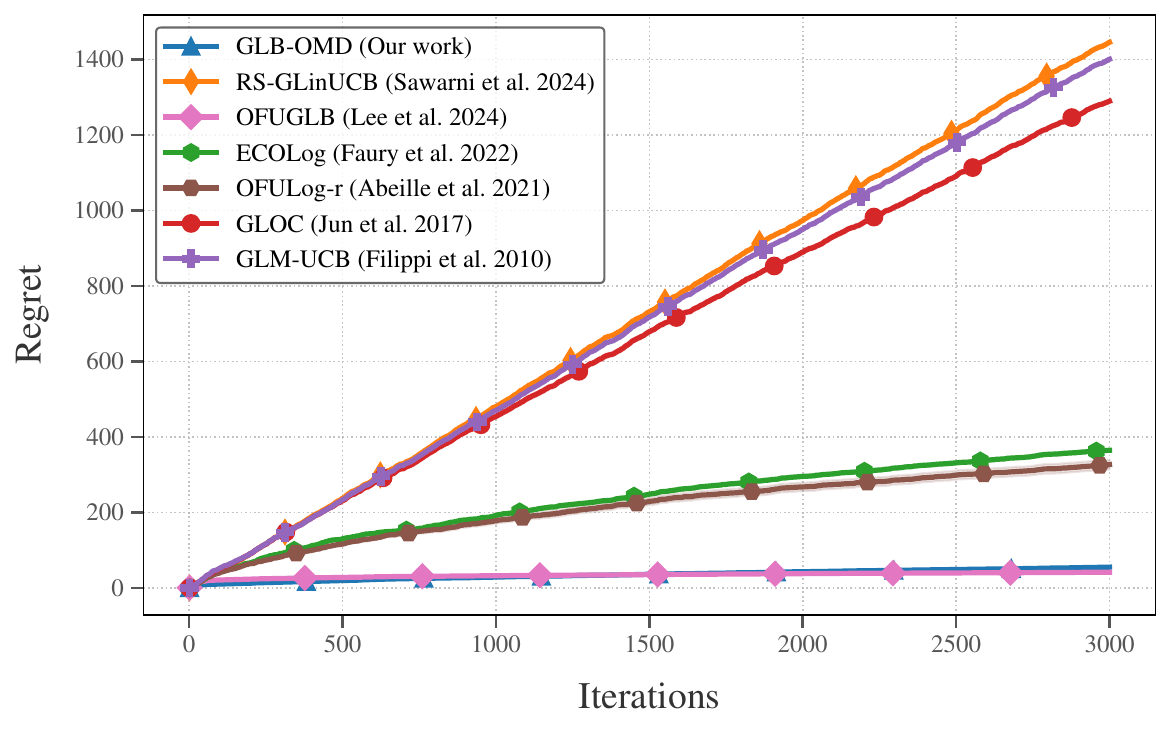}
        \caption{Logistic Bandit ($S=7$)}
        \label{fig:logistic7}
    \end{subfigure}
    \hfill
    \begin{subfigure}[b]{0.323\linewidth}
        \centering
        \includegraphics[width=\linewidth]{./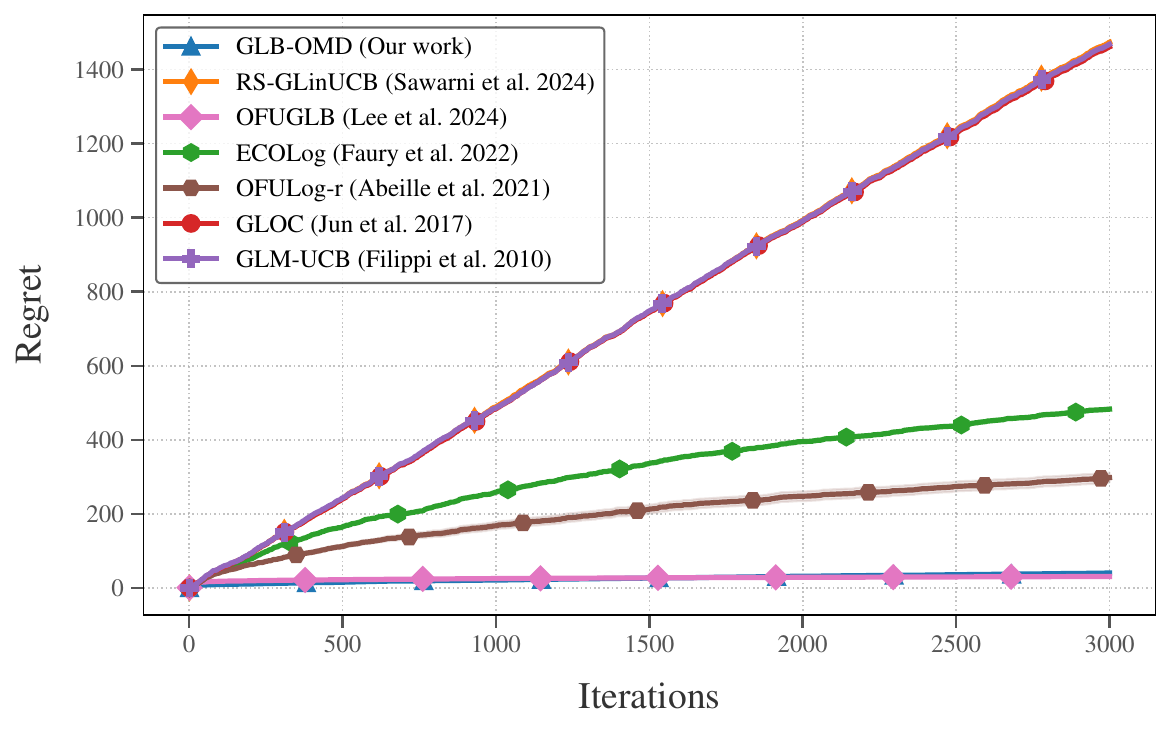}
        \caption{Logistic Bandit ($S=9$)}
        \label{fig:logistic9}
    \end{subfigure}
    \hfill
    \begin{subfigure}[b]{0.334\linewidth}
        \centering
        \includegraphics[width=\linewidth]{./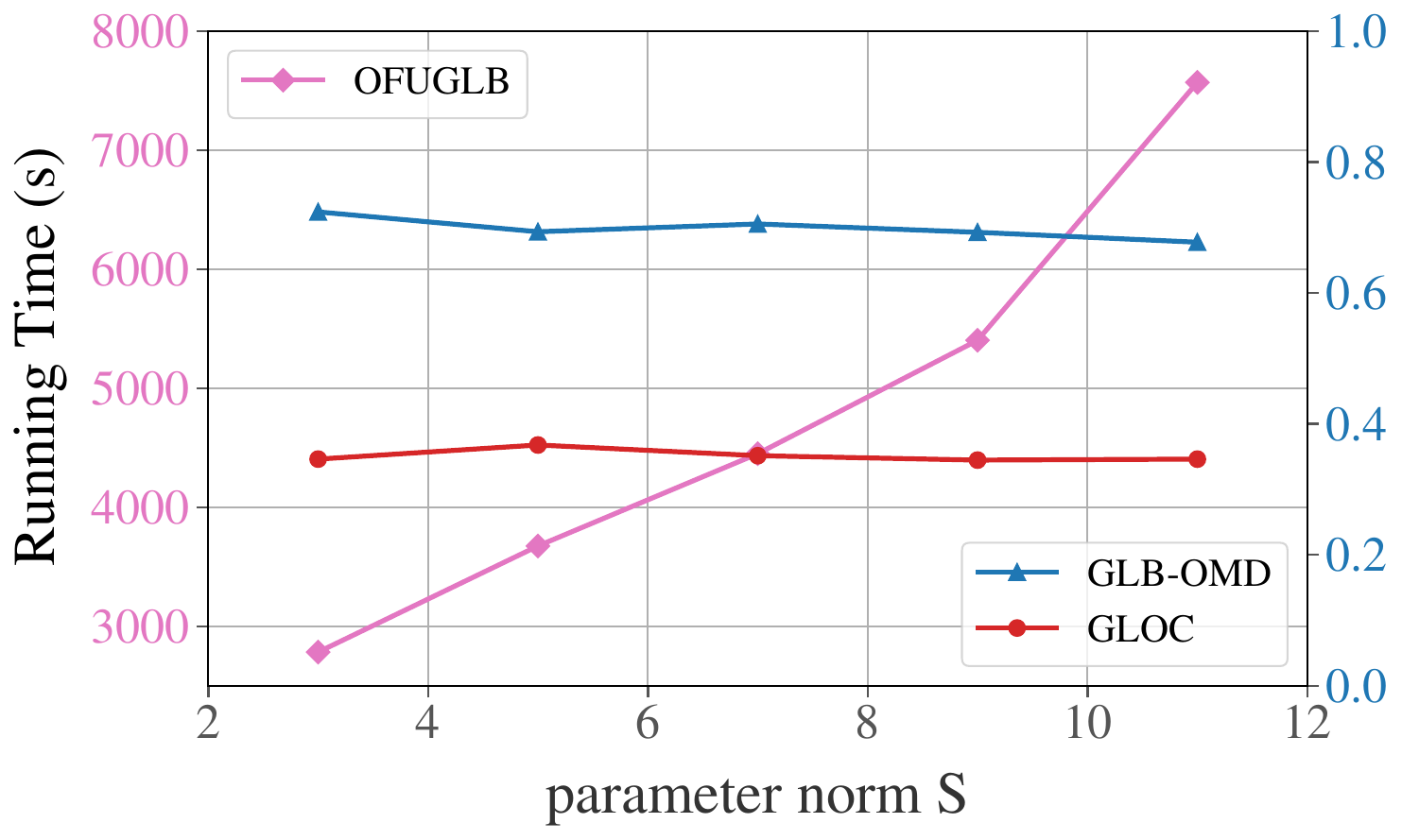}
        \caption{Runtime vs $S$ on Poisson Bandit}
        \label{fig:poisson_dependence_S}
    \end{subfigure}
    \caption{Regret and Running Time Dependence on $S$.}
    \label{fig:more_results}
\end{figure}

\begin{figure}[!t]
    \centering
    \begin{subfigure}[b]{0.323\linewidth}
        \centering
        \includegraphics[width=\linewidth]{./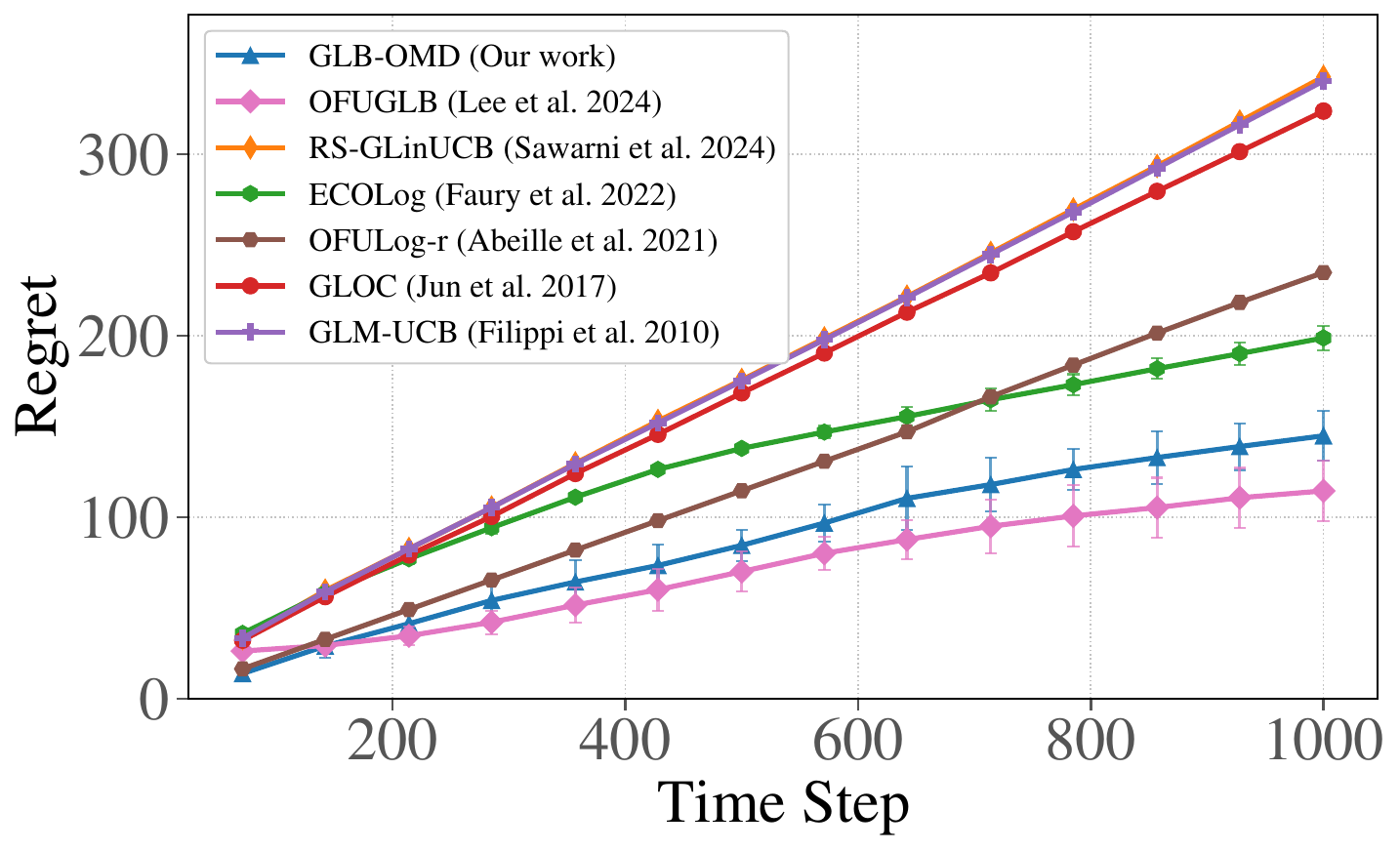}
        \caption{Regret for Cover Type Data}
        \label{fig:covertype_regret_1000}
    \end{subfigure}
    \hfill
    \begin{subfigure}[b]{0.334\linewidth}
        \centering
        \includegraphics[width=\linewidth]{./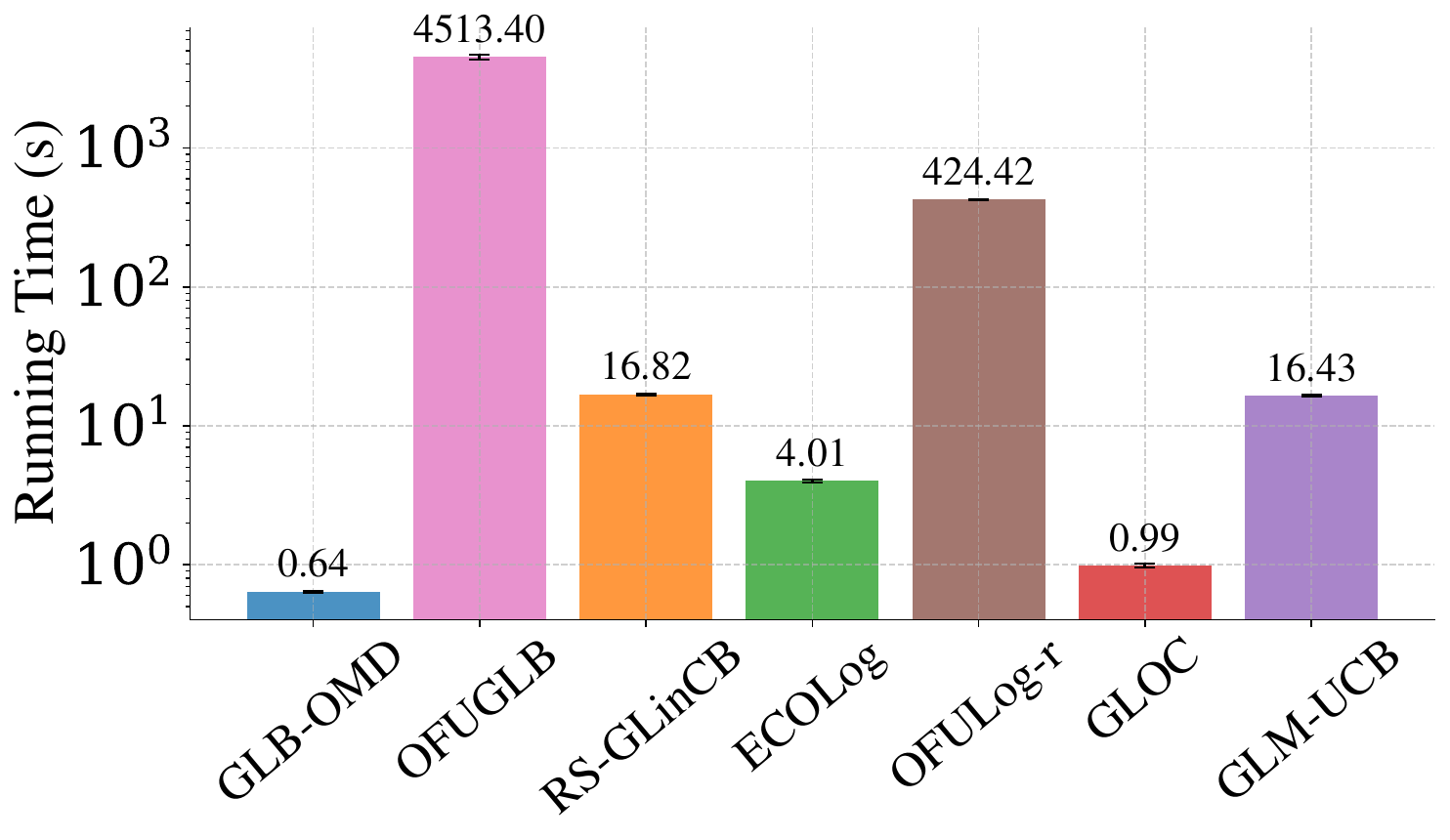}
        \caption{Running time for Cover Type Data}
        \label{fig:covertype_time_1000}
    \end{subfigure}
    \hfill
    \begin{subfigure}[b]{0.323\linewidth}
        \centering
        \includegraphics[width=\linewidth]{./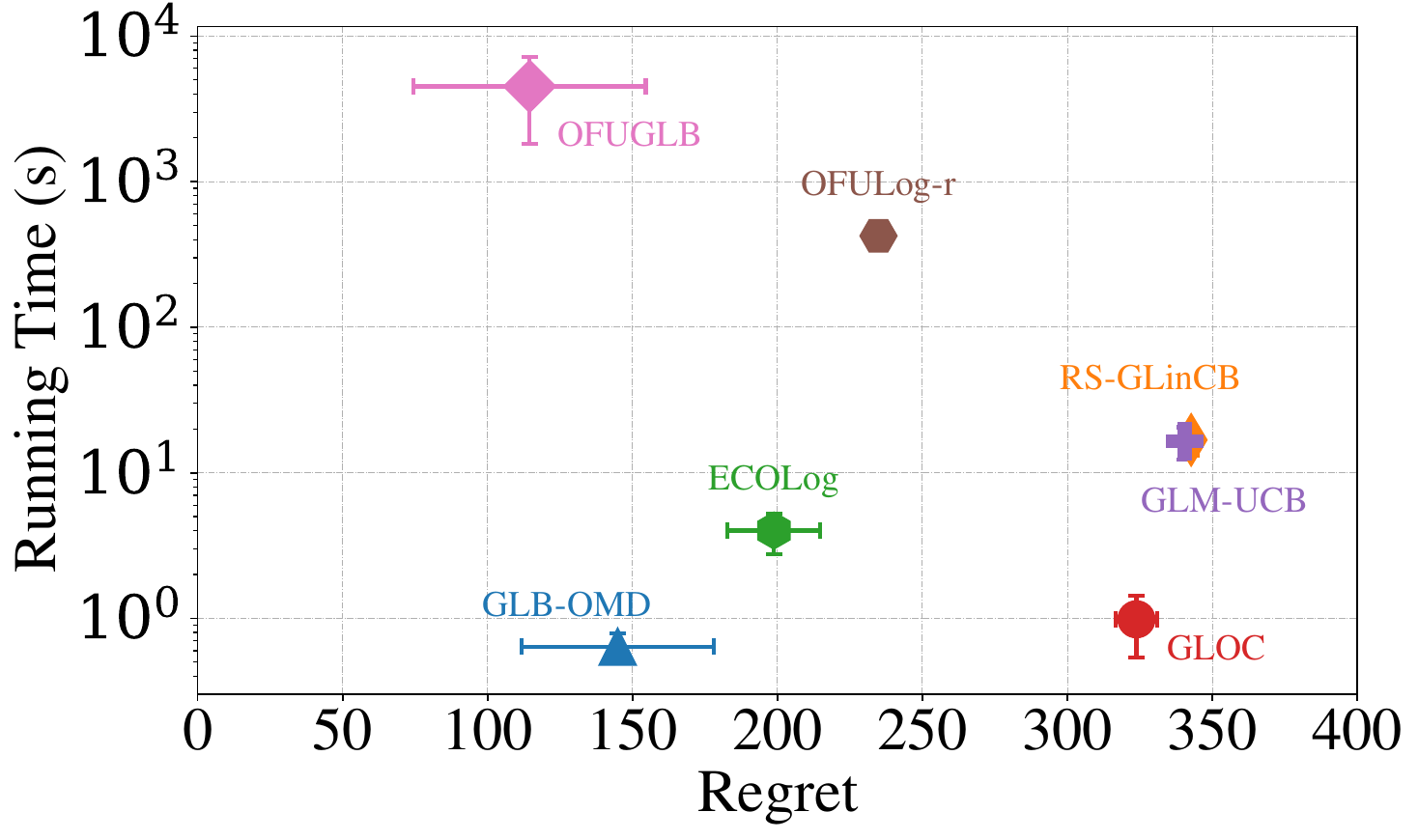}
        \caption{Regret vs Running time}
        \label{fig:covertype_time_regret_1000}
    \end{subfigure}
    \caption{Performance comparison of different algorithms on Cover Type Data}
    \label{fig:more_results_real_data}
\end{figure}

To visualize the accuracy of parameter estimation of the algorithms, we plot the confidence region of the parameter estimation for each algorithm in Figure~\ref{fig:confidence_region}. For illustration purposes, we only plot the confidence regions of our algorithm \texttt{GLB-OMD}, the theoretically optimal \texttt{OFUGLB}, and the classical \texttt{GLM-UCB}. We observe that both \texttt{GLB-OMD} and \texttt{OFUGLB} achieve a substantially smaller confidence region than \texttt{GLM-UCB}, indicating that our algorithm achieves an accurate parameter estimation comparable to the statistically optimal.

To further investigate the impact of parameter $S$ on algorithm performance, we conduct additional experiments on logistic bandit tasks with larger $S$ values (Figures~\ref{fig:logistic7} and~\ref{fig:logistic9}) and analyze the computational time scaling for Poisson bandits (Figure~\ref{fig:poisson_dependence_S}). 

The regret curves in Figures~\ref{fig:logistic7} and~\ref{fig:logistic9} consistently demonstrate that our algorithm maintains its competitive performance regardless of $S$ variations, aligning with the trends observed in our main results. 
Notably, the regret does not exhibit significant growth as $S$ increases, suggesting the robustness of our approach to parameter scaling.
We also note that the performance of \texttt{RS-GLinCB} is very sensitive to the parameter of $S$.
This underperformance can be attributed to the fact that the warm-up period of \texttt{RS-GLinCB} is heavily dependent on the constant $S$ and $\kappa$~\citep[Lemma 4.1]{NeurIPS'24:GLB-limited-adaptivity}. 

The runtime curves in Figure~\ref{fig:poisson_dependence_S} reveals two key findings: First, our algorithm's running time remains stable (under 1 second for $T=3000$) across different $S$ values in Poisson bandit tasks. 
Second, in contrast to this consistent performance, \texttt{OFUGLB} exhibits a pronounced computational overhead that scales with $S$ (requiring 2783 seconds at $S=3$ compared to 7568 seconds at $S=9$).
This divergence can be attributed to the confidence radius construction in \texttt{OFUGLB}:
\begin{align}
    \mathcal{C}_t(\delta) := \left\{ \theta \in \Theta : \mathcal{L}_t(\theta) - \mathcal{L}_t(\widehat{\theta}_t) \leq \beta_t(\delta)^2 \right\},
\end{align}
where $\beta_t(\delta)^2 = \log \frac{1}{\delta} + \inf_{c_t \in (0,1]} \left\{ d \log \frac{1}{c_t} + 2SL_t c_t \right\} \leq \log \frac{1}{\delta} + d \log \left( e \vee \frac{2eSL_t}{d} \right)$. For Poisson bandits specifically, $L_t = e^S t + \log(d/\delta)$. 
This results in increasing cost in the optimization steps during the arm selection $X_t =  \argmax_{\x\in\X_t} \max_{\theta\in\C_t(\delta)} \x^\top\theta$, as the algorithm needs to navigate a rapidly expanding nonconvex confidence region.

Overall, our algorithm demonstrates comparable statistical performance to the theoretically optimal \texttt{OFUGLB} while offering substantially improved computational efficiency.

\subsection{Experiment on Forest Cover Type Data}
In this experiment, we evaluate our proposed algorithm on the Forest Cover Type dataset from the UCI Machine Learning repository~\citep{covertype}. This dataset comprises 581,012 labeled observations from different forest regions, with each label indicating the dominant tree species. 

Following the preprocessing steps described in~\citet{NIPS'10:GLB}, we centered and standardized the 10 non-categorical features and appended a constant covariate. To enhance the diversity of the arm set and strengthen the experimental results, we partitioned the data into $K = 60$ clusters using unsupervised $K$-means clustering, with the cluster centroids serving as the contexts for each arm. For the logistic reward model, we binarized the rewards by assigning a reward of 1 to data points labeled as the second class (“Lodgepole Pine”) and 0 otherwise. The reward for each arm was then computed as the average reward of all data points within its corresponding cluster, yielding reward values ranging from 0.103 to 0.881.

For this task, we set the horizon to $T=1000$ and the confidence parameter to $\delta=0.01$. After analyzing the data, we set $S=6$ and $\kappa = 200$. We evaluated our algorithm against the same baselines used in the logistic bandit simulation experiment, running each method over 10 independent trials and averaging the results to report the regret and the running time. 
The error bars in the figures denote 99\% confidence intervals for both regret and runtime.

Compared to synthetic data, real-world datasets exhibit higher noise and complexity, demanding careful exploration-exploitation trade-offs. Thus, we shrank the estimated confidence set of all the algorithms in a comparable way to achieve a better balance between exploration and exploitation in this real-world dataset.
Traditional GLB algorithms are particularly sensitive to noise, often leading to excessive exploration and higher regret. 

Figure~\ref{fig:covertype_regret_1000} presents the regret progression of different algorithms over time, while Figure~\ref{fig:covertype_time_1000} compares their computational efficiency. Figure~\ref{fig:covertype_time_regret_1000} further illustrates the regret-time trade-off for our method. The results demonstrate that our algorithm achieves significantly faster runtime without compromising robustness or performance, even in noisy environments.

\end{document}